\newtheorem{lemma}{Lemma}
\newtheorem{theorem}{Theorem}
\newtheorem{proposition}{Proposition}
\newtheorem{definition}{Definition}
\newcommand{\mc}[1]{\mathcal{#1}}
\newcommand{\mb}[1]{\mathbb{#1}}
\newcommand{\mr}[1]{\mathrm{#1}}
\newcommand{\x}{\bm{x}}
\newcommand{\z}{\bm{z}}
\newcommand{\w}{\bm{w}}
\newcommand{\X}{\bm{X}}
\newcommand{\B}{\bm{B}}
\title{Generalization Bounds of SGLD for Non-convex Learning:\\
Two Theoretical Viewpoints}
\author[1]{Wenlong Mou\thanks{mouwenlong@pku.edu.cn}}
\author[1]{Liwei Wang\thanks{wanglw@cis.pku.edu.cn}}
\author[2]{Xiyu Zhai\thanks{ayanami0@mail.ustc.edu.cn}}
\author[1]{Kai Zheng\thanks{zhengk92@pku.edu.cn}}
\affil[1]{Key Laboratory of Machine Perception, School of EECS, Peking University}
\affil[2]{School of Mathematics, University of Science and Technology of China}
\begin{document}

\maketitle

\begin{abstract}
	Algorithm-dependent generalization error bounds are central to statistical learning theory. A learning algorithm may use a large hypothesis space, but the limited number of iterations controls its model capacity and generalization error. The impacts of stochastic gradient methods on generalization error for \emph{non-convex} learning problems not only have important theoretical consequences, but are also critical to generalization errors of deep learning.
	
	In this paper, we study the generalization errors of Stochastic Gradient Langevin Dynamics (SGLD) with non-convex objectives. Two theories are proposed with non-asymptotic discrete-time analysis, using Stability and PAC-Bayesian results respectively. The stability-based theory obtains a bound of $O\left(\frac{1}{n}L\sqrt{\beta T_k}\right)$, where $L$ is uniform Lipschitz parameter, $\beta$ is inverse temperature, and $T_k$ is aggregated step sizes. For PAC-Bayesian theory, though the bound has a slower $O(1/\sqrt{n})$ rate, the contribution of each step is shown with an exponentially decaying factor by imposing $\ell^2$ regularization, and the uniform Lipschitz constant is also replaced by actual norms of gradients along trajectory. Our bounds have no implicit dependence on dimensions, norms or other capacity measures of parameter, which elegantly characterizes the phenomenon of "Fast Training Guarantees Generalization" in non-convex settings. This is the first algorithm-dependent result with reasonable dependence on aggregated step sizes for non-convex learning, and has important implications to statistical learning aspects of stochastic gradient methods in complicated models such as deep learning.
\end{abstract}

\section{Introduction}
One of the main goals of modern statistical learning theory is to derive algorithm-dependent and data-dependent generalization bounds for learning algorithms and models. A learning algorithm may use a large hypothesis space, but its randomized way of exploring the space controls actual capacity in a data-dependent manner. As a result, algorithm-dependent bounds usually go beyond classical notions of model capacities, such as VC dimensions and Rademacher complexities. For stochastic gradient methods in particular, the number of iterations and step sizes serve as implicit regularization and restrict the growth of model capacity. Algorithm-dependent generalization bounds have been intensively studied for SGM under convex settings~\citep{hardt2015train,lin2016optimal,lin2016generalization}, but very few is known for the non-convex case. Nevertheless, practitioners believe the latter to hold true in a regime far beyond existing theories. The prevailing success of stochastic gradient methods is also attributed not only to computational speed, but also to its learning-theoretic merits, known as "Train Faster, Generalize Better".

The most important arena for algorithm-dependent bound is perhaps deep learning. It is revealed by experiments that the algorithm-independent model capacities are too large to guarantee meaningful generalization performance~\citep{zhang2016understanding}. With natural images as inputs, they show that a standard neural networks can fit completely noisy labels in the training set. Obviously, such a network has no generalization power at all, and if the capacity of neural network itself was the only thing to control the generalization performance, the DNN models in real-world use would be at the same risk. Fortunately, a key difference between training procedures with random labels and true labels lies in the running time: the random labels will cost SGD algorithm significantly more steps to reach optimal point. Therefore, it is possible that good generalization performance with real labels can be guaranteed by algorithm-dependent bounds for stochastic gradient methods, while the running time for training with random labels becomes too large to yield reasonable bounds. In this sense, classical wisdom of algorithm-dependent generalization bounds could find its place critical to understanding generalization performance of deep learning, and bounds for stochastic gradient methods with non-convex objectives are central to this question.

Therefore, the goal of this paper is to understand the effect of stochastic gradient methods on generalization performance with non-convex risk minimization. We would also like to emphasize that algorithm-dependent bounds for multi-pass non-convex optimization algorithms play a much more non-trivial role than their convex counterparts: single pass of SGD for convex objectives already achieves optimality in stochastic optimization; but in non-convex settings, the computational aspects naturally requires going through training data for much more than one passes. We adopt the standard settings in learning theory, where we perform the (regularized) empirical risk minimization procedure:
\begin{equation}
\mathop{\mathrm{minimize}}_{\bm{w}}\left\{F_n(\bm{w})=\frac{1}{n}\sum_{i=1}^n f_i(\bm{w})+R(\bm{w})\right\}.
\end{equation}
We are interested in the generalization error, which is defined as the gap between empirical and population loss. We consider the error by taking expectation with respect to randomized algorithm.
\begin{equation}
\mathrm{err}(\bm{w})\triangleq\mathbb{E}_{\mathcal{A}}\left(\mathbb{E} \ell(\bm{w};z)-\hat{\mathbb{E}}_n \ell(\bm{w};z)\right)
\end{equation}
The loss function $f_i(\cdot)$ for optimization algorithm may coincide with $\ell(\cdot,z_i)$, or is a surrogate function of $\ell$ (e.g. in classification problems we usually use hinge loss as a surrogate for 0-1 loss)

Instead of working on SGD itself, we consider Stochastic Gradient Langevin Dynamics, a popular variant of stochastic gradient methods which adds isotropic Gaussian noise in each round of gradient updates, i.e.,
\begin{equation}
	\bm{w}_{k+1}=\bm{w}_k-\eta_k\hat{\bm{g}}_{k}(\bm{w})+\sqrt{\frac{2\eta_k}{\beta}} \mathcal{N}(0,I_d)
\end{equation}
We assume that the algorithm is initialized with Gaussian distribution $\bm{w}_0\sim \mathcal{N}(0,\sigma_0^2I_d)$. The stochastic gradients $\hat{\bm{g}}_k$ in each round are unbiased estimates for $\bm{\nabla}F_n(\bm{w})$, which can be decomposed as $\hat{\bm{g}}_k(\bm{w})=\bm{g}_k(\bm{w})+\bm{\nabla}R(\bm{w})$. Popular choices for $\bm{g}_k(\bm{w})$ includes full gradient $\bm{g}_k=\bm{\nabla}F_n$ and one-point stochastic gradient $\bm{g}_k=\bm{\nabla} f_{i_k}$ with $i_k\sim\mathrm{i.i.d.} \mathcal{U}\{1,2,\cdots, n\}$.

To obtain data-dependent and algorithm-dependent bounds, we adopt two theoretical tools: uniform stability~\citep{elisseeff2005stability,rakhlin2005stability} and PAC-Bayesian theory~\citep{mcallester2003pac,germain2016pac}. These two approaches not only make it convenient to analyze generalization properties along optimization trajectory, but also provide different viewpoints towards the effect of SGLD on generalization: stability only depends on relative location between parameter trained with neighboring datasets, and $O(1/n)$ fast rates are usually available; on the other hand, PAC-Bayes bounds can benefit from norm-based regularization, and it is also adaptive to optimization trajectory, instead of taking worst-case upper bounds.

The main contributions of this paper are thus two-fold. The two generalization bounds obtained by two methods reveals different aspects in which SGLD controls model complexity. It is important to note that the bounds have no dependence on dimension of parameter space, nor do they explicitly depend on norm of parameters. By assuming only the Lipschitz assumption on the objective function, the generalization bounds are controlled by aggregated step sizes. The informal versions of our results are stated as follows:
\begin{theorem}[Uniform Stability, Informal]
	Assuming $f_i(\cdot)$ is uniformly $L$-Lipschitz, let $\bm{w}_N$ be result of SGLD at $N$-th round. Under regularity conditions on the tail behavior, the following inequality holds, where the expectation in LHS is taken with respect to random draw of training data.
	\begin{equation}
		\mathbb{E}\left[\mathrm{err}(\bm{w}_N)\right]\leq O\left(\frac{1}{n}\left(k_0+L\sqrt{\beta\sum_{k=k_0+1}^{N}\eta_k}\right)\right)
	\end{equation}
	where $k_0\triangleq\min\{k: \eta_k\beta L^2<1\}$
\end{theorem}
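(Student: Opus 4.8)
The plan is to route the bound through uniform stability, but --- and this is what makes the non-convex case go through --- to carry out the stability estimate at the level of the \emph{distributions} of the iterates rather than along individual trajectories, so that the injected Gaussian noise can actually be exploited. Fix neighbouring training sets $S$ and $S'$ differing only in the $j$-th example, and let $p_N,q_N$ be the laws of $\bm w_N$ produced by SGLD on $S$ and on $S'$. By the equivalence between uniform stability and generalization for randomized algorithms~\citep{elisseeff2005stability}, $\mathbb E_S[\mathrm{err}(\bm w_N)]$ is controlled by $\epsilon_N:=\sup_{S\simeq S',\,z}\mathbb E_{\mathcal A}\,|\ell(\bm w_N(S);z)-\ell(\bm w_N(S');z)|$; since $\ell$ is $L$-Lipschitz I would further reduce to $\epsilon_N\le L\cdot W_1(p_N,q_N)$, so that the whole problem becomes bounding the Wasserstein distance between the two iterate laws.

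The engine for this is the relative-entropy chain rule. SGLD is a Markov chain whose one-step kernel is a Gaussian with covariance $\tfrac{2\eta_k}{\beta}I_d$ and mean $\bm w-\eta_k\hat{\bm g}_k(\bm w)$; the kernels for $S$ and $S'$ therefore share the \emph{same} covariance and differ only in their means, by $\eta_k\|\hat{\bm g}^{S}_k(\bm w)-\hat{\bm g}^{S'}_k(\bm w)\|$. Because the two sets differ in one example, this equals $\tfrac1n\|\nabla f_j(\bm w)-\nabla f_j'(\bm w)\|\le\tfrac{2L}{n}$ for the full-batch update (the stochastic case being handled analogously), so the per-round relative entropy between the two kernels is at most $\frac{(2\eta_k L/n)^2}{4\eta_k/\beta}=\frac{\beta\eta_k L^2}{n^2}$. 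The chain rule for relative entropy together with the data-processing inequality then give
\[
\mathrm{KL}(p_N\,\|\,q_N)\ \le\ \frac{\beta L^2}{n^2}\sum_k\eta_k .
\]
A transportation-cost inequality for the iterate law --- this is exactly the role of the ``regularity conditions on the tail behaviour'', and it holds with a constant independent of the dimension under a dissipativity condition on $F_n$ --- then turns this into $W_1(p_N,q_N)=O\!\big(\tfrac{L}{n}\sqrt{\beta\sum_k\eta_k}\big)$ and hence $\epsilon_N=O\!\big(\tfrac{L}{n}\sqrt{\beta\sum_k\eta_k}\big)$. The estimate is automatically free of dimension and of parameter norms because the two Gaussian covariances coincide and cancel.

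It remains to treat the first $k_0$ rounds, where $\eta_k\beta L^2\ge1$ and the step is large relative to the injected noise: there the above estimate is not tight and the transportation step is not safely available, so I would instead bound each such round's contribution to the total variation directly and crudely by $O(1/n)$, for a total of $O(k_0/n)$. Splitting the chain at $k_0$ and combining via the triangle inequality yields $\epsilon_N=O\!\big(\tfrac1n(k_0+L\sqrt{\beta\sum_{k>k_0}\eta_k})\big)$, which is the claim.

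The real obstacle is the non-convexity. The textbook stability argument couples the two runs with shared noise and tracks $\|\bm w_k(S)-\bm w_k(S')\|$, but without convexity the gradient map is only expansive, the coupled distance can blow up like $e^{(\text{smoothness})\sum_k\eta_k}$, and the noise plays no role under a synchronous coupling. Passing to distributions is what rescues the argument: a perturbation of the drift then costs only a relative entropy proportional to $\beta\eta_k$ times the squared perturbation, and these costs merely \emph{add} along the trajectory --- which is what produces the benign $\sqrt{\sum_k\eta_k}$ dependence instead of an exponential one. The threshold $k_0$ is precisely where the injected noise overtakes the per-round gradient displacement and this mechanism becomes effective. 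The remaining two points are more routine: the crude handling of the burn-in rounds, and --- the subtler of the two --- pushing the $\mathrm{KL}$/$W_1$ estimate through to the \emph{unbounded} Lipschitz loss without reintroducing any dimension dependence, which is exactly where the tail hypotheses get spent.
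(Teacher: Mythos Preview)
Your high-level strategy --- work at the level of the iterate \emph{laws} and track an $f$-divergence so that the injected noise genuinely helps --- is exactly the paper's idea. There are, however, two substantive divergences from the paper, one of which is a real gap.

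\textbf{The SGLD case is not handled ``analogously''.} Your KL chain-rule computation is correct for the \emph{full-batch} (LMC) update, because there the two transition means differ by at most $\tfrac{2L}{n}\eta_k$ at every step, giving per-step KL $\lesssim \tfrac{\beta\eta_k L^2}{n^2}$. For one-sample SGLD the mean shift is $0$ when $i_k\ne j$ and $O(\eta_kL)$ when $i_k=j$; averaging over $i_k$ the chain rule yields per-step KL $\lesssim \tfrac{\beta\eta_k L^2}{n}$, hence $\mathrm{KL}(p_N\Vert q_N)\lesssim \tfrac{\beta L^2}{n}\sum_k\eta_k$ and only the $O(1/\sqrt n)$ rate --- this is precisely the paper's ``succinct analysis'' (its Theorem~6), not the bound you are after. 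Recovering the $1/n^2$ scaling for SGLD is the nontrivial part of the paper (Lemma~2): it requires a specially built interpolation $\bm\theta_t$ in which the $i_k=j$ drift is spread over $[0,\eta_k]$ \emph{together with} the Gaussian noise from all rounds, and then an $L^4$-type estimate $\int u_t^4/v_t^3\le 8$ (their Lemma~4) to show that the effective drift difference, measured in the $\sqrt{\pi\pi'}$-weighted $L^2$ norm, is $O(L/n)$ rather than $O(L)$. The KL chain rule cannot see this cancellation because it bounds the path-space divergence, which really is $\Theta(1/n)$; one must work with the \emph{marginal} divergence and exploit the smoothing from the $i_k\ne j$ rounds.

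\textbf{The transportation step is a different (stronger) hypothesis.} The paper never passes through $W_1$. It assumes the loss $\ell$ is \emph{bounded} by $C$ (Theorem~8) and converts squared Hellinger distance directly into stability via Cauchy--Schwarz, $\epsilon_N\le 2C\sqrt{D_H(p_N\Vert p_N')}$; its ``regularity conditions on the tail'' are only what is needed to justify integration by parts in the Fokker--Planck calculation. Your route needs a $T_1$ (or $T_2$) inequality for the iterate law with a dimension-free constant, which in a genuinely non-convex landscape is a strong assumption not implied by the paper's hypotheses. If you replaced the $W_1$ step by Pinsker and bounded loss you would recover the paper's flavour --- but then you are back to Hellinger/TV, and the SGLD gap above remains.

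The burn-in/splitting at $k_0$ via a crude $O(1/n)$ TV bound per round, combined by a triangle-type argument, is fine and matches the paper's Lemma~1 and Theorem~7.
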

\begin{theorem}[PAC-Bayesian Theory, Informal]
	For regularized ERM problem with $R(\bm{w})=\frac{\lambda}{2}\Vert \bm{w}\Vert^2$, let $\bm{w}_N$ be result of SGLD at $N$-th round. Under regularity conditions on the tail behavior and appropriate initialization, the following inequality holds with high probability:
	\begin{equation}
		\mathrm{err}(\bm{w}_N)\leq O\left(\sqrt{\frac{\beta}{n}\sum_{k=1}^{N} \eta_k e^{-\frac{\lambda}{2} (T_N-T_k)} \mathbb{E} \Vert \bm{g}_k\Vert^2}\right)
	\end{equation}
	where $T_k=\sum_{j=1}^k \eta_j$
\end{theorem}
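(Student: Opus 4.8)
The plan is to run the PAC-Bayesian machinery with the SGLD output law as the posterior and a data-independent Ornstein--Uhlenbeck reference as the prior, and then to control the relative entropy between them iteration by iteration, exploiting the contraction induced by the $\ell^2$ regularizer to recover the exponentially decaying weights. Concretely, let $Q$ be the conditional law of $\bm w_N$ given the sample, and let $P$ be the law of the $N$-th iterate of the reference recursion obtained from the SGLD update by deleting the stochastic gradient, $\bm w'_{k+1}=(1-\eta_k\lambda)\bm w'_k+\sqrt{2\eta_k/\beta}\,\mathcal N(0,I_d)$, with the same Gaussian start. Since $P$ does not depend on the sample it is an admissible prior, and being an affine image of a Gaussian it is itself an explicit isotropic Gaussian; I pick $\sigma_0^2$ so that the initialization matches the (near-)stationary law $\mathcal N\!\big(0,\tfrac{1}{\lambda\beta}I_d\big)$ of this recursion, which is what ``appropriate initialization'' refers to and which kills the transient term.

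Plugging $(Q,P)$ into a standard PAC-Bayes inequality (a McAllester/Catoni-type bound, in the variant of \citet{germain2016pac} that accommodates sub-Gaussian rather than bounded losses) gives, with probability at least $1-\delta$,
\begin{equation*}
\mathrm{err}(\bm w_N)\;\lesssim\;\sqrt{\frac{\mathrm{KL}(Q\Vert P)+\log(n/\delta)}{n}}.
\end{equation*}
The ``regularity conditions on the tail behavior'' enter here: since the parameter space is unbounded I first show, using the contractive drift $-\lambda\bm w$ together with the Lipschitz control on $\bm g_k$, that $\bm w_N$ has sub-Gaussian tails, and then $L$-Lipschitzness of $\ell$ transfers this to the loss so that the bound applies (alternatively one truncates to a high-probability ball).

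The crux is to show $\mathrm{KL}(Q\Vert P)=O\!\big(\beta\sum_{k}\eta_k e^{-\frac\lambda2(T_N-T_k)}\mathbb E\Vert\bm g_k\Vert^2\big)$. The naive route --- the chain rule for relative entropy along the whole trajectory --- charges step $k$ its full one-step cost, which by the Gaussian KL formula equals $\mathbb E\,\mathrm{KL}\!\big(\mathcal N(\cdot,\tfrac{2\eta_k}{\beta}I_d)\big\Vert\mathcal N(\cdot,\tfrac{2\eta_k}{\beta}I_d)\big)=\tfrac{\beta\eta_k}{4}\mathbb E\Vert\bm g_k(\bm w_k)\Vert^2$, but with \emph{no} decay; this already yields a weaker $O(1/\sqrt n)$ bound without the exponential factor. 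To recover the decay I propagate a single functional --- either $\mathrm{KL}(\mathrm{Law}(\bm w_m)\Vert\mathrm{Law}(\bm w'_m))$ or the Wasserstein distance $W_2(\mathrm{Law}(\bm w_m),\mathrm{Law}(\bm w'_m))$ --- step by step: at step $k$, applying the common Gaussian transition kernel $\bm w\mapsto(1-\eta_k\lambda)\bm w+\mathrm{noise}$ contracts the discrepancy by $(1-\eta_k\lambda)\le e^{-\eta_k\lambda}$ (via the strong data-processing inequality of the corresponding Gaussian channel, or synchronous coupling for $W_2$), while the extra drift $-\eta_k\bm g_k(\bm w_k)$ adds only $O(\beta\eta_k\mathbb E\Vert\bm g_k\Vert^2)$ in entropy (resp.\ $O(\eta_k(\mathbb E\Vert\bm g_k\Vert^2)^{1/2})$ in $W_2$). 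Unrolling, the contribution injected at step $k$ reaches the $N$-th iterate multiplied by $\prod_{j>k}(1-\eta_j\lambda)^{c}\le e^{-c\lambda(T_N-T_k)}$; the constant $c=\tfrac12$ is what remains after converting whatever intermediate metric is used back to relative entropy (through a transport--entropy comparison against the log-concave prior) and absorbing the cross terms generated by expanding the square. The data-dependence of $\bm g_k(\bm w_k)$, which is measurable with respect to the past noise, is handled throughout by the joint convexity of relative entropy, writing each one-step marginal as a mixture over the current position.

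I expect the main obstacle to be exactly this last step: relative entropy has no triangle inequality, so the per-step increments cannot simply be telescoped against the per-step contraction. One must commit to a Lyapunov-type functional, push it through the $N$ Gaussian channels using their contraction coefficients, and pay the lossy conversion back to KL only at the end; obtaining the weights in the $\ell^2$-aggregated form $e^{-\frac\lambda2(T_N-T_k)}\eta_k\mathbb E\Vert\bm g_k\Vert^2$ claimed, rather than a cruder $\ell^1$-aggregated form, is precisely what forces this careful accounting. A secondary technicality is the tail control needed to invoke PAC-Bayes over an unbounded parameter space, which again rests on the regularization-induced contraction.
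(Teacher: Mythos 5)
Your overall architecture matches the paper's: a data-independent Gaussian prior that evolves by the gradient-free (pure Ornstein--Uhlenbeck) version of the SGLD recursion, the sub-Gaussian PAC-Bayes bound of \citet{germain2016pac} (sharpened in the paper's Theorem 4), and a per-step recursion of the form ``contract the KL by a factor tied to $\lambda$, then pay an additive cost proportional to $\beta\eta_k\mathbb{E}\Vert\bm{g}_k(\bm{w}_k)\Vert^2$.'' The gap is that you never actually establish that recursion, and the two mechanisms you name for the contraction would fail as stated. First, there is no strong data-processing inequality giving a uniform factor $(1-\eta_k\lambda)$ for the KL divergence between two \emph{arbitrary} laws pushed through the ``scale by $(1-\eta_k\lambda)$ and add Gaussian noise'' kernel: the deterministic scaling is an invertible, divergence-preserving map, so the kernel's KL contraction coefficient is that of the unconstrained additive-Gaussian channel, which is $1$ (take widely spread, far-separated inputs). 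What is true --- and what the paper uses --- is exponential decay of relative entropy \emph{with respect to the kernel's own Gaussian invariant-type measure}, obtained from the logarithmic Sobolev inequality: this is why it matters that the second argument of the KL is the Gaussian prior itself, and why the paper lets the prior variance $\sigma_k^2$ evolve with $k$ (and works with transformed parameters $\tau_k,\beta_k'$) rather than fixing a single stationary $\mathcal{N}(0,\tfrac{1}{\lambda\beta}I_d)$, which is not exactly invariant for the discrete step and not compatible with time-varying $\eta_k$. Second, the $W_2$ fallback has a directional problem: synchronous coupling does give the $(1-\eta_k\lambda)$ contraction in $W_2$, but transport--entropy (Talagrand-type) inequalities bound $W_2$ by KL, not KL by $W_2$, so the ``lossy conversion back to relative entropy at the end'' is not available by the route you cite.

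The remaining difficulty you correctly flag --- that the chain rule gives the additive $\tfrac{\beta\eta_k}{4}\mathbb{E}\Vert\bm{g}_k\Vert^2$ cost but no decay, while the decay mechanism does not obviously coexist with the drift in a single discrete step --- is precisely what the paper resolves, and your proposal stops at acknowledging it. The paper's device is to interpolate each discrete update into an explicit Ornstein--Uhlenbeck-type SDE whose terminal marginal is exactly the SGLD iterate (Theorem 7, using a mimicking/conditional-expectation drift $\bm{h}_t(\bm{w})=\mathbb{E}[\bm{g}_k(\bm{\theta}_0)\mid\bm{\theta}_t=\bm{w}]$), to evolve the Gaussian prior by the matching drift-plus-diffusion PDE, and then to differentiate $D_{KL}(\pi_t\Vert\tilde{\gamma}_t)$ in $t$: Cauchy--Schwarz splits off the relative Fisher information, the Gaussian log-Sobolev inequality converts it into $-\tfrac{1}{2b_k}D_{KL}$, and a Jensen/Cauchy--Schwarz step replaces $\int\pi_t\Vert\bm{h}_t\Vert^2$ by $\mathbb{E}\Vert\bm{g}_k(\bm{w}_k)\Vert^2$; integrating over $[0,\tau_k]$ yields exactly the one-step inequality $D_{KL}(p_{k+1}\Vert\gamma_{k+1})\le e^{-\tau_k/(2b_k)}D_{KL}(p_k\Vert\gamma_k)+\tfrac{\beta_k'\tau_k}{2}\mathbb{E}\Vert\bm{g}_k(\bm{w}_k)\Vert^2$, which unrolls into the $e^{-\frac{\lambda}{2}(T_N-T_k)}$-weighted sum when $0<\lambda\le\tfrac{1}{\beta\sigma_0^2}$. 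Without this (or an equivalent entropy-decay argument tied to the Gaussian prior), your accounting of ``contraction times additive increment'' remains a plan rather than a proof; with it, your prior choice and PAC-Bayes step would go through essentially as in the paper.
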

The stability-based bounds exhibit a faster $O(1/n)$ rate of convergence, with complexity factor mainly depends on square root of aggregated step sizes. The PAC-Bayes bounds, though having a slower $O(1/\sqrt{n})$ rate, can make impact of step sizes in earlier iterations decay with time. The uniform Lipschitz parameter is also replaced with norm of actual gradients along optimization path. Both results greatly advance algorithm-dependent generalization bounds for non-convex stochastic gradient methods in existing literature~\citep{raginsky2017non,hardt2015train}. With the help of Gaussian noise, they even outperforms previous results in the convex case assuming constant $\beta$: the former bound allows us to perform $o(n^{\frac{2}{1-\alpha}})$ gradient updates for step sizes $\eta_k=ck^{-\alpha}$. In the second bound, generalization error is controlled not only by what step sizes parameter we take, but also how large the actual steps are. In most optimization problems including deep learning, the norm of gradient diminishes along trajectory, as the iteration approaches a stationary point, even if uniform Lipschitz constants are very large. This phenomenon pushes above PAC-Bayesian bounds into a favorable situation, where the earlier large gradient steps are greatly abated by the exponentially decaying factor, while the gradients taken in latter stage are inherently small.

\subsection{Related Work}
The effect of stochastic gradient methods on statistical learning has attracted lots of interests in existing literature: For linear regression in Hilbert spaces,~\citet{lin2016optimal,lin2016generalization} analyze multi-pass stochastic gradient methods, leading to optimal population risks; More general cases are studied via uniform stability of parameters under $\ell_2$ norm~\citep{hardt2015train,london2016generalization}; From statistical inference aspects,~\citet{chen2016statistical} constructed confidence sets based on the Markov chain induced by SGD for strongly-convex objective functions. Most of them requires objective function to be convex. While~\citet{hardt2015train} considered non-convex smooth objective functions, their results require $O(1/k)$ fast decay of step sizes, and the bound has exponential dependence on smoothness parameter. With the presence of Gaussian noise, our bounds for non-convex objectives become even better than their results in convex case.

Deliberate injection of Gaussian noise has become a rising star in the literature of non-convex optimization.~\citet{ge2015escaping,jin2017escape} show that Gaussian noise helps SGD escape 2nd order saddle points efficiently. Stochastic Gradient Langevin Dynamics, proposed as discrete version of Langevin Equation $dw_t=-\bm{\nabla} F(\bm{w}_t)dt+\sqrt{\frac{2}{\beta}} d\bm{B}_t$, also plays an important role in optimization and sampling. It is well-known that Langevin Equation asymptotically converges to equilibrium distribution $p(\bm{w})\propto e^{-\beta F(\bm{w})}$, see e.g.~\citep{markowich2000trend}. This property has been utilized for posterior sampling, known as Langevin Monte Carlo. The discretization error and mixing time are intensively studied by~\citet{bubeck2015sampling,nagapetyan2017true}, for log-concave distributions.~\citet{dalalyan2012sparse} also used Langevin MC to approximate Exponential Weighted Aggregate, and proved PAC-Bayesian bounds for regression learning with sparsity prior. For non-convex learning and optimization,~\cite{raginsky2017non} makes the first attempt towards excess risks by non-convex SGLD, combining algorithmic convergence and generalization error. But their results are based on convergence to equilibrium, which relies upon constants in Poincar\'e Inequality, leading to inevitably exponential dependence on dimension. Though the mixing time can be prohibitive in non-convex case,~\citet{zhang2017hitting} recently show that hitting time of SGLD for small-loss region can be much better, and the Gaussian noise in SGLD helps to escape shallow local minima. Their results also emphasize the importance of generalization guarantees for discrete-time non-asymptotic SGLD in non-convex settings.

Besides, several recent works also studied the connection between SGD and stochastic differential equations, such as SME~\citep{li2015dynamics,li2017batch}. Though our results for SGLD cannot directly extend to their SDEs with data-dependent diffusion term, our methods are potentially applicable for generalization error bounds in their settings.

\subsection{Why Gaussian Noise is Useful for Generalization?}
Previous analyses of the Gaussian noise in stochastic gradient methods mainly focus on its benefit for optimization aspect. The question naturally comes whether it also helps generalization a lot. Before going into our theoretical results, we will first illustrate why prior analyses on stability can be very large for non-convex objective function, and how this can be overcome by adding Gaussian noise. This important observation motivates our analysis based on KL-Divergence and Hellinger distances, which highlights the effect of smooth distributions on generalization error bounds.

Stability-based analysis for gradient algorithms on non-convex losses will suffer from a "fence-sitting" situation, as illustrated in Figure~\ref{illustration}. Consider a non-convex empirical loss surface with two local minima, which is divided into two regions by a ridge. If $\bm{w}_k$ lies on one side of this ridge, a noiseless first-order method will lead to the local minimum on this side. However, if $\bm{w}_k$ comes close to the ridge in its trajectory, small shift on the loss surface caused by changing one point will lead it to a completely different local minimum, as we can see from the figure.
\begin{figure}[htb]
\begin{minipage}[t]{0.5\linewidth}
\centering
\includegraphics[width=0.8\linewidth]{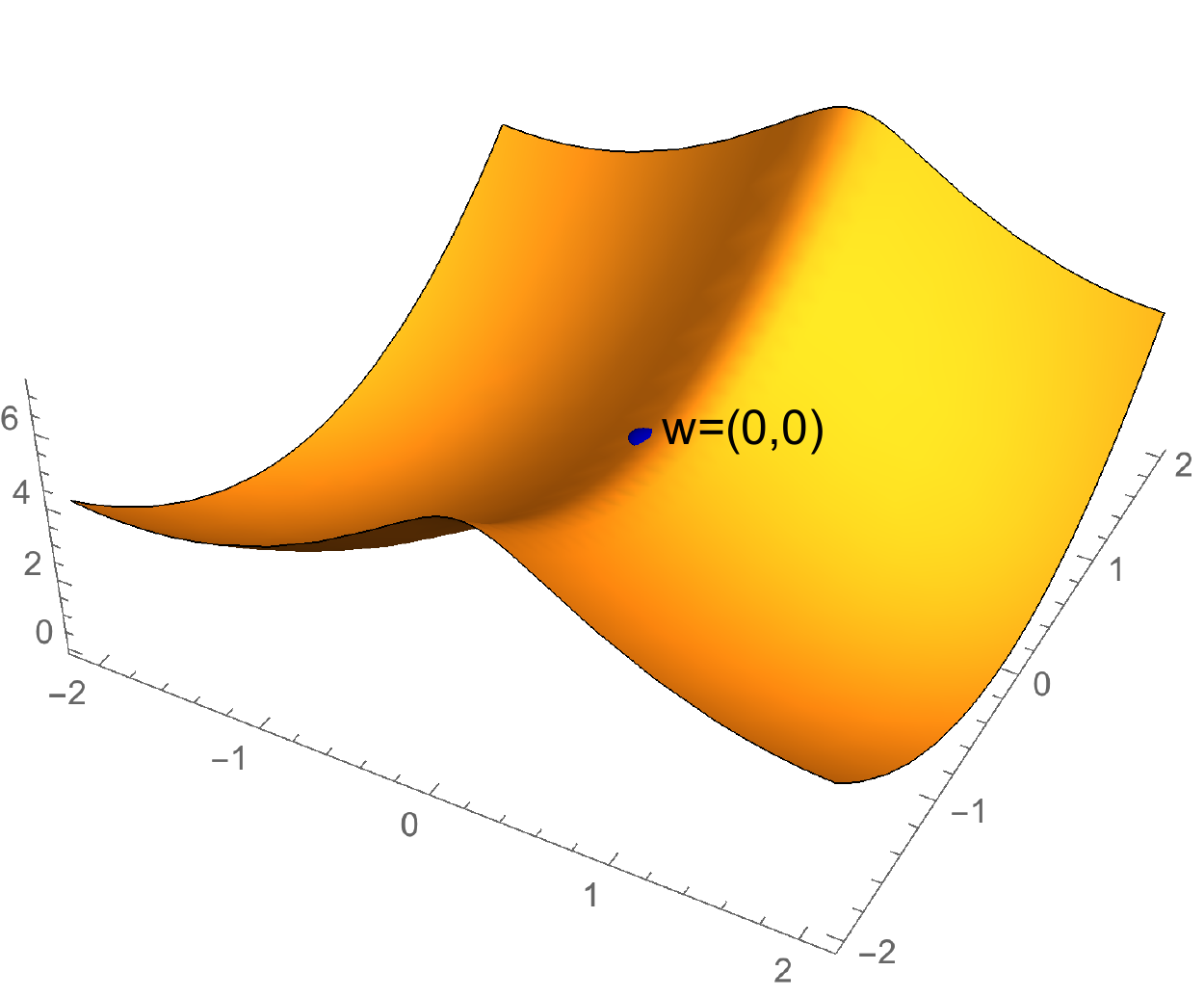}
\end{minipage}
\begin{minipage}[t]{0.5\linewidth}
\centering
\includegraphics[width=0.8\linewidth]{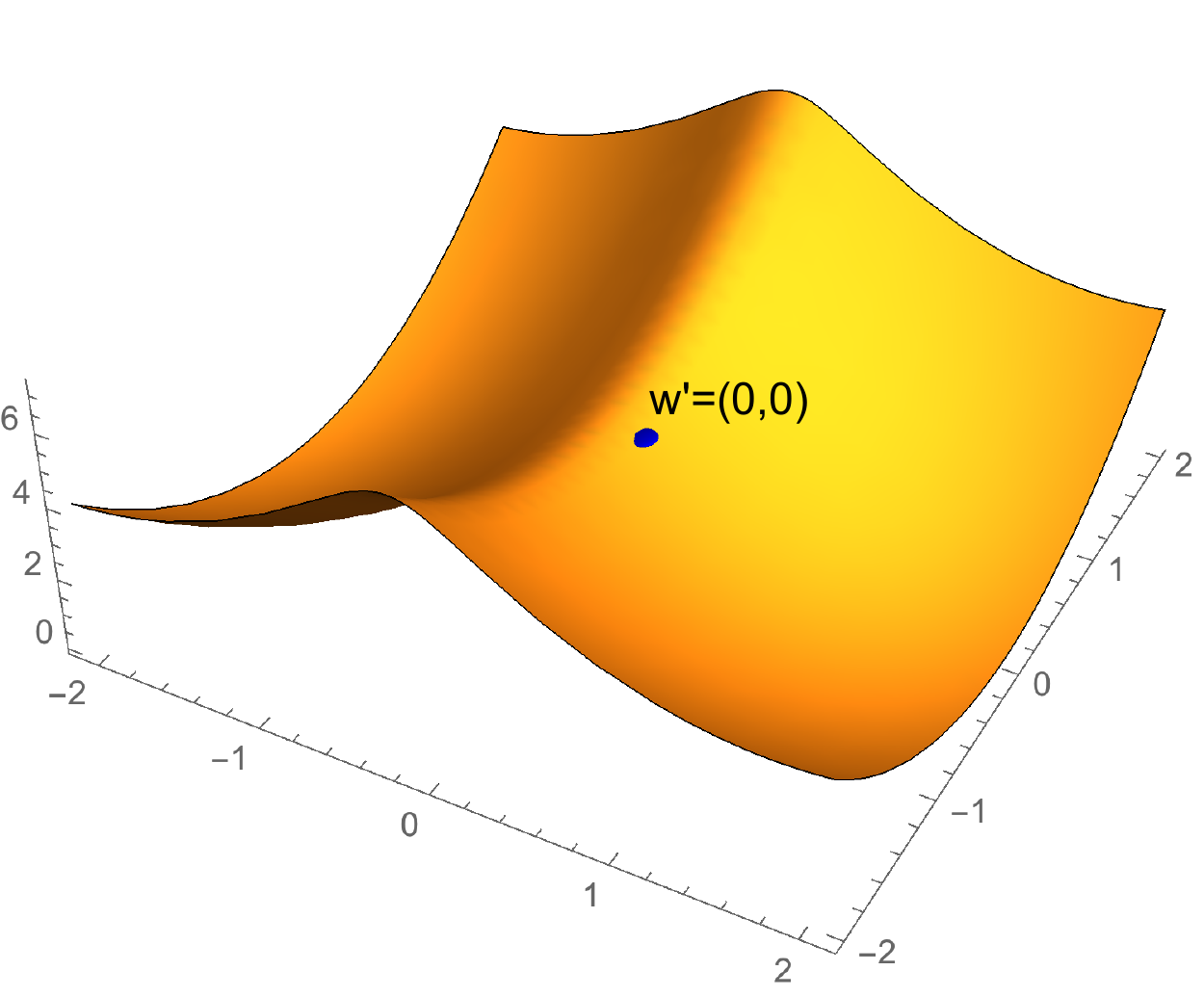}
\end{minipage}
\caption{Illustration of "Fence-Sitting" Situation for Stability of Non-convex Optimization}\label{illustration}
\end{figure}

To guarantee stability, we need $\bm{w}_k$ to randomly decide which side to go when it comes close to the ridge. The noise needs to be isotropic and smooth enough in order to cross this ridge, as the direction of variation can be quite arbitrary. SGLD successfully tackles the fence-sitting problem by smoothing the probability of going either side, and adding noise to subsequent steps to avoid unstable shallow local minima. The bounds for SGD in~\citet{hardt2015train} also exploits randomness of choosing $i_k$, but the noise is not smooth enough. So their bound requires the subsequent steps to be very small, to keep $w_k$ not far from the ridge.

\section{Preliminaries}
\textbf{Notation:} Suppose each data $z_i \in \mc{Z} (\forall i \in \{1,2,\dots, n\})$. A pair of neighboring datasets $S, S' \in \mathcal{Z}^n$ means that $S$ and $S'$ differ on exactly one data point. For a continuous time SDE over $S$, the iteration point at time $t$ is denoted as $\w_t$, and corresponding density function is denoted as $\pi_t(\w)$. For discrete time SGLD run over $S$, the iteration point and its density function at round $k$ are written as $\w_k, p_k(\w)$ respectively. All above notations are also suitable for $S'$ with an additional prime. When analyzing their derivatives, we sometimes omit the subscript $t$ for $\pi_t,\pi_t'$ without confusion. $\eta_k$ is the step size of discrete SGLD at iteration $k$, and $T_k \triangleq \sum_{j=1}^k \eta_j$. Let $\bm{g}_k(\cdot)$ be the stochastic gradient operator at round $k$ without regularization, and let $\hat{\bm{g}}_k(\bm{w})=\bm{g}_k(\bm{w})+\bm{\nabla}R(\bm{w})$ be the actual stochastic gradient. Without extra explanations, $L$ represents the Lipschitz constant of the objective function $f(\cdot;\z)$ for any $\z$. $D_H(p||q)$ represents the squared Hellinger distance between density function $p$ and $q$. 

\begin{equation}
	D_H(p||q)\triangleq \frac{1}{2}\int_{\mathbb{R}^d} \left(\sqrt{p}-\sqrt{q}\right)^2dw
\end{equation}

Now we define an important concept which will be frequently used later:
\begin{definition}[non-expansive]
 Suppose $\w$ and $\w'$ are two random points in $\mb{R}^d$, and their distributions are denoted as $\mc{P}(\w), \mc{P}(\w')$. We say a bivariate functional $D(\cdot || \cdot)$ defined on two density functions, is non-expansive, if for any mapping $\psi: \mb{R}^d \rightarrow \mb{R}^d$, there is
 \begin{equation}
   D(\mc{P}(\psi(\w)) || \mc{P}(\psi(\w')) \leqslant D(\mc{P}(\w) || \mc{P}(\w'))
  \end{equation} 
\end{definition}
It is well known that all $f$-divergence (including KL divergence and squared Hellinger distance) are non-expansive and jointly convex~\citep{csiszar2004information}.

\subsection{Stability and generalization}
Stability of the algorithm has a close relation with its generalization performance, and this line of research dates back to \cite{bousquet2002stability}. Intuitively, the more stable an algorithm is, the better its generalization performance will be. Here, we adopt the notion of uniform stability of a randomized algorithm \cite{elisseeff2005stability,hardt2015train}.

\begin{definition}[Uniform Stability]
  We say that a randomized algorithm $A$ is $\epsilon_n$-uniformly stable with respect to the loss $\ell$, if for all neighboring datasets $S, S' \in \mathcal{Z}^n$, there is 
  \begin{equation*}
   \sup_{\bm{z}} |\mb{E}[\ell(\bm{w}_S; z)] - \mb{E}[\ell(\bm{w}_{S'}; z)]| \leqslant \epsilon
  \end{equation*}
  where the expectation is over randomness of the algorithm, and $\w_S, \w_{S'}$ are outputs of $A$ on $S$ and $S'$ respectively. 
\end{definition}  

Once a randomized algorithm is uniformly stable, it is straightforward to see its generalization performance in expectation, using standard symmetrization argument~\citep{hardt2015train}.
\begin{theorem}[Generalization in expectation]
 Suppose a randomized algorithm $A$ is $\epsilon$-uniformly stable, then there is 
 \begin{equation*}
  |\mb{E}[\mr{err}(\w_S)]| \leqslant \epsilon
 \end{equation*}
\end{theorem}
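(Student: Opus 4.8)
The plan is to establish the theorem by the classical symmetrization (``ghost sample'') argument that connects uniform stability to generalization in expectation, in the style of \citet{bousquet2002stability,hardt2015train}. There is no deep technical content here; the whole point is to carefully track the several independent sources of randomness and to apply the uniform-stability bound pointwise.

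First I would unfold the definition of the generalization error. Write $S=(z_1,\dots,z_n)$ for the training set, drawn i.i.d.\ from the data distribution, and let $z$ denote a fresh test point with the same law, independent of $S$ and of the algorithm's internal randomness. Then
\[
\mathbb{E}_S\big[\mr{err}(\w_S)\big]
= \mathbb{E}_S\,\mathbb{E}_{\mathcal{A}}\Big[\mathbb{E}_z\,\ell(\w_S;z)-\tfrac{1}{n}\textstyle\sum_{i=1}^n \ell(\w_S;z_i)\Big].
\]
Next I introduce a ghost sample $S'=(z_1',\dots,z_n')$, an independent copy of $S$, and for each $i$ let $S^{(i)}$ be $S$ with its $i$-th coordinate replaced by $z_i'$. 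Two renaming identities dispose of the two terms. For the population term, since $z$ and $z_i'$ have the same law and are both independent of $S$, we get $\mathbb{E}_S\mathbb{E}_{\mathcal{A}}[\mathbb{E}_z\ell(\w_S;z)]=\mathbb{E}_{S,S'}\mathbb{E}_{\mathcal{A}}\big[\tfrac1n\sum_i\ell(\w_S;z_i')\big]$. For the empirical term, relabelling the $i$-th data point as the ghost point $z_i'$ inside the $i$-th summand (legitimate because $z_i$ and $z_i'$ are i.i.d.\ and, after the relabelling, the $i$-th training slot and the evaluation point are both $z_i'$) yields $\mathbb{E}_S\mathbb{E}_{\mathcal{A}}\big[\tfrac1n\sum_i\ell(\w_S;z_i)\big]=\mathbb{E}_{S,S'}\mathbb{E}_{\mathcal{A}}\big[\tfrac1n\sum_i\ell(\w_{S^{(i)}};z_i')\big]$. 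Subtracting,
\[
\mathbb{E}_S\big[\mr{err}(\w_S)\big]
=\mathbb{E}_{S,S'}\Big[\tfrac1n\textstyle\sum_{i=1}^n\big(\mathbb{E}_{\mathcal{A}}\ell(\w_S;z_i')-\mathbb{E}_{\mathcal{A}}\ell(\w_{S^{(i)}};z_i')\big)\Big],
\]
so that the two losses in each summand are now evaluated at the \emph{same} point $z_i'$, while the datasets $S$ and $S^{(i)}$ differ in exactly one coordinate.

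Finally I would invoke the uniform-stability hypothesis. For each $i$, the pair $(S,S^{(i)})$ is a pair of neighboring datasets, so conditionally on $(S,S')$ the $i$-th summand has absolute value at most $\sup_z|\mathbb{E}_{\mathcal{A}}\ell(\w_S;z)-\mathbb{E}_{\mathcal{A}}\ell(\w_{S^{(i)}};z)|\le\epsilon$. Pulling the absolute value outside via the triangle inequality and Jensen's inequality, each of the $n$ terms contributes at most $\epsilon$, hence the average is at most $\epsilon$, which is exactly $|\mathbb{E}[\mr{err}(\w_S)]|\le\epsilon$. The only point demanding care — and the closest thing to an ``obstacle'' — is to make sure each renaming identity is applied only to quantities that are genuinely independent of the coordinate being swapped, and to note that $\w_S$ and $\w_{S^{(i)}}$ need not share internal randomness, since uniform stability already controls the difference of their expectations over $\mathcal{A}$.
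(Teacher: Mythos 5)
Your proposal is correct and is precisely the standard symmetrization (ghost-sample) argument that the paper itself invokes for this theorem by citing \citet{hardt2015train} rather than writing out a proof. The renaming identities and the pointwise application of the uniform-stability bound to the neighboring pair $(S,S^{(i)})$ are exactly the intended steps, so there is nothing to add or correct.
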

High-probability bounds with an additional $O(\sqrt{\frac{\log 1/\delta}{n}})$ term are also available by assuming uniformly bounded loss~\citep{elisseeff2005stability}. In this paper, we always take expectation with respect to randomized learning algorithm when discussing generalization bounds. Under suitable assumptions, it is straightforward to extend our results to high-probability guarantees with respect to random draw of training data, using McDiarmid Inequality. For simplicity of presentation, we restrict our attention to $\epsilon$ itself and expected generalization bounds.

\subsection{PAC-Bayesian theory}
Different with uniform stability theory above, which needs to consider the worst case in some sense, the generalization bound implied by PAC-Bayesian theory is completely algorithmic and data dependent. However, most of generalization bounds in PAC-Bayesian form require the loss function to be bounded \citep{mcallester1999pac,mcallester2003pac}, which is usually not satisfied in reality, such as cross entropy loss or hinge loss. \cite{germain2016pac} extended previous results to sub-Gaussian losses, but their result introduced an extra additive error term $\frac{1}{2}s^2$, where $s^2$ is the Sub-Gaussian variance factor. To get rid of this additive term and facilitate our later analysis, we first improve the PAC-Bayesian result in \cite{germain2016pac} as follows: 

\begin{theorem}
For loss function class $\{f(w;x)\}$ and data distribution $x\sim \mathcal{D}$. Given any prior distribution $\mathcal{P}$ over $\Omega$. If loss class is $s$-subGaussian with respect to $\mathcal{D}\times \mathcal{P}$, i.e.,
\begin{equation}
\mathbb{E} e^{\lambda f(w;x)}\leq e^{\frac{1}{2}\lambda^2s^2},\quad \forall \lambda>0,
\end{equation}
Let $\Xi$ be a class of posterior distributions over $\Omega$, with $\sup_{\mathcal{Q}\in\Xi} D_{KL}(Q||P)\leq M$, we have the following inequality holds uniformly for all posterior distributions $\mathcal{Q}\in \Xi$, with probability $1-\delta$:
\begin{equation}
 \mathbb{E}_{\mathcal{D}}\mathbb{E}_{\mathcal{Q}} f(w;x)\leq \hat{\mathbb{E}}_n \mathbb{E}_{\mathcal{Q}}f(w;x) +O\left(s\sqrt{\frac{D_{KL}(\mathcal{Q}||\mathcal{P})\vee 1+\log\frac{1}{\delta}+\log\log M}{n}}\right)
\end{equation}
\end{theorem}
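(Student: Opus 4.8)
The plan is to adapt the standard PAC-Bayesian argument of McAllester, but replace the Hoeffding-style bounded-loss step with a sub-Gaussian moment control, and then handle the uniformity over $\Xi$ together with the data-dependence of $D_{KL}(\mathcal{Q}\Vert\mathcal{P})$ by a careful union bound / stratification. First I would fix a prior $\mathcal{P}$ and, for a fixed $\lambda>0$, consider the exponential moment
\begin{equation}
\mathbb{E}_{x^n}\mathbb{E}_{w\sim\mathcal{P}}\exp\left(\lambda\sum_{i=1}^n\bigl(\mathbb{E}_{\mathcal{D}}f(w;x)-f(w;x_i)\bigr)\right).
\end{equation}
By Fubini and independence of the samples, this factorizes over $i$, and the $s$-sub-Gaussian hypothesis (applied to $-f$, or symmetrically) gives a bound of the form $e^{n\lambda^2 s^2/2}$ times a constant, after centering. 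Then the Donsker--Varadhan / Gibbs variational representation of KL divergence converts the expectation over the prior into a supremum over posteriors: for any $\mathcal{Q}$,
\begin{equation}
\lambda\sum_{i=1}^n\bigl(\mathbb{E}_{\mathcal{D}}\mathbb{E}_{\mathcal{Q}}f-\hat{\mathbb{E}}_n\mathbb{E}_{\mathcal{Q}}f\bigr)\le D_{KL}(\mathcal{Q}\Vert\mathcal{P})+\log\mathbb{E}_{w\sim\mathcal{P}}\exp\left(\lambda\sum_i(\mathbb{E}_{\mathcal{D}}f-f(w;x_i))\right),
\end{equation}
and Markov's inequality applied to the (data-dependent) exponential moment makes the last log term $O(n\lambda^2 s^2 + \log\frac{1}{\delta})$ with probability $1-\delta$. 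Optimizing $\lambda\asymp\sqrt{(D_{KL}+\log\frac1\delta)/(ns^2)}$ would already give $s\sqrt{(D_{KL}+\log\frac1\delta)/n}$ — but only for a \emph{fixed} $\lambda$, whereas the optimal $\lambda$ depends on the random quantity $D_{KL}(\mathcal{Q}\Vert\mathcal{P})$.

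The main obstacle, and the reason the bound carries the $D_{KL}\vee 1$ and the extra $\log\log M$ term, is precisely this: we cannot plug in a data- and posterior-dependent $\lambda$ into a high-probability statement proved for fixed $\lambda$. The fix I would use is a peeling/stratification argument over the range of possible KL values. Since $D_{KL}(\mathcal{Q}\Vert\mathcal{P})\in[0,M]$ for all $\mathcal{Q}\in\Xi$, I would cover $[1,M]$ by a geometric grid $1,2,4,\dots,2^{\lceil\log_2 M\rceil}$ of $O(\log M)$ points (equivalently $O(\log\log M)$ dyadic \emph{scales} if one instead grids the exponent), choose the corresponding $\lambda_j\asymp\sqrt{2^j/(ns^2)}$ for each, apply the fixed-$\lambda$ bound with failure probability $\delta/(\text{number of grid points})$, and take a union bound. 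For a given realized $\mathcal{Q}$, pick the grid level $j$ with $2^{j-1}\le D_{KL}(\mathcal{Q}\Vert\mathcal{P})\vee 1\le 2^j$; the corresponding $\lambda_j$ is within a constant factor of optimal, so the resulting bound is $O\bigl(s\sqrt{(D_{KL}(\mathcal{Q}\Vert\mathcal{P})\vee 1+\log\frac{1}{\delta}+\log(\text{grid size}))/n}\bigr)$, and the grid size is $O(\log M)$, whose logarithm is the advertised $\log\log M$. The reason this \emph{removes} the additive $\frac12 s^2$ of \citet{germain2016pac} is that they effectively fix $\lambda=1$ (or $\lambda\asymp 1$) to avoid exactly this issue, paying an un-optimized $\lambda^2 s^2/2$ term; by instead letting $\lambda\to 0$ at rate $1/\sqrt n$ we trade that constant for the mild $\log\log M$ overhead.

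A few technical points I would need to check along the way: (i) the sub-Gaussian assumption is stated for $\mathbb{E}e^{\lambda f}\le e^{\lambda^2 s^2/2}$, which controls upper deviations of $-f$; to also control $f-\mathbb{E}f$ in the needed direction I would either invoke a two-sided sub-Gaussian convention or note that only one tail is needed for the generalization-gap direction $\mathbb{E}_{\mathcal{D}}f\le\hat{\mathbb{E}}_nf+(\cdots)$ and the centering $\mathbb{E}_{\mathcal{D}}f(w;x)-f(w;x_i)$ is itself sub-Gaussian with the same factor up to constants; (ii) the interchange of $\mathbb{E}_{x^n}$ and $\mathbb{E}_{w\sim\mathcal{P}}$ requires the joint sub-Gaussianity "with respect to $\mathcal{D}\times\mathcal{P}$" stated in the hypothesis, which is exactly why the assumption is phrased that way; (iii) when $D_{KL}(\mathcal{Q}\Vert\mathcal{P})<1$ the grid point at level $1$ handles it, which is the source of the $\vee 1$. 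The only genuinely delicate piece is the peeling bookkeeping — making sure the per-level failure probabilities sum to $\delta$ and that the chosen $\lambda_j$ is simultaneously near-optimal for every $\mathcal{Q}$ whose KL lands in that stratum — and I expect that to be where most of the care in the formal proof goes. The rest is routine manipulation of the Gibbs variational formula and Markov's inequality.
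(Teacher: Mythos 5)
Your proposal follows essentially the same route as the paper's proof: Donsker--Varadhan change of measure with $\phi=\lambda(\mathbb{E}f-\hat{\mathbb{E}}_n f)$, a Markov-inequality moment bound for each fixed $\lambda$, and a union bound over a geometric grid of $O(\log M)$ values of $\lambda$ matched to dyadic levels of $D_{KL}(\mathcal{Q}\Vert\mathcal{P})\vee 1$, which is exactly where the paper's $\log\log M$ term and the removal of the additive $\tfrac{1}{2}s^2$ come from. The only cosmetic difference is that the paper handles your point (i) by a symmetrization step (bounding $\mathbb{E}_S e^{\lambda(\mathbb{E}f-\hat{\mathbb{E}}_n f)}$ by the ghost-sample quantity $\mathbb{E}_{S,S'}e^{\lambda(\hat{\mathbb{E}}'_n f-\hat{\mathbb{E}}_n f)}$ via Jensen) rather than centering $f$ directly, but this does not change the substance of the argument.
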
 

\subsection{Fokker-Planck equation}
As we know, the movement of a particle in the $d$-dimensional space influenced by its current state and random forces (here we only consider a simple case), can be characterized by the following stochastic differential equation (SDE):
\begin{equation}
 d\X_t = \bm{\mu}(\X_t, t)dt + \sqrt{2 \beta^{-1}} d\B_t
\end{equation}
where $\X_t$ is the random position of the particle at time $t$, $\bm{\mu}(\X_t, t)$ is the $d$-dimensional random drift vector, and $B_t$ is the $d$ dimensional Brownian motion. Denote the density function of $\X_t$ as $p(\x,t)$, then Fokker-Planck equation describes the evolution of $p(\x,t)$:
\begin{equation}
 \frac{\partial p(\x,t)}{\partial t} = \frac{1}{\beta}\Delta p(\x, t) - \bm{\nabla}\cdot(p(\x,t)\bm{\mu}(\x, t))
\end{equation}
where $\Delta$ is the Laplace operator.
\section{Ideal Case: Generalization Bounds for Langevin Equation}

Intuitively, SGLD can be seen as a discretization for Langevin Equation. Understanding generalization performance of the ideal continuous-time algorithm provides important insights into deep results about discrete-time algorithm. In this section, we will present two generalization error bounds for SGLD, using stability and PAC-Bayesian theory, respectively. We elaborate on the techniques used in our analysis, which gives a high-level view of how generalization bound for discrete-time SGLD can be obtained.

Consider the following continuous-time Langevin Equation, where $F_n$ is (regularized) empirical objective function.
\begin{equation}
	d\bm{w}(t)=-\bm{\nabla} F_n(\bm{w}(t))dt+\sqrt{2 \beta^{-1}}d\bm{B}(t),\quad t\ge 0
\end{equation}
where $\{B(t)\}_{t\ge 0}$ is the standard Brownian motion in $\mathbb{R}^d$.

Assume the pdf of $\bm{w}(t)$ is $\pi_t(\bm{w})$, then it satisfies a Fokker-Planck equation:
\begin{equation}
	\frac{\partial \pi}{\partial t}=\frac{1}{\beta}\Delta \pi+\bm{\nabla}\cdot(\pi\bm{\nabla} F_n)
\end{equation}

\subsection{Uniform Stability}

We are going to bound uniform stability with respect to loss function, which directly controls generalization in expectation:
\begin{equation}
	\epsilon_n=\sup_{z,|S\Delta S'|=1}\left\{\Big|\mathbb{E}_{\mathcal{A}}\ell(\mathcal{A}(S);z)-\mathbb{E}_{\mathcal{A}}\ell(\mathcal{A}(S');z)\Big|\right\}
\end{equation}

For uniform stability, we assume that $f(\bm{w};\bm{z})$ the following condition which is slightly weaker than uniform Lipschitz. Note that the generalization performance is defined in terms of loss function $\ell$, which may not be continuous, but the Lipschitz assumption is imposed on objective $f$ of our algorithm, which can be a surrogate function for $\ell$.
\begin{equation}
	\forall z,z',\quad\|\nabla f(\bm{w};z)-\nabla f(\bm{w};z')\|\le L
\end{equation}

As a result, we have for different samples $S,S'$, $|S \Delta S'|=1$,
\begin{equation}
	\|\bm{\nabla}F_n-\bm{\nabla}F_n'\|\le \frac{L}{n}
\end{equation}

We first control $\epsilon_n$ via squared Hellinger distance:
\begin{equation}\label{stability-to-hellinger}
\begin{split}
\epsilon_n=&\sup_{x,S,S'}\left|\int_{\mathbb{R}^d}\ell(\bm{w};z)\pi_t(\bm{w})dw-\int_{\mathbb{R}^d}\ell(\bm{w};z)\pi_t(\bm{w})dw\right|\\
	=&\sup_{x,S,S'}\left|\int_{\mathbb{R}^d}\ell(\bm{w};z)\left(\sqrt{\pi}+\sqrt{\pi'}\right)\left(\sqrt{\pi}-\sqrt{\pi'}\right)dw\right|\\
	\leq&\sup\left\{\left(\int_{\mathbb{R}^d}\ell(\bm{w};z)^2\left(\sqrt{\pi}+\sqrt{\pi'}\right)^2dw\right)^{\frac{1}{2}}\left(\int_{\mathbb{R}^d}\left(\sqrt{\pi}-\sqrt{\pi'}\right)^2dw\right)^{\frac{1}{2}}\right\}\\
	=&2\sup_{\pi_t} \Vert \ell\Vert_{L^2(\pi_t)}\sqrt{D_{H}(\pi||\pi')}\\
	\leq& 2C\sqrt{D_{H}(\pi||\pi')}\\
	\end{split}
\end{equation}
The last inequality holds by assuming $\ell$ is uniformly bounded by $C$.

Compared with~\cite{hardt2015train}, the bound based on $f$-divergence can better characterize stability with non-convex objective: through one step of iteration, the $L^2$ distance between parameters $\mathbb{E}\Vert w_k-w_k'\Vert^2$ can expand a lot due to shape of non-convex surface, but $f$-divergences are non-expansive under the same transformation, and will decrease by convolution with Gaussian noise. This property makes it possible to obtain much better bounds.

\begin{proposition}
	Under above assumptions, the expected generalization error for continuous-time Langevin Equation is bounded by:
	\begin{equation}
		\mathbb{E} [\mathrm{err}(\bm{w}_T)]\leq \frac{LC\sqrt{\beta T}}{\sqrt{2}n}
	\end{equation}
\end{proposition}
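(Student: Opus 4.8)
The plan is to reduce the whole statement to a Grönwall-type estimate for the squared Hellinger distance $D_H(\pi_t\|\pi_t')$ between the two Fokker--Planck flows. By the generalization-in-expectation theorem it suffices to bound $\epsilon_n$, and the chain of inequalities already carried out in~\eqref{stability-to-hellinger} gives $\epsilon_n \le 2C\sqrt{D_H(\pi_T\|\pi_T')}$ (using $\|\ell\|_{L^2(\pi_t)}\le C$ and $\int(\sqrt{\pi}+\sqrt{\pi'})^2\,d\w = 2 + 2\int\sqrt{\pi\pi'}\,d\w \le 4$). So it remains to show $D_H(\pi_T\|\pi_T') \le \frac{\beta L^2 T}{8n^2}$. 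Both $\pi_t$ and $\pi_t'$ solve $\partial_t\pi = \frac1\beta\Delta\pi + \bm{\nabla}\cdot(\pi\,\bm{\nabla}F_n)$ (respectively with $F_n'$), they are driven by drifts with $\|\bm{\nabla}F_n - \bm{\nabla}F_n'\|\le L/n$, and they share the initial law $\mc N(0,\sigma_0^2 I_d)$, so $D_H(\pi_0\|\pi_0') = 0$; the bound will follow by integrating a differential inequality in $t$.

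To set up that inequality, write $D_H(\pi_t\|\pi_t') = 1 - \int\sqrt{\pi_t\pi_t'}\,d\w$ and differentiate in $t$. Put $u=\sqrt{\pi_t}$, $v=\sqrt{\pi_t'}$; from $\partial_t\pi = 2u\,\partial_t u$ and the Fokker--Planck equation one gets $\partial_t u = \frac1\beta\frac{|\nabla u|^2}{u} + \frac1\beta\Delta u + \nabla u\cdot\nabla F_n + \frac u2\Delta F_n$, and symmetrically for $v$. Substituting into $\frac{d}{dt}D_H = -\int(v\,\partial_t u + u\,\partial_t v)\,d\w$ and integrating by parts — legitimate under the assumed tail/decay conditions, which annihilate all boundary terms, and using that $\pi_t,\pi_t'>0$ for all $t\ge0$ since the flow is a Gaussian-type smoothing of a Gaussian initialization — the $\Delta F_n$ term integrates against $\nabla(uv)$, and after collecting terms the derivative should split cleanly as
\begin{equation*}
\frac{d}{dt}D_H(\pi_t\|\pi_t') = -\frac1\beta\int \frac{|v\nabla u - u\nabla v|^2}{uv}\,d\w \;-\; \frac12\int (v\nabla u - u\nabla v)\cdot(\bm{\nabla}F_n - \bm{\nabla}F_n')\,d\w .
\end{equation*}
The first piece is a Fisher-information-type quantity, manifestly $\le 0$ (it equals $-\frac1\beta\int|\sqrt{v/u}\,\nabla u - \sqrt{u/v}\,\nabla v|^2$, a perfect square); the "common-drift" contributions cancel, leaving only the antisymmetric drift-mismatch term.

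The key step is to absorb the drift-mismatch term into the diffusion term. Writing $\xi := v\nabla u - u\nabla v$ and $\Delta g := \bm{\nabla}F_n - \bm{\nabla}F_n'$, a pointwise Young's inequality gives $-\tfrac12\,\xi\cdot\Delta g \le \tfrac12\,\frac{|\xi|}{\sqrt{uv}}\cdot\sqrt{uv}\,|\Delta g| \le \frac1\beta\frac{|\xi|^2}{uv} + c\beta\, uv\,|\Delta g|^2$; the first term here cancels the diffusion piece exactly, so
\begin{equation*}
\frac{d}{dt}D_H(\pi_t\|\pi_t') \le c\beta\int uv\,|\Delta g|^2\,d\w \le c\beta\,\frac{L^2}{n^2}\int\sqrt{\pi_t\pi_t'}\,d\w \le \frac{c\beta L^2}{n^2},
\end{equation*}
using $\|\Delta g\|\le L/n$ and $\int\sqrt{\pi_t\pi_t'}\,d\w\le 1$ (Cauchy--Schwarz). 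Integrating from $0$ to $T$ and using $D_H(\pi_0\|\pi_0')=0$ yields $D_H(\pi_T\|\pi_T') \le \frac{c\beta L^2 T}{n^2}$; choosing the split so that $c = \tfrac1{8}$ gives $D_H(\pi_T\|\pi_T')\le \frac{\beta L^2 T}{8n^2}$, and plugging into $\epsilon_n\le 2C\sqrt{D_H}$ and $\mathbb E[\mr{err}(\w_T)]\le \epsilon_n$ gives exactly $\frac{LC\sqrt{\beta T}}{\sqrt2\,n}$.

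I expect the main obstacle to be the integration-by-parts bookkeeping in the derivative computation: correctly reorganizing the $\frac1\beta\Delta$ contributions into the Fisher-information square $\frac{|v\nabla u - u\nabla v|^2}{uv}$, and verifying that after one integration by parts on the $\Delta F_n$ term the first-order drift terms recombine into the antisymmetric form $(v\nabla u - u\nabla v)\cdot(\bm{\nabla}F_n - \bm{\nabla}F_n')$ with the common-drift parts cancelling. A secondary technical point is justifying differentiation under the integral sign and the vanishing of boundary terms, both of which are exactly what the "regularity conditions on the tail behavior" hypothesis is there to provide.
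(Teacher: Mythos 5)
Your proposal is correct and follows essentially the same route as the paper: reduce to $\epsilon_n\le 2C\sqrt{D_H(\pi_T\|\pi_T')}$, differentiate the squared Hellinger distance along the two Fokker--Planck flows, integrate by parts to get a negative Fisher-information-type square plus a drift-mismatch term (your $u,v$ identity is exactly the paper's $\bm{\nabla}\log$ formulation rewritten), absorb the mismatch by Young's inequality using $\|\bm{\nabla}F_n-\bm{\nabla}F_n'\|\le L/n$, and integrate in time from $D_H(\pi_0\|\pi_0')=0$. The constants work out (in fact $c=1/16$ already suffices, so $c=1/8$ is safe), and the final bound matches the paper's.
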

\begin{proof}
According to the analysis above, we only need to bound $D_H(\pi||\pi')$ from above.

Apparently, at time $t=0$, $D_H(\pi||\pi')=0$. We then estimate $\frac{d}{dt}D_{H}(\pi_t||\pi_t')$:
\begin{equation}
\begin{split}
\frac{d}{dt}D_{H}(\pi_t||\pi_t')=&-\int_{\mathbb{R}^d} \frac{\partial }{\partial t}\sqrt{\pi \pi'}dw\\
&=-\int_{\mathbb{R}^d}\frac{\sqrt{\pi'}}{2\sqrt{\pi}}\frac{\partial \pi}{\partial t}dw-\int_{\mathbb{R}^d}\frac{\sqrt{\pi}}{2\sqrt{\pi'}}\frac{\partial \pi'}{\partial t}dw\\
&=-\int_{\mathbb{R}^d}\frac{\sqrt{\pi'}}{2\sqrt{\pi}}\left(\frac{1}{\beta}\Delta \pi+\bm{\nabla}\cdot(\pi \bm{\nabla}F_n)\right)dw-\int_{\mathbb{R}^d}\frac{\sqrt{\pi}}{2\sqrt{\pi'}}\left(\frac{1}{\beta}\Delta \pi'+\bm{\nabla}\cdot(\pi' \bm{\nabla}F_n')\right)dw\\
&=\frac{1}{2}\int_{\mathbb{R}^d}\bm{\nabla}\frac{\sqrt{\pi'}}{\sqrt{\pi}}\cdot\left(\frac{1}{\beta}\nabla\pi+\pi \bm{\nabla}F_n\right)dw+\frac{1}{2}\int_{\mathbb{R}^d}\bm{\nabla}\frac{\sqrt{\pi}}{\sqrt{\pi'}}\left(\frac{1}{\beta}\bm{\nabla}\pi'+\pi' \bm{\nabla}F_n')\right)dw\\
\end{split}
\end{equation}
The last equality is due to integration by parts. Technical conditions such as uniform decaying tails of $\pi$ and $\pi'$ can be found in~\citep{risken1989fokker}. We then proceed to calculate the part induced by gradient update (with coefficient $1$) and those induced by Gaussian convolution (with coefficient $\frac{1}{\beta}$) individually, which can be described as follows:
\begin{equation}
\begin{split}
\frac{d}{dt}D_{H}(\pi_t||\pi_t')&=\frac{1}{2}\int_{\mathbb{R}^d}\bm{\nabla}\frac{\sqrt{\pi'}}{\sqrt{\pi}}\cdot\left(\frac{1}{\beta}\nabla\pi+\pi \bm{\nabla}F_n\right)dw+\frac{1}{2}\int_{\mathbb{R}^d}\bm{\nabla}\frac{\sqrt{\pi}}{\sqrt{\pi'}}\left(\frac{1}{\beta}\bm{\nabla}\pi'+\pi' \bm{\nabla}F_n')\right)dw\\
&=\frac{1}{4}\int_{\mathbb{R}^d}\sqrt{\pi \pi'}\bm{\nabla}\log\frac{\pi'}{\pi}\cdot\left(\frac{1}{\beta}\bm{\nabla}\log\pi+ \bm{\nabla}F_n\right)dw+\frac{1}{4}\int_{\mathbb{R}^d}\sqrt{\pi \pi'}\bm{\nabla}\log\frac{\pi}{\pi'}\cdot\left(\frac{1}{\beta}\bm{\nabla}\log\pi'+\bm{\nabla}F_n')\right)dw\\
&=-\frac{1}{4}\int_{\mathbb{R}^d}\sqrt{\pi \pi'}\left(\frac{1}{\beta}\|\bm{\nabla}\log\frac{\pi'}{\pi}\|^2+\bm{\nabla}\log\frac{\pi}{\pi'}\cdot(\bm{\nabla}F_n-\bm{\nabla}F_n')\right)dw\\
&\le\frac{1}{4}\int_{\mathbb{R}^d}\frac{\beta}{2}\sqrt{\pi \pi'}\|\bm{\nabla}F_n-\bm{\nabla}F_n'\|^2dw\\
&\le \frac{\beta L^2}{8n^2}
\end{split}
\end{equation}
Integrating through time and plugging into the estimate above, we have:
\begin{equation}
	\epsilon_n\leq 2C \sqrt{D_H(\pi_T||\pi'_T)}\leq \frac{LC\sqrt{\beta T}}{\sqrt{2}n}
\end{equation}
\end{proof}
\subsection{PAC-Bayesian Bounds}
We can also obtain PAC-Bayesian bounds for Fokker-Planck Equation with finite $T$. 

\begin{proposition}\label{ideal-pac-bayes-1}
	Let prior distribution $\gamma=\mathcal{N}(0,\sigma_0^2I)$. Assume that $\ell(w;x)$ is $s$-subGaussian with respect to $\gamma\times \mathcal{D}$. Within a class of posteriors with uniform upper bound $D_{KL}(\pi||\gamma)\leq M$, the following holds for Langevin Dynamics with probability $1-\delta$:
	\begin{equation}
		\mathrm{err}(\bm{w}_T)\leq  s\left( \frac{\beta}{2n}\int_{0}^T e^{\frac{-(T-t)}{2\beta \sigma_0^2}}\mathbb{E}_{\pi_t} \left\Vert \bm{\nabla}F_n+\frac{1}{\beta}\bm{\nabla}\log \gamma\right\Vert^2dt+\frac{\log 1/\delta+\log\log M}{n}\right)^{\frac{1}{2}}
	\end{equation}
\end{proposition}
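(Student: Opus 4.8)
The plan is to split the argument into two independent pieces: an application of the improved PAC-Bayesian bound established above, and a differential-inequality estimate of $D_{KL}(\pi_T\|\gamma)$ along the Fokker--Planck flow. For the first piece I would take the prior $\mathcal{P}=\gamma$ and the data-dependent posterior $\mathcal{Q}=\pi_T$, the law of $\bm{w}_T$. Since $\ell$ is $s$-subGaussian with respect to $\gamma\times\mathcal{D}$ and $D_{KL}(\pi_T\|\gamma)\le M$ by hypothesis, $\pi_T$ lies in the admissible posterior class and the PAC-Bayesian theorem gives, up to absorbing the $(\cdot)\vee 1$ into the additive constant,
\begin{equation}
\mathrm{err}(\bm{w}_T)\le s\left(\frac{D_{KL}(\pi_T\|\gamma)+\log\frac1\delta+\log\log M}{n}\right)^{\frac12}.
\end{equation}
It therefore remains to prove $D_{KL}(\pi_T\|\gamma)\le \frac{\beta}{2}\int_0^T e^{-(T-t)/(2\beta\sigma_0^2)}\,\mathbb{E}_{\pi_t}\|\bm{\nabla}F_n+\tfrac1\beta\bm{\nabla}\log\gamma\|^2\,dt$.

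For the second piece I would differentiate the relative entropy along the flow, exactly in the style of the Hellinger computation already carried out for uniform stability. Using $\int_{\mathbb{R}^d}\partial_t\pi_t\,dw=0$ one has $\frac{d}{dt}D_{KL}(\pi_t\|\gamma)=\int_{\mathbb{R}^d}(\partial_t\pi_t)\log\frac{\pi_t}{\gamma}\,dw$; substituting the Fokker--Planck equation $\partial_t\pi=\frac1\beta\Delta\pi+\bm{\nabla}\cdot(\pi\bm{\nabla}F_n)$ and integrating by parts (under the tail-decay conditions of~\citep{risken1989fokker}), then writing $\frac1\beta\bm{\nabla}\log\pi=\frac1\beta\bm{\nabla}\log\frac{\pi}{\gamma}+\frac1\beta\bm{\nabla}\log\gamma$ and setting $\bm{h}_t=\bm{\nabla}F_n+\frac1\beta\bm{\nabla}\log\gamma$, gives
\begin{equation}
\frac{d}{dt}D_{KL}(\pi_t\|\gamma)=-\frac1\beta\,\mathbb{E}_{\pi_t}\Big\|\bm{\nabla}\log\tfrac{\pi_t}{\gamma}\Big\|^2-\mathbb{E}_{\pi_t}\Big[\bm{h}_t\cdot\bm{\nabla}\log\tfrac{\pi_t}{\gamma}\Big].
\end{equation}
Denote the relative Fisher information by $I(\pi_t\|\gamma)=\mathbb{E}_{\pi_t}\|\bm{\nabla}\log(\pi_t/\gamma)\|^2$. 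I would bound the cross term by Young's inequality, $-\bm{h}_t\cdot\bm{\nabla}\log\frac{\pi_t}{\gamma}\le\frac{\beta}{2}\|\bm{h}_t\|^2+\frac1{2\beta}\|\bm{\nabla}\log\frac{\pi_t}{\gamma}\|^2$, which absorbs half of the Fisher-information term and leaves $\frac{d}{dt}D_{KL}(\pi_t\|\gamma)\le-\frac1{2\beta}I(\pi_t\|\gamma)+\frac{\beta}{2}\mathbb{E}_{\pi_t}\|\bm{h}_t\|^2$. Invoking the log-Sobolev inequality of the Gaussian prior, $D_{KL}(\pi\|\gamma)\le\sigma_0^2\,I(\pi\|\gamma)$, converts this into the dissipative inequality
\begin{equation}
\frac{d}{dt}D_{KL}(\pi_t\|\gamma)\le-\frac{1}{2\beta\sigma_0^2}D_{KL}(\pi_t\|\gamma)+\frac{\beta}{2}\,\mathbb{E}_{\pi_t}\|\bm{h}_t\|^2 .
\end{equation}
Since the algorithm starts from $\bm{w}_0\sim\mathcal{N}(0,\sigma_0^2 I)=\gamma$ we have $D_{KL}(\pi_0\|\gamma)=0$, so multiplying by the integrating factor $e^{t/(2\beta\sigma_0^2)}$ and integrating over $[0,T]$ yields exactly the desired bound on $D_{KL}(\pi_T\|\gamma)$; substituting back into the PAC-Bayesian inequality completes the proof.

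The main obstacle I anticipate is the last step: pairing the weight in Young's inequality with the log-Sobolev constant of $\gamma$ so that both the coefficient $\frac{\beta}{2}$ in front of $\mathbb{E}_{\pi_t}\|\bm{h}_t\|^2$ and the exponential rate $\frac1{2\beta\sigma_0^2}$ come out as stated. A second, more technical, point is the rigorous justification of differentiating $D_{KL}(\pi_t\|\gamma)$ under the integral sign and of the integration by parts; this requires smoothness and uniform tail decay of $\pi_t$ and of $\bm{\nabla}\log(\pi_t/\gamma)$ along the trajectory, together with finiteness of $I(\pi_t\|\gamma)$ and of $\mathbb{E}_{\pi_t}\|\bm{h}_t\|^2$ for $t\in[0,T]$, which is precisely what the hypothesized regularity conditions on the tail behavior are meant to supply.
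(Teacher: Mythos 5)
Your proposal is correct and follows essentially the same route as the paper's own proof: apply the improved PAC-Bayesian theorem with prior $\gamma$ and posterior $\pi_T$, then bound $D_{KL}(\pi_T\|\gamma)$ by differentiating along the Fokker--Planck flow, splitting the cross term via Young/Cauchy--Schwarz with weight $\beta$, invoking the Gaussian log-Sobolev inequality $D_{KL}(\pi\|\gamma)\le\sigma_0^2 I(\pi\|\gamma)$, and solving the resulting linear differential inequality from $D_{KL}(\pi_0\|\gamma)=0$. The weight pairing you flag as a possible obstacle is exactly the paper's choice ($C=\beta$), so no gap remains.
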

\begin{proof}
We only need to bound the KL divergence to prior distribution $\gamma$.
\begin{equation}
\begin{split}
\frac{d}{dt}D_{KL}(\pi_t||\gamma)=&\int_{\mathbb{R}^d} \frac{\partial \pi}{\partial t}(\log \pi+1-\log \gamma)dw\\
=&-\frac{1}{\beta}\int_{\mathbb{R}^d} \pi\Vert \bm{\nabla} \log \pi-\bm{\nabla}\log\gamma\Vert^2dw+\int_{\mathbb{R}^d} \pi\langle \bm{\nabla}F_n+\frac{1}{\beta}\bm{\nabla} \log \gamma, \bm{\nabla} \log \pi-\bm{\nabla} \log \gamma\rangle dw\\
\leq &-\left(\frac{1}{\beta}-\frac{1}{2C}\right)\int_{\mathbb{R}^d} \pi\Vert \bm{\nabla} \log \pi-\bm{\nabla}\log\gamma\Vert^2dw+\frac{C}{2}\int_{\mathbb{R}^d} \pi \Vert \bm{\nabla}F_n+\frac{1}{\beta}\bm{\nabla}\log \gamma\Vert^2dw
\end{split}
\end{equation}
We use Cauchy-Schwartz inequality in the second step, and the constant $C$ will be determined later.
The first term is minus Fisher information $I(\pi||\gamma)$, which can be upper bounded by $-D_{KL}(\pi||\gamma)$ itself using logarithmic Sobolev inequality~\citep{markowich2000trend}:
\begin{equation}
D_{KL}(\pi||\gamma)\leq \sigma_0^2I(\pi||\gamma),\quad\text{for } \gamma=\mathcal{N}(0,\sigma_0^2I)
\end{equation}
Let $C=\beta$ and plug into the log Sobolev inequality, we get:
\begin{equation}
	\frac{d}{dt}D_{KL}(\pi_t||\gamma)\leq -\frac{1}{2\beta\sigma_0^2}D_{KL}(\pi_t||\gamma)+\frac{\beta}{2}\int_{\mathbb{R}^d} \pi_t \Vert \bm{\nabla}F_n+\frac{1}{\beta}\bm{\nabla}\log \gamma\Vert^2dw
\end{equation}
Solving for $D_{KL}$ with initial value $D_{KL}(\pi_0||\gamma)=0$, we get:
\begin{equation}
	D_{KL}(\pi_T||\gamma)\leq \frac{\beta}{2}\int_{0}^T e^{\frac{-(T-t)}{2\beta \sigma_0^2}}\mathbb{E}_{\pi_t} \left\Vert \bm{\nabla}F_n+\frac{1}{\beta}\bm{\nabla}\log \gamma\right\Vert^2dt
\end{equation}
Since we use Gaussian prior, the second term in the expectation can be directly calculated as $\frac{1}{\beta}\bm{\nabla}\log \gamma=-\frac{1}{\beta\sigma_0^2}\bm{w}$, making the bound dependent on $\ell_2$ norm of the parameter. This is undesirable in the high-dimensional settings: as $w_0\sim \mathcal{N}(0,I_d)$, $w_0$ concentrates around $\sigma_0^2d$ with high probability, resulting in a term linearly dependent on $d$. Fortunately, this can be eliminated by imposing a small $\ell_2$ regularization term.
\end{proof}

Instead of minimizing empirical risk itself, we consider the regularized ERM problem with regularization term $R(\bm{w})=\frac{\lambda}{2}\Vert \bm{w}\Vert^2$. To make the gradient of $R(\cdot)$ cancel out with the $\bm{\nabla}\log \gamma$ term, we choose $\lambda=\frac{1}{\beta \sigma_0^2}$. Using the same method of analysis, we get:
\begin{equation}
	\frac{d}{dt}D_{KL}(\pi_t||\gamma)\leq -\frac{1}{2\beta\sigma_0^2}D_{KL}(\pi_t||\gamma)+\frac{\beta}{2}\int_{\mathbb{R}^d} \pi_t \Vert \bm{\nabla}\hat{\mathbb{E}}_n f+\lambda\bm{w}+\frac{1}{\beta}\bm{\nabla}\log \gamma\Vert^2dw
\end{equation}
Using the same methods as before, we get:
\begin{proposition}\label{ideal-pac-bayes-2}
	Under the same assumptions as in Proposition~\ref{ideal-pac-bayes-1}, the Langevin Equation for regularized ERM problem with $\lambda=\frac{1}{\beta\sigma_0^2}$ satisfies:

\begin{equation}
	\mathrm{err}(\bm{w})\leq s\left(\frac{\beta}{2n}\int_{0}^T e^{-\frac{\lambda}{2}(T-t)}\mathbb{E}_{\pi_t} \left\Vert \bm{\nabla}\hat{\mathbb{E}}_n f(\bm{w})\right\Vert^2dt+\frac{\log 1/\delta+\log\log M}{n}\right)^{\frac{1}{2}}
\end{equation}
\end{proposition}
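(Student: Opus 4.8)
The plan is to reuse the continuous-time argument behind Proposition~\ref{ideal-pac-bayes-1} almost verbatim, the only new ingredient being the cancellation that the choice $\lambda=\frac{1}{\beta\sigma_0^2}$ induces. For the regularized problem the Langevin drift is $-\bm{\nabla}F_n=-\bm{\nabla}\hat{\mathbb{E}}_n f-\lambda\bm{w}$, while the Gaussian prior $\gamma=\mathcal{N}(0,\sigma_0^2 I)$ has score $\frac{1}{\beta}\bm{\nabla}\log\gamma=-\frac{1}{\beta\sigma_0^2}\bm{w}=-\lambda\bm{w}$. Hence the combination $\bm{\nabla}F_n+\frac{1}{\beta}\bm{\nabla}\log\gamma$ collapses to $\bm{\nabla}\hat{\mathbb{E}}_n f$, which is precisely why the undesirable $\ell_2$-norm term is removed from the bound.

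With this substitution in hand, I would follow the three computational steps of the earlier proof. First, differentiate the KL divergence along the Fokker-Planck flow, $\frac{d}{dt}D_{KL}(\pi_t\|\gamma)=\int\frac{\partial\pi}{\partial t}(\log\pi+1-\log\gamma)\,dw$, and integrate by parts (legitimate under the decaying-tail conditions of~\citep{risken1989fokker}) to get $-\frac{1}{\beta}\int\pi\|\bm{\nabla}\log\pi-\bm{\nabla}\log\gamma\|^2\,dw+\int\pi\langle\bm{\nabla}\hat{\mathbb{E}}_n f,\bm{\nabla}\log\pi-\bm{\nabla}\log\gamma\rangle\,dw$. Second, apply the weighted Cauchy-Schwarz inequality with parameter $C=\beta$ to the cross term, which peels off $\frac{1}{2\beta}\int\pi\|\bm{\nabla}\log\pi-\bm{\nabla}\log\gamma\|^2\,dw$ to be absorbed into the first term and leaves $\frac{\beta}{2}\mathbb{E}_{\pi_t}\|\bm{\nabla}\hat{\mathbb{E}}_n f\|^2$; the surviving negative part equals $-\frac{1}{2\beta}I(\pi_t\|\gamma)$ with $I$ the relative Fisher information. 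Third, invoke the logarithmic Sobolev inequality for the Gaussian $\gamma$, $D_{KL}(\pi_t\|\gamma)\le\sigma_0^2 I(\pi_t\|\gamma)$, to turn this into $-\frac{1}{2\beta\sigma_0^2}D_{KL}(\pi_t\|\gamma)=-\frac{\lambda}{2}D_{KL}(\pi_t\|\gamma)$.

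At this point $D_{KL}(\pi_t\|\gamma)$ satisfies the linear differential inequality $\frac{d}{dt}D_{KL}(\pi_t\|\gamma)\le-\frac{\lambda}{2}D_{KL}(\pi_t\|\gamma)+\frac{\beta}{2}\mathbb{E}_{\pi_t}\|\bm{\nabla}\hat{\mathbb{E}}_n f\|^2$. Multiplying by the integrating factor $e^{\lambda t/2}$ and using $D_{KL}(\pi_0\|\gamma)=0$ (since the algorithm is initialized at $\pi_0=\gamma$) yields $D_{KL}(\pi_T\|\gamma)\le\frac{\beta}{2}\int_0^T e^{-\frac{\lambda}{2}(T-t)}\mathbb{E}_{\pi_t}\|\bm{\nabla}\hat{\mathbb{E}}_n f\|^2\,dt$. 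Finally I would plug this into our improved sub-Gaussian PAC-Bayesian theorem with prior $\mathcal{P}=\gamma$ and posterior $\mathcal{Q}=\pi_T$, and gather the $D_{KL}$, $\log(1/\delta)$ and $\log\log M$ contributions under a single square root to arrive at the claimed inequality.

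I expect the only genuine difficulty to be the analytic justification of the formal steps --- namely that $\pi_t$ remains smooth with rapidly decaying tails so that the integration by parts is valid and the log-Sobolev inequality applies along the entire flow, together with checking that the sub-Gaussianity hypothesis is inherited by the posterior $\pi_T$; the algebra itself is a direct transcription of the unregularized case with the $\lambda\bm{w}$ / $\frac{1}{\beta}\bm{\nabla}\log\gamma$ cancellation inserted.
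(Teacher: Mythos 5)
Your proposal is correct and follows essentially the same route as the paper: the paper also proves Proposition~\ref{ideal-pac-bayes-2} by rerunning the Proposition~\ref{ideal-pac-bayes-1} computation for the regularized drift, using the cancellation $\bm{\nabla}\hat{\mathbb{E}}_n f+\lambda\bm{w}+\frac{1}{\beta}\bm{\nabla}\log\gamma=\bm{\nabla}\hat{\mathbb{E}}_n f$ at $\lambda=\frac{1}{\beta\sigma_0^2}$, then the same Cauchy--Schwarz with $C=\beta$, the Gaussian log-Sobolev inequality, the Gronwall-type integration of the resulting linear differential inequality with $D_{KL}(\pi_0\|\gamma)=0$, and finally the improved sub-Gaussian PAC-Bayes theorem. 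Your added remarks on tail/regularity conditions and initialization only make explicit what the paper leaves implicit.
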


By assuming uniform $L$-Lipschitzness of $f_i$, we can get a simpler upper bound:
\begin{equation}
	\mathrm{err}(\bm{w})\leq sL\sqrt{\frac{\beta(1-e^{-\frac{ \lambda T}{2}})}{\lambda n}}+O\left(\frac{1}{\sqrt{n}}\right)
\end{equation}

\section{Stability of Discrete-Time SGLD}
Though the ideal continuous-time Langevin Equation attains small generalization error, they cannot imply bounds for discrete-time SGLD algorithms. Most previous analyses relate discrete-time analysis and continuous-time ones by estimation of discretization gap, which usually results in at least linear dependence on $d$~\citep{raginsky2017non}. In our analyses, we directly construct various SDEs that are similar to Langevin Equation, based on discrete-time updates. This technique makes it possible to circumvent the potentially large gaps between discrete and continuous time algorithms, as we can see from this and next section.

In this section, we will consider the stability of SGLD algorithm for non-convex objectives. To begin with, we will give the stability result of Langevin Monte Carlo (LMC), a special case of SGLD which uses full gradient in each iteration. LMC is closer to continuous-time algorithm, relatively easy to analyze and reaches the uniform stability of $O\left(\frac{L\sqrt{\beta\sum \eta_k}}{n}\right)$. However, from the practical view, SGLD with a randomly drawn example in each round is much more attractive. Hence we extend our methods and provide analyses for SGLD algorithms. A simple analysis is first presented with stability bound of $O\left(L\sqrt{\frac{\beta\sum \eta_k}{n}}\right)$. When step sizes are small, a lot more decrease in squared Hellinger distance can be acquired, and the bound can be improved to $O\left(\frac{L\sqrt{\beta\sum \eta_k}}{n}\right)$. We also obtain a rough estimate for larger step sizes. Combining two results together, a stability bound that nearly matches the ideal case is obtained for SGLD.


\subsection{Stability of Langevin Monte Carlo}
We consider the following LMC algorithm, which uses full gradients in each update.

\begin{equation}
	\bm{w}_{k+1}=\bm{w}_k-\frac{\eta_k}{n}\sum_{i=1}^n\bm{\nabla} f(\bm{w}_k;z_i)+\sqrt{\frac{2\eta_k}{\beta}} \mathcal{N}(0,I_d)
\end{equation}

To give an intuitive analysis, suppose two neighboring datasets $S,S'$ differing only in the $i_*$-th data. Then one can divide each iteration into two parts: the first part just update $\w_k$ and $\w_k'$ with gradients over $n-1$ same data and $z_{i_*}$, i.e. 
\begin{equation}
	\w_k^{(1)} := \w_k - \frac{\eta_k}{n}\sum_{i \neq i_*} \bm{\nabla} f(\bm{w}_k;z_i)- \frac{\eta_k}{n}\bm{\nabla} f(\bm{w}_k;z_{i_*}) 
\end{equation},
\begin{equation}
 	\w_k^{(1)\prime} := \w_k' - \frac{\eta_k}{n}\sum_{i \neq i_*} \bm{\nabla} f(\bm{w}_k';z_i)- \frac{\eta_k}{n}\bm{\nabla} f(\bm{w}_k';z_{i_*}) 
 \end{equation} and then we obtain $\bm{w}_{k+1}$ and $\bm{w}_{k+1}'$ by adding Gaussian noise and replacing the gradient of sample $z_{i_*}$ in $\bm{w}_k^{(1)\prime}$ by the gradient of sample $z_{i_*}'$, i.e. $\w_{k+1}' = {\w_k^{(1)\prime}} - \frac{\eta_k}{n} \bm{\nabla} (f(\bm{w}_k;z_{i_*}')-f(\bm{w}_k;z_{i_*})) + \sqrt{\frac{2\eta_k}{\beta}} \mathcal{N}(0,I_d)$.
In the first step, squared Hellinger distance does not increase because of the non-expansive property. For the second step, one can view them as consecutive SDEs with drift term $\bm{g},\bm{g}'$ of order $O(\frac{1}{n})$. Hence we can prove the increments of $D_H(\pi||\pi')$ after one iteration is of order $O(\frac{1}{n^2})$, which leads to the following generalization bound.

\begin{theorem}[Generalization Error of LMC]
 	Assuming $\forall z,z',\forall\bm{w},\|\bm{\nabla}f(\bm{w};z)-\bm{\nabla}f(\bm{w};z')\|\le L$.

 	Let $\bm{w}_N$ be result of LMC at $N$-th round. Under regularity conditions on the tail behavior, then the following inequality holds:
	\begin{equation}
		\mathbb{E}[\mathrm{err}(\bm{w}_T)]\leq O\left(\frac{L\sqrt{\beta\sum_{k=1}^N \eta_k}}{n}\right)
	\end{equation}
	where the expectation is taken over the randomness of training data.
\end{theorem}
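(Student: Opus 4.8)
The plan is to reduce the statement to an upper bound on the squared Hellinger distance $D_H(p_N\|p_N')$ between the laws of the $N$-th LMC iterates run on two neighbouring datasets $S,S'$, and then to prove $D_H(p_N\|p_N')\le\frac{\beta L^2}{8n^2}T_N$ by a telescoping argument whose key point is that each iteration increases $D_H$ by only $O(\beta L^2\eta_k/n^2)$. Given this, the boundedness reduction in~\eqref{stability-to-hellinger} (adapted from $\pi_t$ to $p_N$) gives $\epsilon_n\le 2C\sqrt{D_H(p_N\|p_N')}=O(L\sqrt{\beta T_N}/n)$, and the symmetrization theorem ``Generalization in expectation'', applied for each fixed $S$ and hence also in expectation over the training data, gives $\mathbb{E}[\mathrm{err}(\w_N)]\le\epsilon_n$, which is the claim.

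For the telescoping step, fix iteration $k$ and let $z_{i_*},z_{i_*}'$ be the differing data point. I would write the LMC update as a \emph{common deterministic map} $\psi_k(\w)=\w-\frac{\eta_k}{n}\sum_{i=1}^n\nabla f(\w;z_i)$ --- the full-$S$-gradient step, applied to \emph{both} $\w_k$ and $\w_k'$ --- followed by noise and a \emph{small correction}: $\w_{k+1}=\psi_k(\w_k)+\sqrt{2\eta_k/\beta}\,\xi_k$, and $\w_{k+1}'=\psi_k(\w_k')+r_k+\sqrt{2\eta_k/\beta}\,\xi_k'$ with $r_k=-\frac{\eta_k}{n}\big(\nabla f(\w_k';z_{i_*}')-\nabla f(\w_k';z_{i_*})\big)$, so $\|r_k\|\le\eta_k L/n$ almost surely. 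Non-expansiveness of $D_H$ under $\psi_k$ gives $D_H(\mathrm{law}(\psi_k(\w_k))\|\mathrm{law}(\psi_k(\w_k')))\le D_H(p_k\|p_k')$, so the only source of growth is the correction-plus-noise stage. To control that growth \emph{without} passing through the Hellinger triangle inequality (which would yield $\sum_k\sqrt{\eta_k}$ instead of $\sqrt{\sum_k\eta_k}$, ruining the rate), I would realize this stage as the time-$\eta_k$ marginals of two SDEs on $[0,\eta_k]$: the unprimed iterate as $dX_s=\sqrt{2/\beta}\,dB_s$, $X_0\sim\mathrm{law}(\psi_k(\w_k))$, whose marginal $\pi_s$ solves $\partial_s\pi_s=\frac1\beta\Delta\pi_s$; the primed iterate as $dX_s'=\eta_k^{-1}r_k\,ds+\sqrt{2/\beta}\,dB_s$, $X_0'\sim\mathrm{law}(\psi_k(\w_k'))$, whose endpoint has the law of $\w_{k+1}'$. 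Differentiating the Gaussian-mixture form of the density of $X_s'$ (equivalently, a Markovian-projection/mimicking argument) shows its marginal solves $\partial_s\pi_s'=\frac1\beta\Delta\pi_s'-\nabla\cdot(\pi_s' b_s)$ with $b_s(x)=\mathbb{E}[\eta_k^{-1}r_k\mid X_s'=x]$ and, crucially, $\|b_s\|_\infty\le L/n$. Then I would run the Fokker--Planck/Hellinger computation of the previous section verbatim --- with the unprimed drift set to $0$, the primed drift equal to $-b_s$, and the same Young's inequality letting the Fisher-information term absorb the cross term --- to get $\frac{d}{ds}D_H(\pi_s\|\pi_s')\le\frac{\beta}{8}\|b_s\|_\infty^2\le\frac{\beta L^2}{8n^2}$, hence $D_H(p_{k+1}\|p_{k+1}')\le D_H(p_k\|p_k')+\frac{\beta L^2\eta_k}{8n^2}$. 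Since $p_0=p_0'$ (identical Gaussian initialization), summing over $k$ closes the argument.

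The hard part is precisely this per-step estimate: securing an $O(\eta_k/n^2)$ \emph{increment} of $D_H$ rather than an $O(\sqrt{\eta_k}/n)$ bound, which is what forces the interpolating-SDE route and the identification of a genuine bounded drift field $b_s$ (bounded by $L/n$) for the primed interpolation. The remaining technicalities --- vanishing boundary terms in the integrations by parts, finiteness of the Fisher-information-type integrals, and smoothness of $\pi_s,\pi_s'$ for $s>0$ --- are exactly what the ``regularity conditions on the tail behavior'' are there to supply (cf.~\citep{risken1989fokker}); with those in hand the rest is the deterministic bookkeeping of the telescoping sum.
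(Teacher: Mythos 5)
Your proposal is correct and follows essentially the same route as the paper's own proof: the common full-$S$ gradient map plus non-expansiveness, a time-interpolation of the $O(\eta_k/n)$ correction realized as an SDE whose marginal solves a Fokker--Planck equation with a mimicking drift bounded by $L/n$, the resulting per-step Hellinger increment $\frac{\beta L^2}{8n^2}\eta_k$, and the final reduction $\epsilon_n\le 2C\sqrt{D_H(p_N\|p_N')}$ with generalization in expectation. The only difference is presentational (the paper writes the interpolated densities explicitly as Gaussian mixtures and verifies the PDE by differentiation, rather than invoking the Markovian-projection formulation directly).
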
 

\subsection{Stability of SGLD - A Succinct Analysis}
As random draw of a training example is more popular in practice, it is desirable to analyze generalization properties of SGLD. In the rest part of this section, we will assume $\bm{g}_k=\bm{\nabla} f_{i_k}(\bm{w})$, where $i_k$ is the index of randomly drawn training example. We will first present a simple analysis for stability of SGLD. Though the resulting bound is not optimal, the analysis illustrates important principles for understanding how SGLD helps stability. In the following, we will derive upper bounds for $\delta_k\triangleq D_H(p_k||p_k')$ recursively. There are two possible cases for $i_k$:
\begin{itemize}
	\item If $i_k\neq i_*$, then SGLD implemented over $S$ or $S'$ will use the same gradient mapping, i.e. $\psi_k: \bm{w}\mapsto \bm{w}-\eta_k\nabla f(\bm{w};z_{i_k})$, then we have 
	\begin{equation}
	D_{H}(\mathcal{P}(\psi_k (\bm{w}_k)|i_k)||\mathcal{P}(\psi_k (\bm{w}_k')|i_k))\leq D_{H}(p_k||p_k')=\delta_k
	\end{equation}
	Furthermore let $\mathcal{G}_k=\mathcal{N}(0,\frac{\eta_k}{\beta}I_d)$, by the convexity of squared Hellinger distance (which is implied by joint convexity of $f$-divergence), there is 
	\begin{align*}
	D_H(\mathcal{P}(\bm{w}_{k+1}|i_k)||\mathcal{P}(\bm{w}'_{k+1}|i_k))&=D_H(\mathcal{G}_k*\mathcal{P}(\psi_k (\bm{w}_k)|i_k)||\mathcal{G}_k*\mathcal{P}(\psi_k (\bm{w}_k')|i_k))\\
	&\le D_{H}(\mathcal{P}(\psi_k (\bm{w}_k)|i_k)||\mathcal{P}(\psi_k (\bm{w}_k')|i_k))\\
	&\le \delta_k
	\end{align*}
	So in this case, the SGLD update is non-expansive with respect to $\delta_k$.

	\item If $i_k=i_*$, we have nothing but limited step size in hand. The increase of $f$-divergence can be bounded through norm-based shifts in parameter space only under smoothness conditions, which is helped by Gaussian noise. Therefore, we expand the discrete-time update into a stochastic process, where the effect of gradient flow is smoothed by Gaussian at each time $t$.
\end{itemize}

Concretely, for $i_k=i_*$, the update can be interpolated as:
\begin{equation}
\forall t\in[0,\eta_k],\quad\bm{\theta}_t=\bm{\theta}_0-\int_0^t \bm{\nabla} f_{i_k}(\bm{\theta}_0)ds+\sqrt{\frac{1}{\beta}}\int_0^td\bm{B}_s,\quad \bm{\theta}_0=\bm{w}_k
\end{equation}
However, $\bm{\theta}_t$ is not a Markov process, as it always involves the initial random point $\bm{\theta}_0$. Using the same technique as in \cite{raginsky2017non}, we define $\bm{g}_t(\bm{v})\triangleq\mathbb{E}\left(\bm{\nabla} f_{i_k}(\bm{\theta}_0)\Big| \bm{\theta}_t=\bm{v}\right)$. The mimicking distribution results~\citep{gyongy1986mimicking} guarantees solution to the following SDE has the same one-time marginal as $\bm{\theta}_t$.
\begin{equation}
\begin{split}
	d\bm{v}_t = \bm{g}_s(\bm{v}_s)ds+\sqrt{\frac{2}{\beta}}d\bm{B}_s,\quad v_0\sim p_k
	\end{split}
\end{equation}
The corresponding Fokker-Planck equation for above process is:
\begin{equation}
	\frac{\partial \pi}{\partial t}=\bm{\nabla}\cdot\left(\frac{1}{\beta} \bm{\nabla}\pi+\pi \bm{g}_t\right)
\end{equation}
We also have counterparts for the neighboring dataset, denoted as $\pi_t'$. With the help of these PDEs, we can bound the variation of squared Hellinger distance.

As in the ideal case, we can compute that
\begin{equation}
\begin{split}
\frac{d}{dt}D_{H}(\pi_t||\pi_t')&=-\frac{1}{4}\int_{\mathbb{R}^d}\sqrt{\pi \pi'}\left(\frac{1}{\beta}\|\bm{\nabla}\log\frac{\pi'}{\pi}\|^2+\bm{\nabla}\log\frac{\pi}{\pi'}\cdot(\bm{g}_t-\bm{g}'_t)\right)dw\\
&\le\frac{\beta}{8}\int_{\mathbb{R}^d}\sqrt{\pi \pi'}\|\bm{g}_t-\bm{g}_t'\|^2dw\\
&\le \frac{\beta L^2}{8}
\end{split}
\end{equation}

    For $i_k=i_*$, we have 
	\begin{equation}
	D_H(\mathcal{P}(\bm{w}_{k+1}|i_*)||\mathcal{P}(\bm{w}'_{k+1}|i_*))\leqslant \delta_k+\frac{\beta L^2}{8}\eta_k
	\end{equation}

Combining above two cases and using the convexity of squared Hellinger distance, we obtain
\begin{equation}
	\delta_{k+1}\le \frac{n-1}{n}\delta_k+ \frac{1}{n}(\delta_k+ \frac{\beta L^2}{8} \eta_k)=\delta_k+\frac{\beta L^2}{8n}
\end{equation}

Putting them together, we get following guarantees for SGLD:

\begin{theorem}
	Consider $N$ rounds of SGLD with parameters $\beta$ and $\{\eta_i\}$. If we assume
	\begin{enumerate}
		\item the loss function $l(\bm{w};z)$ is uniformly bounded by $C$;
	\item $\forall z,z'$, the gradients of objective function satisfy $\|\bm{\nabla}f(\bm{w};z)-\bm{\nabla}f(\bm{w};z')\|\le L$
	\end{enumerate}
	Then we have the following generalization bound in expectation
	\begin{equation}
	\label{preliminary}
	\mathbb{E}[\mathrm{err}(\bm{w}_N)]\leq LC\left(\frac{\beta}{8n}\sum_{i=1}^k\eta_i\right)^{1/2}
	\end{equation}
\end{theorem}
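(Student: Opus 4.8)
The plan is to bound the uniform stability parameter $\epsilon_n$ directly, since by the generalization-in-expectation theorem it controls $\mathbb{E}[\mathrm{err}(\bm{w}_N)]$. First I would convert the difference of expected losses over neighboring datasets $S,S'$ into a statistical distance between the output distributions $p_N,p'_N$ of SGLD: writing $\mathbb{E}_{\mathcal A}\ell(\bm{w}_N;z)-\mathbb{E}_{\mathcal A}\ell(\bm{w}'_N;z)=\int \ell\,(\sqrt{p_N}+\sqrt{p'_N})(\sqrt{p_N}-\sqrt{p'_N})\,dw$ and applying Cauchy--Schwarz together with $\|\ell\|_\infty\le C$ gives $\epsilon_n\le 2C\sqrt{D_H(p_N\|p'_N)}$, exactly as in~\eqref{stability-to-hellinger}. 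It then remains to prove $D_H(p_N\|p'_N)\le \frac{\beta L^2}{8n}\sum_{i=1}^N\eta_i$.

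Next I would establish a one-step recursion for $\delta_k\triangleq D_H(p_k\|p'_k)$ by conditioning on the random index $i_k$. When $i_k\neq i_*$ (the index on which $S$ and $S'$ differ), both chains apply the identical deterministic gradient map $\psi_k$ followed by convolution with the same Gaussian $\mathcal{G}_k=\mathcal N(0,\tfrac{\eta_k}{\beta}I_d)$; since every $f$-divergence is non-expansive under pushforwards and jointly convex (hence non-expansive under convolution with a fixed kernel), $D_H(\mathcal P(\bm{w}_{k+1}|i_k)\|\mathcal P(\bm{w}'_{k+1}|i_k))\le\delta_k$. When $i_k=i_*$, I would interpolate the single SGLD step as a continuous-time process on $[0,\eta_k]$, pass to the Markovian mimicking SDE of~\citet{gyongy1986mimicking} with drift $\bm{g}_t(\bm{v})=\mathbb{E}[\nabla f_{i_k}(\bm{\theta}_0)\mid\bm{\theta}_t=\bm{v}]$ so that the time-marginals $\pi_t,\pi'_t$ are unchanged, and differentiate $D_H(\pi_t\|\pi'_t)$ along the associated Fokker--Planck equations. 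Integration by parts recasts $\frac{d}{dt}D_H$ as $-\frac14\int\sqrt{\pi\pi'}\big(\frac1\beta\|\nabla\log\frac{\pi'}{\pi}\|^2+\nabla\log\frac{\pi}{\pi'}\cdot(\bm{g}_t-\bm{g}'_t)\big)dw$; discarding the nonnegative Fisher-type term and using Young's inequality on the cross term gives $\frac{d}{dt}D_H\le\frac{\beta}{8}\int\sqrt{\pi\pi'}\|\bm{g}_t-\bm{g}'_t\|^2 dw\le\frac{\beta L^2}{8}$, so integrating over $[0,\eta_k]$ yields $D_H(\mathcal P(\bm{w}_{k+1}|i_*)\|\mathcal P(\bm{w}'_{k+1}|i_*))\le\delta_k+\frac{\beta L^2}{8}\eta_k$. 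Averaging the two cases over $i_k\sim\mathcal U\{1,\dots,n\}$ and again invoking joint convexity of $D_H$ produces $\delta_{k+1}\le\frac{n-1}{n}\delta_k+\frac1n(\delta_k+\frac{\beta L^2}{8}\eta_k)=\delta_k+\frac{\beta L^2}{8n}\eta_k$.

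Finally, since SGLD is initialized identically on $S$ and $S'$, $\delta_0=0$, so unrolling the recursion gives $\delta_N\le\frac{\beta L^2}{8n}\sum_{i=1}^N\eta_i$; substituting into $\epsilon_n\le 2C\sqrt{\delta_N}$ and into the generalization-in-expectation theorem yields $\mathbb{E}[\mathrm{err}(\bm{w}_N)]\le LC\big(\frac{\beta}{8n}\sum_{i=1}^N\eta_i\big)^{1/2}$ after collecting constants (the Cauchy--Schwarz step can be sharpened so that no spurious numerical factor survives).

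I expect the main obstacle to be making the $i_k=i_*$ case rigorous: justifying the interpolation-to-SDE and the mimicking step requires the regularity/tail conditions flagged in the statement — so that the integration by parts is valid and smooth densities $\pi_t,\pi'_t$ exist a priori — and one must verify that $\|\bm{g}_t-\bm{g}'_t\|\le L$ genuinely follows from the pointwise bound $\|\nabla f(\bm{w};z)-\nabla f(\bm{w};z')\|\le L$ once conditional expectations are involved. Everything else — the non-expansiveness bookkeeping when $i_k\neq i_*$, the convexity averaging, the unrolling, and the final substitution — is routine.
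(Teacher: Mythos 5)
Your proposal follows essentially the same route as the paper's own "succinct analysis": the reduction $\epsilon_n\le 2C\sqrt{D_H(p_N\|p_N')}$ via Cauchy--Schwarz, the case split on $i_k$ using non-expansiveness of $f$-divergences for $i_k\neq i_*$, the interpolation/mimicking-SDE argument with the Fokker--Planck derivative bound $\frac{d}{dt}D_H\le\frac{\beta L^2}{8}$ for $i_k=i_*$, and the averaged recursion $\delta_{k+1}\le\delta_k+\frac{\beta L^2}{8n}\eta_k$. The two caveats you flag (the leftover factor of $2$ from $\epsilon_n\le 2C\sqrt{\delta_N}$ relative to the stated constant, and the need to justify $\|\bm{g}_t-\bm{g}_t'\|\le L$ for the conditional-expectation drifts) are present in the paper's own treatment as well, so your argument matches it.
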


\subsection{Stability of SGLD - An Improved Analysis}
Though above analysis for the stability of SGLD is intuitive, the result is not satisfactory, as the bound has a $O\left(\frac{1}{\sqrt{n}}\right)$ gap compared with Langevin MC. Technically, if we choose $i_k\neq i_*$ in $k$-th round, both $p_k$ and $p_k'$ will be smoothed by the Gaussian noise, and their squared Hellinger distance will decrease by a quadratic information-type term. This term was completely ignored in the succinct analysis, and by making use of this term we can also obtain $O(1/n)$ fast rate for SGLD.

Before proceeding into improved bound, we will first introduce a framework for combining different stability results. This is motivated by time-varying step sizes in SGLD: as step size changes, the best method of estimation may be different. To utilize their respective advantages, we first prove the following theorem.

\begin{theorem}
	\label{mixing theorem}
  	Suppose there are two types of bivariant-functionals $D_A(\cdot||\cdot)$ and $D_B(\cdot||\cdot)$ between p.d.fs for estimating stability of SGLD and there are constants $A_f$ and $B_f$ depending only on $f$ such that $\epsilon_n$ can be bounded by
  	\begin{equation}
  		\epsilon_n\le A_f D_A(p_N||p_N'),\quad\epsilon_n\le B_f D_B(p_N||p_N')
  	\end{equation}

  	For a SGLD algorithm with step sizes $\eta_1,\cdots, \eta_N$, assume we can estimate $D_A$ and $D_B$ by
  	\begin{equation}
  		D_A(p_N||p_N')\le h_A(\eta_1,\cdots, \eta_N),\quad D_B(p_N||p_N')\le h_B(\eta_1,\cdots, \eta_N)
  	\end{equation}

  	Moreover, assume $D_A$ and $D_B$ are nonexpansive and convex, then for any integer $1\le k\le N-1$, we can bound stability by
  	\begin{equation}
  		\epsilon_n  \le A_f h_A(\eta_1,\cdots, \eta_k)+ B_fh_B(\eta_{k+1},\cdots, \eta_N)
  	\end{equation}
\end{theorem}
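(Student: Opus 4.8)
The plan is to introduce a single ``hybrid'' (reference) trajectory that agrees with the $S$-run on the first $k$ rounds and with the $S'$-run on the last $N-k$ rounds, and then to split the stability gap by the elementary triangle inequality for real numbers. This splitting is legitimate even though $D_A$ and $D_B$ need not satisfy triangle inequalities themselves, because each of the two resulting terms will be controlled by only one of the two functionals. Let $p_N,p_N'$ be the laws of SGLD at round $N$ on neighboring datasets $S,S'$ differing in coordinate $i_*$. Define the reference process $\tilde{\bm{w}}$ which runs SGLD using dataset $S$ in rounds $1,\dots,k$ and dataset $S'$ in rounds $k+1,\dots,N$, and let $\tilde{p}_N$ be its round-$N$ law. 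Couple $\tilde{\bm{w}}$ with the $S$-run so that the two trajectories are identical for all rounds $j\le k$ (same dataset, so share all noises and sampled indices), and couple the rounds $k+1,\dots,N$ of $\tilde{\bm{w}}$ with the $S'$-run (same dataset there).

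With this construction, for every test point $z$,
\begin{equation*}
|\mathbb{E}_{p_N}\ell(\bm{w};z)-\mathbb{E}_{p_N'}\ell(\bm{w};z)|\le|\mathbb{E}_{p_N}\ell(\bm{w};z)-\mathbb{E}_{\tilde{p}_N}\ell(\bm{w};z)|+|\mathbb{E}_{\tilde{p}_N}\ell(\bm{w};z)-\mathbb{E}_{p_N'}\ell(\bm{w};z)|,
\end{equation*}
so after taking $\sup_z$ it suffices to bound the first term by $B_fh_B(\eta_{k+1},\dots,\eta_N)$ and the second by $A_fh_A(\eta_1,\dots,\eta_k)$. For the first term, $p_N$ and $\tilde{p}_N$ are the outputs of SGLD run on $S$, resp.\ $S'$, both started from the \emph{common} law $p_k$ at the end of round $k$ and then evolved for the $N-k$ rounds with step sizes $\eta_{k+1},\dots,\eta_N$. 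This is exactly the situation in which the estimate $h_B$ is derived, except over a later block of rounds; I will note that every such estimate in this section arises from a recursion on $D_B(p_j\|p_j')$ that uses only (i) zero initial value, (ii) non-expansiveness of $D_B$ on rounds with $i_j\neq i_*$, and (iii) an increment bounded by a function of $\eta_j$ alone on the single round $i_j=i_*$; hence $h_B$ is time-homogeneous, valid from any common starting distribution, and depends only on the step sizes of the block it covers. Therefore $D_B(p_N\|\tilde{p}_N)\le h_B(\eta_{k+1},\dots,\eta_N)$, and the hypothesis $\epsilon_n\le B_fD_B(\cdot\|\cdot)$ (which, like its derivation in this paper, applies to any pair of reachable output laws) gives $\sup_z|\mathbb{E}_{p_N}\ell-\mathbb{E}_{\tilde{p}_N}\ell|\le B_fh_B(\eta_{k+1},\dots,\eta_N)$.

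For the second term, $\tilde{p}_N$ and $p_N'$ agree on rounds $k+1,\dots,N$: both apply the same (random) composite update map $\Phi$, while their round-$k$ laws are $p_k$ and $p_k'$ respectively. Conditioning on the randomness (Gaussian noises and sampled indices) of rounds $k+1,\dots,N$ turns $\Phi$ into a fixed measurable map, so non-expansiveness of $D_A$ gives $D_A(\mathcal{P}(\Phi(\tilde{\bm{w}}_k))\|\mathcal{P}(\Phi(\bm{w}_k')))\le D_A(p_k\|p_k')$ pointwise in that randomness; taking expectation over it and using joint convexity of $D_A$ yields $D_A(\tilde{p}_N\|p_N')\le D_A(p_k\|p_k')\le h_A(\eta_1,\dots,\eta_k)$, where the last step applies the hypothesized estimate to the first $k$ rounds. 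The hypothesis $\epsilon_n\le A_fD_A(\cdot\|\cdot)$ then gives $\sup_z|\mathbb{E}_{\tilde{p}_N}\ell-\mathbb{E}_{p_N'}\ell|\le A_fh_A(\eta_1,\dots,\eta_k)$. Adding the two estimates completes the proof.

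I expect the main obstacle to be the time-homogeneity point invoked for the first term: one must check that the assumed bounds $\epsilon_n\le A_fD_A(p_N\|p_N')$ and $\epsilon_n\le B_fD_B(p_N\|p_N')$ and the estimates $h_A,h_B$ are genuinely consequences of blockwise recursions of the above type — which is the case for every bound derived in this section — rather than being tied to the specific run from the fixed Gaussian initialization. A secondary technical point is the measure-theoretic justification of ``condition on the noise, apply non-expansiveness pointwise, then re-average using joint convexity'': this needs the SGLD update maps to be jointly measurable in state and noise and the relevant conditional laws to be regular, which is routine here; and one should record that the general-pair versions of the hypotheses ($\sup_z|\mathbb{E}_{p}\ell-\mathbb{E}_{q}\ell|\le A_fD_A(p\|q)$ and likewise for $B_f,D_B$) are available, since their proofs in the paper only use boundedness/sub-Gaussianity of $\ell$, not the specific origin of $p,q$. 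Everything else is bookkeeping, and the same hybrid argument iterates to more than two blocks if desired.
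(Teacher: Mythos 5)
Your proposal is correct and follows essentially the same route as the paper: the paper's proof introduces exactly this mixed process (using $S$ for the first $k$ rounds and $S'$ thereafter), splits $\epsilon_n$ by the triangle inequality, bounds the term sharing the last block via non-expansiveness (for which the assumed convexity handles averaging over the shared randomness) by $h_A(\eta_1,\dots,\eta_k)$, and bounds the term sharing the first block by $h_B(\eta_{k+1},\dots,\eta_N)$. Your explicit remarks on conditioning on the shared noise/indices and on the block-homogeneity of $h_A,h_B$ merely spell out steps the paper leaves implicit.
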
  
\begin{proof}
	We assume there is a mixed process $\mathcal{A}''$ that use samples $S$ for the first $k$ steps and samples $S'$ for the rest steps. We denote the corresponding paramters and p.d.fs by $\bm{w}_k''$ and $p_k''$.

	\begin{align*}
	\epsilon_n&=\sup\limits_{z}\Big|\int f(\bm{w};z)(p_N'(\bm{w})-p_N(\bm{w}'))\Big|\\
	&\le \sup\limits_{z}\Big|\int f(\bm{w};z)(p_N'(\bm{w})-p_N''(\bm{w}))\Big|+ \sup\limits_{z}\Big|\int f(\bm{w};z)(p_N(\bm{w})-p_N''(\bm{w}))\Big|\\
	&\le A_f D_A(p'_N||p_N'')+ B_fD_B(p_N||p_N'')\\
	\end{align*}

	Here by nonexpansiveness and that for step $k+1,\cdots,N$ the mixed process uses sample set $S$', $D_A(p'_N||p_N'')\le D_A(p_k'||p_k'')\le h_A(\eta_1,\cdots,\eta_k)$.

	Note that $p_l=p_l''$ for $l=1,\cdots, k$, then $D_B(p_N||p_N'')\le h_B(\eta_{k+1},\cdots, \eta_N)$.

	Therefore, we obtain
	\begin{equation}
		\epsilon_n  \le A_f h_A(\eta_1,\cdots, \eta_k)+ B_fh_B(\eta_{k+1},\cdots, \eta_N)
	\end{equation}
	
\end{proof}

When the step sizes are large, e.g., $\eta_k\beta L^2=\Omega(1)$, this step will make a contribution larger than $1/n$ in the succinct bound. However, a stochastic gradient step can change a distribution within at most $O(1/n)$ scale with respect to $L^1$ distance. So if step sizes are large, a rough estimate based on $L^1$ distance will be better.

First, it is easy to see that stability can be well-controlled by $L^1$ distance for bounded loss. 
\begin{equation}
 	\epsilon_n = \sup_z\left|\int l(\bm{w};z)(p_N- p_N')d w\right|\leqslant \sup\|l\|_{L^\infty}\int|p_N - p_N'|dw
\end{equation} 
Note that $L_1$ distance is also a kind of $f$-divergence. Hence with probability $1- \frac{1}{n}$, SGLD will select the same data point, which does not increase the $L_1$ distance. With probability $\frac{1}{n}$, SGLD will choose the different data point. Though it may not be easy to calculate the difference, we know the upper bound of $L_1$ distance is at most 2. Combining these together, one can prove the following stability result：

\begin{lemma}

	\label{Stability-absolute difference}
	For SGLD algorithm runs $k_0$ iterations, there is 
	\begin{equation}
		\int_{\mathbb{R}^d}|p_{k_0}(\bm{w})-p_{k_0}'(\bm{w})|dw\leqslant \frac{2k_0}{n}
	\end{equation}
\end{lemma}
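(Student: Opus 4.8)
The plan is to prove the bound $\int |p_{k_0} - p_{k_0}'|\,dw \le 2k_0/n$ by induction on the number of iterations, tracking the $L^1$ distance $\Delta_k \triangleq \int_{\mathbb{R}^d}|p_k(\bm{w}) - p_k'(\bm{w})|\,dw$. The base case is immediate: $\Delta_0 = 0$ because both processes start from the same initialization $\mathcal{N}(0,\sigma_0^2 I_d)$. For the inductive step I would condition on the random index $i_k$ drawn at round $k$ and use the joint convexity of the $L^1$ distance (it is an $f$-divergence) to split into the two cases already isolated in the succinct analysis.

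First I would handle the event $\{i_k \neq i_*\}$, which has probability $(n-1)/n$. On this event both chains apply the \emph{same} update map $\psi_k(\bm{w}) = \bm{w} - \eta_k \bm{\nabla} f(\bm{w};z_{i_k})$ followed by convolution with the same Gaussian $\mathcal{G}_k = \mathcal{N}(0,\tfrac{\eta_k}{\beta}I_d)$; since $L^1$ distance is a non-expansive $f$-divergence, the conditional $L^1$ distance of the two outputs is at most $\Delta_k$. Next, on the event $\{i_k = i_*\}$, which has probability $1/n$, the two chains use different gradient maps, but the total variation distance between any two probability densities is at most $2$, so the conditional $L^1$ distance is at most $2$ (this is the crude but dimension-free step). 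Combining by convexity,
\begin{equation}
\Delta_{k+1} \le \frac{n-1}{n}\,\Delta_k + \frac{1}{n}\cdot 2 = \Delta_k + \frac{2}{n}.
\end{equation}
Unrolling this recursion from $\Delta_0 = 0$ over $k_0$ rounds gives $\Delta_{k_0} \le 2k_0/n$, which is exactly the claim.

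There is no genuinely hard step here — the lemma is deliberately a coarse estimate designed to be cheap. The only point requiring minor care is the bookkeeping for the mixed conditioning: one must make sure that the convexity inequality is applied to the \emph{joint} law over $(i_k, \bm{w}_k)$ and that the non-expansiveness is invoked after conditioning on $i_k$ (so that the map $\psi_k$ and the noise are deterministic given $i_k$), matching the argument structure already used for the squared Hellinger distance earlier in this section. I would also note in passing that on the event $i_k = i_*$ one could in principle obtain a sharper bound than $2$ by a smoothness/transport argument, but for the purpose of this lemma — which will be combined via Theorem~\ref{mixing theorem} with the Hellinger-based estimate to cover the large-step-size regime — the constant $2$ suffices, and keeping it avoids any dependence on dimension, norms, or Lipschitz constants.
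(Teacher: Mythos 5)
Your proposal is correct and follows essentially the same route as the paper: induction on the round, conditioning on $i_k$, non-expansiveness of the $L^1$ distance ($f$-divergence) when $i_k\neq i_*$, the trivial bound of $2$ when $i_k=i_*$, and unrolling the recursion $\Delta_{k+1}\le \Delta_k+\frac{2}{n}$ from $\Delta_0=0$. The only cosmetic remark is that your displayed "$=$" in $\frac{n-1}{n}\Delta_k+\frac{2}{n}=\Delta_k+\frac{2}{n}$ should be "$\leq$", which does not affect the conclusion.
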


With this lemma and the Theorem~\ref{mixing theorem} in hand, we are able to focus on smaller step sizes. In this situation the process is similar to a continuous-time Langevin dynamics, thus the bound for squared Hellinger distance should also be similar to that of a continuous process, which is $O\left(\frac{L \sqrt{\beta T}}{n}\right)$. By aligning the information-type term for $i_k\neq i_*$ with that of $i_k=i_*$, we can improve the bound to the order $O(\frac{1}{n})$, as in the following lemma:

\begin{lemma}
	\label{Hellinger - improved}
	 Suppose for $\forall k, \eta_k \leqslant \frac{\ln 2}{\beta L^2}$, then there is
	\begin{equation}
	\sqrt{D_H(p_N||p_N')}\le  \frac{\sqrt{\beta}L}{2n}\left(\sum\limits_k \eta_k\right)^{1/2}
	\end{equation}
\end{lemma}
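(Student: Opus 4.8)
The plan is to sharpen the per-step recursion for $\delta_k\triangleq D_H(p_k\|p_k')$ from the succinct analysis by \emph{keeping} the Fisher-information dissipation that each Gaussian smoothing produces, which the succinct bound discarded; as there, I would condition on the random index $i_k$ and split into the cases $i_k\neq i_*$ and $i_k=i_*$. When $i_k\neq i_*$ the update is $\bm w_{k+1}=\psi_k(\bm w_k)+\mathcal{N}(0,\tfrac{2\eta_k}{\beta}I)$ with the common map $\psi_k(\bm w)=\bm w-\eta_k\bm\nabla f(\bm w;z_{i_k})$, so $\psi_k$ is non-expansive and the convolution is the time-$\eta_k$ solution of the pure-diffusion equation $\partial_s\pi=\tfrac1\beta\Delta\pi$; the ideal-case computation with zero drift then gives, with $I_H(p\|q)\triangleq\int\sqrt{pq}\,\big\|\bm\nabla\log(q/p)\big\|^2 dw$,
\[
D_H\big(\mathcal{P}(\bm w_{k+1}\mid i_k)\,\|\,\mathcal{P}(\bm w_{k+1}'\mid i_k)\big)\;\le\;\delta_k\;-\;\frac1{4\beta}\int_0^{\eta_k} I_H(\pi_s\|\pi_s')\,ds .
\]
When $i_k=i_*$, I would reuse the interpolating-SDE and mimicking argument but apply Young's inequality to the cross term $\bm\nabla\log\tfrac{\pi}{\pi'}\cdot(\bm g_s-\bm g_s')$ so as to spend only half of the Fisher term, leaving
\[
\frac{d}{ds}D_H(\pi_s\|\pi_s')\;\le\;\frac\beta8\int\sqrt{\pi_s\pi_s'}\,\|\bm g_s-\bm g_s'\|^2 dw-\frac1{8\beta}I_H(\pi_s\|\pi_s')\;\le\;\frac{\beta L^2}{8}-\frac1{8\beta}I_H(\pi_s\|\pi_s'),
\]
so a differing step raises $\delta$ by at most $\tfrac{\beta L^2\eta_k}{8}$, again with a dissipation credit.

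Averaging over $i_k$ — and using joint convexity of $D_H$, so that the marginal Hellinger distance is dominated by $\bar\delta_k\triangleq\mathbb{E}_{i_{1:k}}D_H(p_k\|p_k')$ on which I run the recursion — the expected one-step change is
\[
\bar\delta_{k+1}-\bar\delta_k\;\le\;\frac1n\cdot\frac{\beta L^2\eta_k}{8}\;-\;\frac{n-1}{n}\cdot\frac1{4\beta}\,\mathbb{E}\!\!\int_0^{\eta_k}\!\! I_H\,ds\;-\;\frac1n\cdot\frac1{8\beta}\,\mathbb{E}\!\!\int_0^{\eta_k}\!\! I_H\,ds .
\]
The heart of the proof --- the ``alignment'' step --- is to show that the dissipation harvested on the $(n-1)/n$ fraction of common-gradient steps cancels all but an $O(1/n)$ portion of the rare kick, so that $\bar\delta_{k+1}-\bar\delta_k=O(\beta L^2\eta_k/n^2)$; telescoping and taking square roots then yields the claimed $\tfrac{\sqrt\beta L}{2n}\big(\sum_k\eta_k\big)^{1/2}$. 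Concretely I would establish a squared-Hellinger functional inequality that bounds the one-step Fisher dissipation $\int_0^{\eta_k}I_H(\pi_s\|\pi_s')\,ds$ from below by the right amount --- morally a Poincar\'e-type estimate $I_H\gtrsim\lambda D_H$ along the smoothing semigroup, calibrated so the harvested rate matches the $\tfrac{\beta L^2}{8}$ increase rate --- and this is exactly where the hypothesis $\eta_k\le\tfrac{\ln 2}{\beta L^2}$ enters: it guarantees $e^{-\beta L^2\eta_k}\ge\tfrac12$, i.e.\ that the perturbation separating $\pi_s$ from $\pi_s'$ stays small relative to the injected noise over the whole step, so the (essentially linearized) inequality is valid and the per-step contraction factor stays bounded away from degeneracy.

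I expect this functional inequality to be the main obstacle. The densities $\pi_s,\pi_s'$ are non-convex SGLD iterates --- precisely the multimodal ``fence-sitting'' distributions that defeat naive stability analyses --- so a crude spectral-gap bound is either exponentially small or brings in the dimension or the elapsed time $T_k$, any of which would destroy the clean bound. The delicate point is to exploit that a fresh Gaussian convolution is applied at \emph{every} step and that $\pi_s'$ is only an $O(1/n)$ perturbation of $\pi_s$, so the relevant one-step dissipation is dimension-free and of the right order precisely in the regime $\eta_k\beta L^2\le\ln 2$; getting this constant sharp --- so that the dissipation beats, rather than merely ties up to a factor $n$, the per-step increase --- is what upgrades the $O(1/\sqrt n)$ succinct bound to the $O(1/n)$ rate. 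If a self-contained inequality proves elusive, the fallback is to carry a coupled recursion that tracks $D_H(p_k\|p_k')$ together with its Fisher information along the trajectory.
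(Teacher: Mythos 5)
There is a genuine gap, and it lies exactly where you flag it: the ``alignment'' step. Your plan is to offset the $\frac{1}{n}\cdot\frac{\beta L^2\eta_k}{8}$ kick from the rare $i_k=i_*$ rounds by harvesting the Fisher-type dissipation $\int\sqrt{\pi\pi'}\,\|\bm{\nabla}\log(\pi'/\pi)\|^2$ accumulated on the common-gradient rounds. But that dissipation is a functional of the \emph{current} discrepancy between $\pi_s$ and $\pi_s'$: at $k=0$ the two laws coincide ($p_0=p_0'$), so conditional on $i_1\neq i_*$ the dissipation is identically zero, and more generally it can only be of the order of the distance already present. An inequality of the Poincar\'e type $I_H\gtrsim \lambda D_H$ points the wrong way for your purpose: it yields multiplicative contraction of the existing Hellinger distance, not cancellation of a fresh additive increment of fixed size $\beta L^2\eta_k/(8n)$. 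So no calibration of such an inequality can turn the per-step increase $O(\eta_k/n)$ into $O(\eta_k/n^2)$; the recursion you write would at best stabilize $\bar\delta_k$ at a level dictated by the kick, which is precisely the $O(1/\sqrt{n})$ succinct bound you are trying to beat.

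The paper's mechanism is different and avoids this obstacle entirely: it does \emph{not} condition on whether the differing example is used. Instead it folds the Bernoulli choice ($X=1$ with probability $1/n$) into a single interpolating process $\bm{\theta}_t=\bm{w}_k-\eta_k\bm{\nabla}f_{j_k}(\bm{w}_k)-Xt(\bm{\nabla}f_{i_*}(\bm{w}_k)-\bm{\nabla}f_{j_k}(\bm{w}_k))+\mathcal{N}(0,\tfrac{2t}{\beta}I_d)$ and its mimicking SDE, whose drift $\hat{\bm g}_t(\bm w)=\mathbb{E}[X(\bm{\nabla}f_{i_*}-\bm{\nabla}f_{j_k})\,|\,\bm{\theta}_t=\bm w]$ carries the factor $X$ inside the conditional expectation. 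Writing $\pi_t=\frac1n u_t+\frac{n-1}{n}v_t$ (conditional densities given $X=1$ and $X=0$), one gets the pointwise bound $\|\hat{\bm g}_t(\bm w)\|\le \frac{L\,u_t(\bm w)}{n\,\pi_t(\bm w)}$, so the $1/n$ appears \emph{squared} in the weighted quadratic term $\int\sqrt{\pi\pi'}\|\hat{\bm g}-\hat{\bm g}'\|^2=O(L^2/n^2)$, provided the density-ratio moment $\int u_t^4/v_t^3$ stays $O(1)$ within one step. That moment control is the paper's ``8 lemma'': a differential inequality $\frac{d}{dt}\ln\int u_t^4/v_t^3\le 3\beta L^2$ with $u_0=v_0$, giving $\int u_t^4/v_t^3\le 8$ precisely when $\eta_k\le\frac{\ln 2}{\beta L^2}$ --- this, not any linearized spectral-gap statement, is the actual role of the step-size hypothesis. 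With the $O(L^2/n^2)$ drift-difference bound in hand, the same Hellinger derivative computation as in the ideal case gives a per-step increase $O(\beta L^2\eta_k/n^2)$ unconditionally (no dissipation bookkeeping needed), and telescoping yields the lemma. You would need to replace your cancellation scheme with an argument of this kind (or something equivalent that makes the $1/n$ enter quadratically) for the proof to go through.
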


\begin{proof}[Proof (Sketch)]
It is easy to see the $k$-th update of SGLD is equivalent to the following step:
\begin{align*}
\bm{w}_{k+1}=\bm{w}_k-(1-X)\eta_k\bm{\nabla} f(\bm{w}_k;z_{j_k})-X \eta_k \bm{\nabla}f(\bm{w}_k;z_{i_*})+\mathcal{N}(0,\frac{2\eta_k}{\beta}I_d),\\
\quad\quad\quad\quad \bm{w}_k\sim p_k, \quad j_k\sim\mathcal{U}(\{1,2,\cdots,n\}\setminus \{i_*\})
\end{align*}
where $\bm{w}_k,j_k,X$ are independent and $\mathcal{P}(X=1)=\frac{1}{n}, \mathcal{P}(X=0)=\frac{n-1}{n}$.

Now, we consider a family of random variables $\bm{\theta}_t~(0\le t\le \eta_k)$ defined by
\begin{equation}
	\bm{\theta}_t=\bm{w}_k-\eta_k\bm{\nabla} f(\bm{w}_k;z_{j_k})-X t (\bm{\nabla}f(\bm{w}_k;z_{i_*})-\bm{\nabla} f(\bm{w}_k;z_{j_k}))+\mathcal{N}(0,\frac{2t}{\beta}I_d)
\end{equation}

Denote the p.d.f of $\bm{\theta}_t$ as $\pi(\x, t)$. Similarly, we also define $\bm{\theta}_t', \pi'(\x,t)$. Then one can check $\pi(\x, t)$ satisfies the following PDE with $\hat{\bm{g}}(\bm{w})=\mathbb{E}_{X,\bm{w}_k,j_k}[X(\bm{\nabla}f(\bm{w}_k;z_{i_*})-\bm{\nabla} f(\bm{w}_k;z_{j_k}))|\bm{\theta}_{t}=\bm{w}]$:
\begin{equation}
	\frac{\partial \pi}{\partial t}= \frac{1}{ \beta}\triangle \pi+\bm{\nabla}\cdot(\pi \hat{\bm{g}})
\end{equation}

Similarly, we also have $\hat{\bm{g}}'$ for $\pi'$.

Although $\hat{\bm{g}}-\hat{\bm{g}}'$ is not necessarily pointwise bounded by $O(\frac{1}{n})$ as in the case of Langevin MC and continuous Langevin dynamics, we can prove a bound of order $O(\frac{1}{n})$ w.r.t to weighted average:
\begin{equation}
	\int \sqrt{\pi \pi'}\|\hat{\bm{g}}-\hat{\bm{g}}'\|^2\le \frac{4\sqrt{2}L^2}{(n-1)^2}
\end{equation}

Then as in previous analysis, we compute the time derivative of squared Hellinger distance:
\begin{align*}
\frac{d}{dt}D_H(\pi_t||\pi_t')&=-\frac{1}{4}\int_{\mathbb{R}^d}\sqrt{\pi \pi'}\left(\frac{1}{\beta}\|\bm{\nabla}\log\frac{\pi'}{\pi}\|^2+\bm{\nabla}\log\frac{\pi}{\pi'}\cdot(\hat{\bm{g}}_t-\hat{\bm{g}}'_t)\right)dw\\
&\le \frac{\beta}{8}\int \sqrt{\pi \pi'}\|\hat{\bm{g}}-\hat{\bm{g}}'\|^2\\
&<\frac{\beta L^2}{n^2}
\end{align*}

So we have
\begin{equation}
	D_H(p_{k+1}||p_{k+1}')= D_H(\pi_{\eta_k}||\pi_{\eta_k}')\le D_H(\pi_0||\pi_0')+\frac{\beta L^2}{n^2}\eta_k\le D_H(p_k||p_k')+\frac{\beta L^2}{n^2}\eta_k
\end{equation}

Then one arrives at the statement by induction.
\end{proof}

Thus, by setting $k_0:= \min\{k: \eta_k\beta L^2<\ln 2\}$, and using Theorem \ref{mixing theorem} with stability bounds, we can prove the desired result:
\begin{theorem}
	\label{stability of SGLD - final resulte}
	Consider $N$ rounds of SGLD with parameters $\beta$ and $\{\eta_i\}$. Suppose the loss function $l(\bm{w};z)$ is uniformly bounded by $C$, and $\forall z,z'$, there is $\|\bm{\nabla}f(\bm{w};z)-\bm{\nabla}f(\bm{w};z')\|\le L$. By setting $k_0$ such that $\eta_{k_0} \leqslant \frac{\ln 2}{\beta L^2}$, then we have the following generalization bound in expectation
	\begin{equation}
	\mathbb{E}[\text{err}(\bm{w}_N)]\leq \frac{2k_0}{n} +\frac{\sqrt{\beta}LC}{n}\left(\sum\limits_{i=k_0+1}^N \eta_i\right)^{1/2}
	\end{equation}
\end{theorem}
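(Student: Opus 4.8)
The plan is to combine the three ingredients assembled in this section: the stability-to-Hellinger reduction \eqref{stability-to-hellinger}, the $L^1$-distance estimate of Lemma~\ref{Stability-absolute difference}, and the improved Hellinger estimate of Lemma~\ref{Hellinger - improved}, all glued together by the mixing result of Theorem~\ref{mixing theorem}. First I would identify the two bivariate functionals: take $D_A(p\|q)=\int_{\mathbb{R}^d}|p-q|\,dw$ (total variation, which is an $f$-divergence, hence non-expansive and jointly convex) and $D_B(p\|q)=\sqrt{D_H(p\|q)}$ (the Hellinger \emph{distance}; it is non-expansive and convex because the squared Hellinger distance is a jointly convex $f$-divergence and $\sqrt{\cdot}$ is concave and increasing, so composition preserves convexity along the mixture — this is the one subtlety worth a sentence of care). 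The corresponding constants from the excerpt are $A_f=\sup_z\|\ell(\cdot;z)\|_{L^\infty}=C$ for the $L^1$ bound and $B_f=2\sup_{p}\|\ell\|_{L^2(p)}\le 2C$ for the Hellinger bound, as in the chain \eqref{stability-to-hellinger}.

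Next I would verify the two "$h$" estimates demanded by Theorem~\ref{mixing theorem}. For the first $k_0$ rounds, Lemma~\ref{Stability-absolute difference} gives $D_A(p_{k_0}\|p_{k_0}')\le \frac{2k_0}{n}$, so $h_A(\eta_1,\dots,\eta_{k_0})=\frac{2k_0}{n}$; note this estimate is entirely independent of the step sizes, so it applies verbatim regardless of how large the early $\eta_k$ are. For the tail rounds $k_0+1,\dots,N$, by the choice $k_0=\min\{k:\eta_k\beta L^2<\ln 2\}$ and the (standing, implicit) assumption that step sizes are non-increasing, every $\eta_k$ with $k>k_0$ satisfies $\eta_k\le \frac{\ln 2}{\beta L^2}$, so Lemma~\ref{Hellinger - improved} applies to the SGLD process restricted to those rounds and yields $\sqrt{D_H(p_N\|p_N')}\le \frac{\sqrt\beta L}{2n}\big(\sum_{i=k_0+1}^N\eta_i\big)^{1/2}=:h_B(\eta_{k_0+1},\dots,\eta_N)$.

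Then I would simply invoke Theorem~\ref{mixing theorem} with this split at index $k_0$: it produces
\[
\epsilon_n\;\le\; C\cdot\frac{2k_0}{n}\;+\;2C\cdot\frac{\sqrt\beta L}{2n}\Big(\sum_{i=k_0+1}^N\eta_i\Big)^{1/2}\;=\;\frac{2Ck_0}{n}+\frac{\sqrt\beta LC}{n}\Big(\sum_{i=k_0+1}^N\eta_i\Big)^{1/2}.
\]
Absorbing the harmless constant $C$ into the $\frac{2k_0}{n}$ term (or keeping it, since the loss is bounded by $C$), and finally applying the generalization-in-expectation theorem ($|\mathbb{E}[\mathrm{err}(\bm{w}_N)]|\le\epsilon_n$) together with the standard symmetrization over the random draw of training data, gives exactly the claimed bound $\mathbb{E}[\mathrm{err}(\bm{w}_N)]\le \frac{2k_0}{n}+\frac{\sqrt\beta LC}{n}\big(\sum_{i=k_0+1}^N\eta_i\big)^{1/2}$.

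The main obstacle is not in the final assembly — which is essentially bookkeeping once the pieces are in place — but in making sure the hypotheses of Theorem~\ref{mixing theorem} are honestly met for the chosen functionals. Concretely: (i) checking that the Hellinger \emph{distance} (square root of the $f$-divergence), not just its square, is non-expansive under arbitrary pushforwards and convex along mixtures, so that the $h_B$ estimate survives the mixed-process argument in the proof of Theorem~\ref{mixing theorem}; and (ii) confirming that Lemma~\ref{Hellinger - improved}, whose statement ranges over "all $k$", can be applied to a process that agrees with ordinary SGLD only after round $k_0$ — this is legitimate precisely because the mixed process $\mathcal{A}''$ has identical marginals $p_l=p_l''$ for $l\le k_0$, so the Hellinger increment recursion of Lemma~\ref{Hellinger - improved} can be started at round $k_0$ with initial value $D_H(p_{k_0}\|p_{k_0}'')=0$ and run forward using only the small tail step sizes. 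I would state both of these checks explicitly rather than leave them implicit.
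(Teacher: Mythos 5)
Your proposal is correct and is essentially the paper's own (implicit) argument: split at $k_0$, control the first $k_0$ rounds by the $L^1$ estimate of Lemma~\ref{Stability-absolute difference} with constant $C$, control the remaining rounds by Lemma~\ref{Hellinger - improved} with constant $2C$ through Theorem~\ref{mixing theorem}, and conclude via generalization in expectation; the factor $C$ you carry on the $2k_0/n$ term is indeed what the argument produces (the paper's statement silently drops it), and your remark that one needs $\eta_k\le \ln 2/(\beta L^2)$ for all $k>k_0$ (i.e.\ non-increasing step sizes, given the paper's definition of $k_0$) is the right reading. One small correction: your justification of convexity of $\sqrt{D_H}$ is invalid --- a concave increasing function of a jointly convex function need not be convex, and the Hellinger distance itself is in fact not jointly convex --- but this is harmless, since the assembly only needs monotone data processing for $\sqrt{D_H}$ (immediate from $D_H$ being an $f$-divergence) together with the squared-Hellinger per-round recursion of Lemma~\ref{Hellinger - improved} restarted from $D_H(p_{k_0}\Vert p_{k_0}'')=0$, exactly as in your second check.
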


\section{PAC-Bayesian Theory for Discrete-Time SGLD}
In this section, we present a non-asymptotic analysis for the generalization performance of SGLD using PAC-Bayesian theory. As in previous section, we directly construct stochastic processes and corresponding PDEs based on the discrete-time update, instead of estimating the discretization gap. We add $\ell_2$ regularization term $R(\bm{w})=\frac{\lambda}{2}\Vert \bm{w}\Vert^2$ to the ERM objective, in order to avoid norm-dependent term in generalization bound. However, the uniform way of interpolating the stochastic process in previous section will lead the effect of regularization to conditional expectation term $\mathbb{E}\left[\bm{\theta}_0|\bm{\theta}_t=\bm{w}\right]$, which cannot cancel perfectly with $\bm{w}$. Therefore, we construct the stochastic process and corresponding PDE in a non-uniform way. We also allow prior $\gamma_k$ to vary with $k$ in a data-independent way in order to match with the regularization term. Since full gradients and stochastic gradients play the same role in the PAC-Bayesian analysis for SGLD, the choice of $\bm{g}_k$ can be arbitrary, and we use the abstract notation of $\bm{g}_k$.
\subsection{Constructing the PDEs}
The following theorem relates discrete-time updates with a PDE:
\begin{theorem}\label{new-continuation}
	Starting from $\bm{\theta}_0\sim \pi_0$, for fixed mapping $\bm{g}: \mathbb{R}^d\rightarrow \mathbb{R}^d$ and $\forall t\in[0,\tau_k]$, let
	\begin{equation}	
	\bm{\theta}_t=e^{- \lambda t}\bm{\theta}_0- \frac{1- e^{- \lambda t}}{\lambda}\bm{g}(\bm{\theta_0})+\mathcal{N}\left(0,\frac{1- e^{-2 \lambda t}}{ \beta_k' \lambda}I_d\right).
	\end{equation}
	The pdf $\pi_t$ of $\bm{\theta}_t$ satisfies the following PDE:
	\begin{equation}
	\frac{\partial\pi}{\partial t}=\frac{1}{\beta'_k}\Delta \pi+\bm{\nabla}\cdot(\lambda \pi \bm{w})+\bm{\nabla}\cdot \left(\pi \mathbb{E}\left[ \bm{g}(\bm{\theta}_0)| \bm{\theta}_t=\bm{w}\right]\right)
	\end{equation}
\end{theorem}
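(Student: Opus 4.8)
The plan is to recognize the explicitly defined random variable $\bm{\theta}_t$ as the time-$t$ marginal of a linear (Ornstein--Uhlenbeck-type) SDE in which the nonlinear drift is evaluated at the frozen initial point $\bm{\theta}_0$, and then to replace that frozen drift by its conditional expectation via the mimicking argument, exactly as in \citep{raginsky2017non}. Concretely, I would introduce
\begin{equation}
 d\bm{\theta}_t = -\bigl(\lambda\bm{\theta}_t + \bm{g}(\bm{\theta}_0)\bigr)\,dt + \sqrt{2/\beta'_k}\,d\bm{B}_t,\qquad \bm{\theta}_0\sim\pi_0,
\end{equation}
with $\bm{B}_t$ a standard Brownian motion independent of $\bm{\theta}_0$, and solve it by the integrating factor $e^{\lambda t}$: applying It\^o's formula to $e^{\lambda t}\bm{\theta}_t$ yields
\begin{equation}
 \bm{\theta}_t = e^{-\lambda t}\bm{\theta}_0 - e^{-\lambda t}\bm{g}(\bm{\theta}_0)\int_0^t e^{\lambda s}\,ds + \sqrt{2/\beta'_k}\,e^{-\lambda t}\int_0^t e^{\lambda s}\,d\bm{B}_s .
\end{equation}
The deterministic integral equals $(1-e^{-\lambda t})/\lambda$, and the It\^o integral is a centered Gaussian, independent of $\bm{\theta}_0$, with covariance $(2/\beta'_k)\,e^{-2\lambda t}\int_0^t e^{2\lambda s}\,ds\,I_d = \frac{1-e^{-2\lambda t}}{\beta'_k\lambda}I_d$ by the It\^o isometry. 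Hence the one-time marginal of this SDE at time $t$ is exactly the law of $\bm{\theta}_t$ in the statement, so it suffices to derive the PDE for this SDE.

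The process just constructed is \emph{not} Markov, since its drift retains the dependence on $\bm{\theta}_0$. To obtain a Markov diffusion with the same one-time marginals, I would invoke the mimicking-diffusion result of \citep{gyongy1986mimicking}, applied in the same way as in \citep{raginsky2017non}: setting $\bm{g}_t(\bm{w}) \triangleq \mathbb{E}[\bm{g}(\bm{\theta}_0)\mid\bm{\theta}_t=\bm{w}]$, which is well defined under the stated regularity since the diffusion coefficient here is constant and nondegenerate, there is a weak solution of
\begin{equation}
 d\bm{v}_t = -\bigl(\lambda\bm{v}_t + \bm{g}_t(\bm{v}_t)\bigr)\,dt + \sqrt{2/\beta'_k}\,d\bm{B}_t,\qquad \bm{v}_0\sim\pi_0,
\end{equation}
whose marginal law at each $t$ coincides with $\pi_t$.

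It then remains only to write the Fokker--Planck equation for this Markov SDE: with constant diffusion coefficient $\sqrt{2/\beta'_k}$ and drift $\bm{\mu}(\bm{w},t) = -\lambda\bm{w} - \bm{g}_t(\bm{w})$, the general form $\partial_t\pi = \frac{1}{\beta'_k}\Delta\pi - \bm{\nabla}\cdot(\pi\bm{\mu})$ becomes
\begin{equation}
 \frac{\partial\pi}{\partial t} = \frac{1}{\beta'_k}\Delta\pi + \bm{\nabla}\cdot(\lambda\pi\bm{w}) + \bm{\nabla}\cdot\!\bigl(\pi\,\mathbb{E}[\bm{g}(\bm{\theta}_0)\mid\bm{\theta}_t=\bm{w}]\bigr),
\end{equation}
which is precisely the claimed identity. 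The main obstacle is not the linear-SDE computation, which is routine, but the rigorous justification of the mimicking step together with the integration by parts implicit in the Fokker--Planck equation on the unbounded domain $\mathbb{R}^d$: one needs $\bm{g}_t$ to be measurable and locally bounded and the densities $\pi_t$ to have sufficiently fast-decaying tails, which is exactly where the paper's "regularity conditions on the tail behavior" enter and can be controlled as in \citep{raginsky2017non,risken1989fokker}. The only genuinely new ingredient relative to the plain Langevin construction is the exponential integrating factor producing the $e^{-\lambda t}$ and $e^{-2\lambda t}$ weights, which is what later allows the $\ell^2$ regularization to cancel the Gaussian-prior score term.
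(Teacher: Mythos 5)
Your proposal is correct, and the linear-SDE computation (integrating factor, It\^o isometry) does verify that the explicit formula in the statement is the time-$t$ marginal of the frozen-drift OU equation; but your route to the PDE differs from the paper's. The paper does not invoke the mimicking theorem here: it simply conditions on $\bm{\theta}_0=\bm{y}$, observes that the conditional law of $\bm{\theta}_t$ is exactly the Ornstein--Uhlenbeck transition kernel, which solves $\partial_t \pi = \frac{1}{\beta_k'}\Delta\pi + \bm{\nabla}\cdot(\lambda\pi\bm{w}) + \bm{\nabla}\cdot(\pi\,\bm{g}(\bm{y}))$ with the drift $\bm{g}(\bm{y})$ frozen, and then integrates this family of PDEs against $\pi_0(d\bm{y})$; linearity plus the identity $\int \pi_{t|0}(\bm{w}\mid\bm{y})\,\bm{g}(\bm{y})\,\pi_0(d\bm{y}) = \pi_t(\bm{w})\,\mathbb{E}[\bm{g}(\bm{\theta}_0)\mid\bm{\theta}_t=\bm{w}]$ immediately yields the claimed equation. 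That argument is purely a mixture-of-Gaussians computation and needs no stochastic-analysis machinery, whereas your version routes through Gy\"ongy's mimicking theorem (as the paper itself does in its Section 4 stability proofs and as in \citet{raginsky2017non}). Your approach buys an actual Markov diffusion with the conditional-expectation drift, which is more than the theorem asserts; the price is that Gy\"ongy's result as usually stated assumes bounded, measurable coefficients, and here the drift contains the unbounded linear term $-\lambda\bm{w}$, so a literal citation needs either a localization/extension argument or a replacement by the elementary tower-property computation (apply It\^o's formula to a test function $\phi$, take expectations, and use $\mathbb{E}[\bm{\nabla}\phi(\bm{\theta}_t)\cdot\bm{g}(\bm{\theta}_0)]=\mathbb{E}[\bm{\nabla}\phi(\bm{\theta}_t)\cdot\mathbb{E}[\bm{g}(\bm{\theta}_0)\mid\bm{\theta}_t]]$), which gives the weak form of the PDE directly. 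This is a technical caveat rather than a gap in the idea; the paper's direct conditioning argument sidesteps it entirely, while both proofs share the same tail-regularity caveats.
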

Given $\bm{\theta}_0=\bm{y}$ fixed, the conditional density of $\bm{\theta}_t$ is Gaussian pdf. Based on solution to Fokker-Planck Equation for Ornstein-Uhlenbeck process~\citep{risken1989fokker}, we can construct the following equation with $\pi_t$ as its solution:
\begin{equation}
\frac{\partial\pi}{\partial t}=\frac{1}{\beta}\Delta \pi+\bm{\nabla}\cdot(\lambda \pi \bm{w})+\bm{\nabla}\cdot \left(\pi\bm{g}(\bm{y})\right)
\end{equation}
 We then integrate by $\pi_0$ and get the result.

The gradient update in Theorem~\ref{new-continuation} can be related to standard SGLD step by the following transformation of parameters:
\begin{equation}
	\begin{cases}
	\tau_k=&-\frac{1}{\lambda}\ln(1- \eta_k\lambda)\\
	\beta'_k=&\left(1-\frac{\lambda\eta_k}{2}\right)\beta
	\end{cases}
\end{equation}
Using this change of parameters, conditioned on $i_k$, and set $\bm{g}(\bm{w})=\bm{\nabla} f_{i_k}(\bm{w})$, the final distribution $\pi_{\tau_k}$ in Theorem~\ref{new-continuation} is exactly the same with output distribution of SGLD update 
\begin{equation}
	\bm{w}_{k+1}=\bm{w}_k-\eta_k\bm{\nabla} f_{i_k}(\bm{w})+\sqrt{\frac{2\eta_k}{\beta}} \mathcal{N}(0,I_d)
\end{equation}
In Section 3.2, we requires the regularization parameter $\lambda$ to be exactly equal to $\frac{1}{\beta\sigma_0^2}$. However, in the parameter distribution, $\beta_k'$ can vary according to $\eta_k$, making it impossible to fit with fixed parameter $\lambda$. In order to handle this technical issue, we allow the prior distribution to change in a data-independent way during iterations, and let prior at $k$-th round be $\gamma_k$. To exactly cancel out the difference induced by mismatch between regularization parameter and $\beta_k'$, we construct a continuous time prior $\tilde{\gamma}$ satisfying the following PDE:
\begin{equation}
	\frac{\partial\tilde{\gamma}}{\partial t}=\frac{1}{\beta'_k}\Delta \tilde{\gamma}+\bm{\nabla}\cdot(\lambda \tilde{\gamma} \bm{w}),\quad t\in[0,\tau_k]
\end{equation}
It is easy to prove by induction that $\tilde{\gamma}$ is isotropic Gaussian. Let $\tilde{\gamma}_t=\mathcal{N}(0,\tilde{\sigma}_t^2 I_d)$, we have:
\begin{equation}
\tilde{\sigma}_t^2=
	\begin{cases}
	e^{-2\lambda t}\tilde{\sigma}_0^2+\frac{1-e^{-2\lambda t}}{\beta_k' \lambda},&\lambda>0\\
	\tilde{\sigma}_0^2+\frac{t}{\beta_k'},&\lambda=0
	\end{cases}
\end{equation}
Based on the solution above, we have a series of priors $\gamma_k=\mathcal{N}(0,\sigma_k^2 I_d)$, with $\sigma_k^2$ defined via iterative procedure defined above. Putting them together, we are ready to cancel out the $\bm{w}$ term in upper bound for KL divergence.

\subsection{Estimating the KL Divergence}
In this section, we present an upper bound on the KL divergence $D_{KL}(p_k|| \gamma_k)$ based on the interpolation in previous section. We first give the following estimate for one-step SGLD update. In the following, we denote $\mathbb{E}\left[ \bm{\nabla} f_{i_k}(\bm{\theta}_0)| \bm{\theta}_t=\bm{w}\right]$ by $\bm{h}_t(w)$ for convenience.
\begin{lemma}\label{one-step-kl}
	Conditioned on choice of stochastic gradient operator $\bm{g}_k(\cdot)$, consider an SGLD update for regularized ERM with transformed parameters $(\tau_k,\beta_k')$, and let prior $\tilde{\sigma}_t$ be defined above. We have the following inequality:
	\begin{equation}
		D_{KL}\left(p_{k+1}|_{i_k}\Big|\Big|\gamma_{k+1}\right)\leq e^{-\frac{\tau_k}{2b_k}}D_{KL}\left(p_{k}\big|\big|\gamma_{k}\right)+\frac{\beta_k'\tau_k}{2}\mathbb{E} \Vert \bm{g}_k(\bm{w}_k)\Vert^2
	\end{equation}
	where $b_k=\max\left(\tilde{\sigma}_{k-1}^2\beta_k', \frac{1}{\lambda}\right)$ for $\lambda>0$, and $b_k=\tilde{\sigma}_{k-1}^2\beta_k'+\tau_k$ for $\lambda=0$.
\end{lemma}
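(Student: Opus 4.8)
The plan is to run the finite-horizon interpolation of Theorem~\ref{new-continuation} and carry out, along it, the same energy computation as in the continuous-time bound of Proposition~\ref{ideal-pac-bayes-2}, but now against the \emph{moving} Gaussian prior $\tilde\gamma_t$. Concretely: conditioned on $i_k$ (equivalently on the stochastic gradient operator $\bm{g}_k$), take $\bm{g}=\bm{g}_k$ and the transformed parameters $(\tau_k,\beta_k')$, so that the process $\bm{\theta}_t$ has density $\pi_t$ with $\pi_0=p_k$ and $\pi_{\tau_k}=p_{k+1}|_{i_k}$, and it solves $\partial_t\pi=\tfrac{1}{\beta_k'}\Delta\pi+\bm{\nabla}\cdot(\lambda\pi\bm{w})+\bm{\nabla}\cdot(\pi\bm{h}_t)$ with $\bm{h}_t(\bm{w})=\mathbb{E}[\bm{g}(\bm{\theta}_0)\mid\bm{\theta}_t=\bm{w}]$; simultaneously run the prior PDE $\partial_t\tilde\gamma=\tfrac{1}{\beta_k'}\Delta\tilde\gamma+\bm{\nabla}\cdot(\lambda\tilde\gamma\bm{w})$ with $\tilde\gamma_0=\gamma_k=\mathcal{N}(0,\tilde\sigma_{k-1}^2 I_d)$ and $\tilde\gamma_{\tau_k}=\gamma_{k+1}$, which stays isotropic Gaussian with the explicit variance $\tilde\sigma_t^2$ recalled above. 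The target is then $D_{KL}(\pi_{\tau_k}\|\tilde\gamma_{\tau_k})$.

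The core step is to differentiate $D_{KL}(\pi_t\|\tilde\gamma_t)$ in $t$. Because the reference measure moves, $\frac{d}{dt}D_{KL}(\pi_t\|\tilde\gamma_t)=\int\partial_t\pi_t\log\frac{\pi_t}{\tilde\gamma_t}\,dw-\int\pi_t\,\tilde\gamma_t^{-1}\partial_t\tilde\gamma_t\,dw$ (the ``$+1$'' term integrates to $\tfrac{d}{dt}\!\int\pi_t=0$). Substituting the two Fokker--Planck equations and integrating by parts (tail-decay conditions as in~\citep{risken1989fokker}), the confining drift $\lambda\bm{w}$ enters with opposite signs in the $\pi_t$-part and the $\tilde\gamma_t$-part and cancels exactly, leaving only $-\tfrac{1}{\beta_k'}I(\pi_t\|\tilde\gamma_t)$ and a cross term $-\int\pi_t\,\bm{h}_t\cdot\bm{\nabla}\log\frac{\pi_t}{\tilde\gamma_t}\,dw$. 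Applying Young's inequality to the cross term with weight $1/\beta_k'$ absorbs half of the Fisher information and produces the residual $\tfrac{\beta_k'}{2}\mathbb{E}_{\pi_t}\lVert\bm{h}_t\rVert^2$, and the log-Sobolev inequality for the Gaussian $\tilde\gamma_t=\mathcal{N}(0,\tilde\sigma_t^2 I_d)$, namely $D_{KL}(\pi_t\|\tilde\gamma_t)\le\tilde\sigma_t^2\,I(\pi_t\|\tilde\gamma_t)$, converts the Fisher term into a contraction on the KL itself:
\begin{equation*}
\frac{d}{dt}D_{KL}(\pi_t\|\tilde\gamma_t)\;\le\;-\frac{1}{2\beta_k'\tilde\sigma_t^2}\,D_{KL}(\pi_t\|\tilde\gamma_t)\;+\;\frac{\beta_k'}{2}\,\mathbb{E}_{\pi_t}\lVert\bm{h}_t\rVert^2 .
\end{equation*}

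It remains to replace the time-varying rate by the constant $1/(2b_k)$ and integrate. For $\lambda>0$ the explicit formula writes $\tilde\sigma_t^2$ as a convex combination of $\tilde\sigma_0^2$ and the Ornstein--Uhlenbeck stationary variance $1/(\beta_k'\lambda)$, so $\beta_k'\tilde\sigma_t^2\le\max(\beta_k'\tilde\sigma_{k-1}^2,1/\lambda)=b_k$ for all $t\in[0,\tau_k]$; for $\lambda=0$, $\beta_k'\tilde\sigma_t^2\le\beta_k'\tilde\sigma_{k-1}^2+\tau_k=b_k$. Hence the differential inequality becomes $\frac{d}{dt}D_{KL}\le-\tfrac{1}{2b_k}D_{KL}+\tfrac{\beta_k'}{2}\mathbb{E}_{\pi_t}\lVert\bm{h}_t\rVert^2$, and Gr\"onwall's inequality with $D_{KL}(\pi_0\|\tilde\gamma_0)=D_{KL}(p_k\|\gamma_k)$ gives $D_{KL}(p_{k+1}|_{i_k}\|\gamma_{k+1})\le e^{-\tau_k/(2b_k)}D_{KL}(p_k\|\gamma_k)+\tfrac{\beta_k'}{2}\int_0^{\tau_k}e^{-(\tau_k-t)/(2b_k)}\mathbb{E}_{\pi_t}\lVert\bm{h}_t\rVert^2\,dt$. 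Bounding $e^{-(\tau_k-t)/(2b_k)}\le1$ and using conditional Jensen, $\lVert\bm{h}_t(\bm{w})\rVert^2=\lVert\mathbb{E}[\bm{g}(\bm{\theta}_0)\mid\bm{\theta}_t=\bm{w}]\rVert^2\le\mathbb{E}[\lVert\bm{g}(\bm{\theta}_0)\rVert^2\mid\bm{\theta}_t=\bm{w}]$, yields $\mathbb{E}_{\pi_t}\lVert\bm{h}_t\rVert^2\le\mathbb{E}\lVert\bm{g}_k(\bm{w}_k)\rVert^2$ for every $t$, and integrating in $t$ produces the $\tfrac{\beta_k'\tau_k}{2}\mathbb{E}\lVert\bm{g}_k(\bm{w}_k)\rVert^2$ term, as claimed.

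The step I expect to be the main obstacle is the bookkeeping around the moving prior: one must keep the extra $-\int\pi_t\tilde\gamma_t^{-1}\partial_t\tilde\gamma_t\,dw$ term, verify that the $\lambda\bm{w}$ drifts cancel exactly (this is the discrete analogue of the tuning $\lambda=1/(\beta\sigma_0^2)$ in Section~3.2, and is what removes all $\bm{w}$-norm dependence from the bound), and confirm that the integrations by parts are legitimate given the Gaussian-type tails of $\pi_t$ and $\tilde\gamma_t$. Once those are in place, the remaining ingredients --- Young's inequality with the right weight, the log-Sobolev constant $\tilde\sigma_t^2$ of a Gaussian, the elementary estimate $\beta_k'\tilde\sigma_t^2\le b_k$, Gr\"onwall, and conditional Jensen --- are routine.
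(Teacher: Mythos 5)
Your proposal is correct and follows essentially the same route as the paper's proof: differentiating $D_{KL}(\pi_t\|\tilde\gamma_t)$ with the moving-prior correction, cancelling the $\lambda\bm{w}$ drifts, applying Young's inequality with weight $\beta_k'$ and the Gaussian log-Sobolev inequality with the worst-case variance bound $\beta_k'\tilde\sigma_t^2\le b_k$, then Gr\"onwall and conditional Jensen to reach $\frac{\beta_k'\tau_k}{2}\mathbb{E}\Vert\bm{g}_k(\bm{w}_k)\Vert^2$. The only cosmetic difference is that you invoke conditional Jensen where the paper writes out the equivalent Cauchy--Schwarz computation with the joint density of $(\bm{\theta}_0,\bm{\theta}_t)$.
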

\begin{proof}
We take derivative of KL divergence between time-varying posterior and time-varying prior.
\begin{equation}
	\begin{split}
	\frac{d}{dt}D_{KL}(\pi_t||\tilde{\gamma}_t)=&\int_{\mathbb{R}^d} \frac{\partial \pi}{\partial t}(\log \pi+1-\log \tilde{\gamma})dw-\int_{\mathbb{R}^d} \frac{\pi}{\tilde{\gamma}}\frac{\partial\tilde{\gamma}}{\partial t}dw \\
	=&\int_{\mathbb{R}^d} \pi\langle \bm{h}_t(\bm{w})+\lambda\bm{w} +\frac{1}{\beta_k'}\bm{\nabla} \log \pi, \bm{\nabla} \log \pi-\bm{\nabla} \log \tilde{\gamma}\rangle dw\\
	&-\int_{\mathbb{R}^d} \pi\langle \lambda\bm{w} +\frac{1}{\beta_k'}\bm{\nabla} \log \tilde{\gamma}, \bm{\nabla} \log \pi-\bm{\nabla} \log \tilde{\gamma}\rangle dw\\
	\leq &-\left(\frac{1}{\beta_k'}-\frac{1}{2C}\right)\int_{\mathbb{R}^d} \pi\Vert \bm{\nabla} \log \pi-\bm{\nabla}\log\tilde{\gamma}\Vert^2dw+\frac{C}{2}\int_{\mathbb{R}^d} \pi \Vert \bm{h}_t\Vert^2dw
	\end{split}
\end{equation}
As in the ideal case, we choose $C=\beta_k'$ and use logarithmic Sobolev inequality for the first term. The variance parameter in the inequality can vary through time. Fortunately, since $\tau_k$ is typically small, we can use worst-case upper bounds for this parameter, which is easy to obtain as $\tilde{\sigma}_t^2$ is monotonic in both cases.
\begin{equation}
\tilde{\sigma}_t^2\leq
	\begin{cases}
		\tilde{\sigma}_0^2+\frac{\tau_k}{\beta_k'},&\lambda=0\\
		\max\left(\tilde{\sigma}_0^2, \frac{1}{\beta_k'\lambda}\right),&\lambda>0
	\end{cases}
\end{equation}
Using the ODE approach in the analysis for ideal case, we can obtain an upper bound for KL divergence after gradient update.
\begin{equation}
		D_{KL}\left(p_{k+1}|_{i_k}\Big|\Big|\gamma_{k+1}\right)\leq e^{-\frac{\tau_k}{2b_k}}D_{KL}\left(p_{k}\big|\big|\gamma_{k}\right)+\frac{\beta_k'\tau_k}{2}\int_{0}^{\tau_k} \int_{\mathbb{R}^d} \pi_t \Vert \bm{h}_t(\bm{w})\Vert^2dwdt
	\end{equation}
For the last integral, we have:
\begin{equation}
	\begin{split}
	\int_{\mathbb{R}^d} \pi_t \Vert \bm{h}_t(\bm{w})\Vert^2dw=&\int_{\mathbb{R}^d} p(\bm{\theta}_t=\bm{w}) \left\Vert \int_{\mathbb{R}^d}\frac{p(\bm{\theta}_t=\bm{w},\bm{\theta}_0=\bm{y})}{p(\bm{\theta}_t=\bm{w})}\bm{g}_k(\bm{y})dy\right\Vert^2dw\\
	\leq& \int_{\mathbb{R}^d} \frac{1}{p(\bm{\theta}_t=\bm{w})} \left(\int_{\mathbb{R}^d}p(\bm{\theta}_t=\bm{w},\bm{\theta}_0=\bm{y})dy\right)\left(\int_{\mathbb{R}^d}p(\bm{\theta}_t=\bm{w},\bm{\theta}_0=\bm{y})\Vert\bm{g}_k(\bm{y})\Vert^2dy\right)dw\\
	=&\mathbb{E} \Vert \bm{g}_k(\bm{w}_k)\Vert^2
	\end{split}
\end{equation}
\end{proof}
Using Lemma~\ref{one-step-kl} iteratively, we can obtain KL divergence upper bounds for the whole SGLD algorithm. Our analysis is divided into 3 cases based on choice of regularization parameter $\lambda$. In the following we always make a mild technical assumption that $\eta_k\lambda<0.5,\forall k$. This makes sure that the transformed parameters are at the same order with original ones, namely, $\frac{3}{4}\beta_k\leq\beta_k'\leq \beta_k$ and $\eta_k\leq\tau_k\leq 2\eta_k$.

\vspace{0.2cm}
\noindent\textbf{Case I: $\lambda=0$.}
\vspace{0.2cm}

In this case, the variance of each prior is $\sigma_k^2=\sigma_{k-1}^2+\frac{\tau_k}{\beta}=\sigma_0+\frac{1}{\beta}\sum_{j=1}^k \tau_j$. So we have $b_k=\sigma_0^2\beta+\sum_{j=1}^k \tau_j\leq \sigma_0^2\beta+2\sum_{j=1}^k \eta_j$. By iteratively using Lemma~\ref{one-step-kl}, we get
\begin{equation}
	D_{KL}(p_N||\gamma_N)\leq \beta\sum_{k=1}^N \eta_k\exp\left( -\sum_{j=k+1}^N\frac{\eta_j}{2\sigma_0^2\beta+4\sum_{l=1}^j \eta_l}\right)\mathbb{E}\left[\Vert\bm{g}_k(\bm{w}_k)\Vert^2\right]
\end{equation}
It is easy to obtain a rough upper bound for above estimate: $
	D_{KL}(p_N||\gamma_N)\leq \beta\sum_{k=1}^N \eta_k\mathbb{E}\left[\Vert\bm{\nabla}f_{i_k}(\bm{w}_k)\Vert^2\right]$, which recovers the bound induced by succinct stability-based analysis, with uniform Lipschitz constant replaced by gradient norm along optimization trajectory.

The multiplicative factor can also make the bound significantly smaller. For example, by choosing $\eta_k=ck^{-\alpha}$ with $\alpha\in [0,1]$, the exponential factor becomes approximately $O\left(\left(k/N\right)^{\frac{1-\alpha}{4}}\right)$ for $k\geq (\beta\sigma_0^2)^{\frac{1}{1-\alpha}}$, which means a polynomially decaying effect for contribution from earlier rounds.

\vspace{0.2cm}
\noindent\textbf{Case II: $0<\lambda\leq \frac{1}{\beta\sigma_0^2}$.}
\vspace{0.2cm}

In this case, we can prove by induction that $\forall k,\sigma_k^2\leq \frac{1}{\lambda\beta}$. So we have $b_k=\frac{1}{\lambda}$ Using Lemma~\ref{one-step-kl} iteratively, we have the following upper bound for KL divergence:
\begin{equation}
	D_{KL}(p_N||\gamma_N)\leq \beta\sum_{k=1}^N \eta_k e^{-\frac{\lambda}{2}(T_N-T_k)}\mathbb{E}\left[\Vert\bm{g}_k(\bm{w}_k)\Vert^2\right]
\end{equation}
where $T_{k}=\sum_{j=1}^k \eta_j$.

\vspace{0.2cm}
\noindent\textbf{Case III: $\lambda> \frac{1}{\beta\sigma_0^2}$.}
\vspace{0.2cm}

In this case, we can prove by induction that $\sigma_k^2\leq e^{-2 \lambda T_k}\sigma_0^2+\frac{4(1-e^{-2\lambda T_k})}{3 \beta \lambda}$. And it is easy to see that $b_k\leq \sigma_{k-1}^2\beta$. For simplicity, we divide the procedure into two parts:
\begin{itemize}
	\item For $T_k\leq \frac{1}{2 \lambda}\ln (\frac{3}{2}\sigma_0^2\beta \lambda)$, we have $\sigma_k^2\leq \frac{4}{3}\sigma_0^2$, and $b_k\leq \frac{4}{3}\sigma_0^2\beta$.
	\item For $T_k> \frac{1}{2 \lambda}\ln (\frac{3}{2}\sigma_0^2\beta \lambda)$, we have $\sigma_k^2\leq \frac{2}{\beta \lambda}$, and $b_k\leq \frac{2}{\lambda}$.
\end{itemize}
Let $k_1\triangleq \min\{k:T_k> \frac{1}{2 \lambda}\ln (\frac{3}{2}\sigma_0^2\beta \lambda)\}$. We can obtain the KL divergence bound by treating two parts differently.
\begin{equation}
	D_{KL}(p_N||\gamma_N)\leq \beta\sum_{k=1}^{k_1} \eta_k e^{-\frac{\lambda}{4}(T_N-T_{k_1})-\frac{3}{8 \beta \sigma_0^2}(T_{k_1}-T_k)}\mathbb{E}\left[\Vert\bm{g}_k(\bm{w}_k)\Vert^2\right]+\beta\sum_{k=k_1+1}^{N} \eta_k e^{-\frac{\lambda}{4}(T_N-T_k)}\mathbb{E}\left[\Vert\bm{g}_k(\bm{w}_k)\Vert^2\right]
\end{equation}
In this case, the contribution of each round will first decay with a slower rate ($\frac{3}{8 \beta \sigma_0^2}$ on the exponent). As variance for each prior becomes smaller along iterations, faster rate of decay with $\frac{\lambda}{4}$ on the exponent will be achieved.

Putting them together, we get the final PAC-Bayesian results:
\begin{theorem}\label{pac-bayes-final}
	Assuming that for $\sigma_k$ defined above, loss function $\ell(w;x)$ is $s_k$-subGaussian with respect to distribution $\mathcal{N}(0,\sigma_k^2I_d)\times \mathcal{D}$. Assume that $f_i(w)$ is uniformly $L$-Lipschitz with respect to $w$. Given algorithmic parameters $N,\{\eta_k\},\beta,\sigma_0,\lambda$ fixed, the following inequalities uniformly holds for SGLD with probability $1-\delta$: (with respect to random draw of training data)
	\begin{equation}
			\mathrm{err}(\bm{w})\leq s_N\left(\frac{\beta}{n}\sum_{k=1}^{N} \eta_k e^{-R_{k,N}}\mathbb{E}\left[\Vert\bm{g}_k(\bm{w}_k)\Vert^2\right]+\frac{\log N/\delta+\log\log NL}{n}\right)^{\frac{1}{2}}
	\end{equation}
	where the decaying factor $R_{k,N}$ is defined as follows:
	\begin{itemize}
		\item If $\lambda=0$, $R_{k,N}=\sum_{j=k+1}^N\frac{\eta_j}{2\sigma_0^2\beta+4T_j}$.
		\item If $0<\lambda\leq \frac{1}{\beta\sigma_0^2}$, $R_{k,N}=\frac{\lambda}{2}(T_N-T_k)$.
		\item If $\lambda> \frac{1}{\beta\sigma_0^2}$,
		$R_{k,N}=
			\begin{cases}
				\frac{\lambda}{4}(T_N-T_{k_1})+\frac{3}{8 \beta \sigma_0^2}(T_{k_1}-T_k),&k<k_1\\
				\frac{\lambda}{4}(T_N-T_{k}),&k\geq k_1
			\end{cases}$
	\end{itemize}
\end{theorem}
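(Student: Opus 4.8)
The plan is to obtain Theorem~\ref{pac-bayes-final} by feeding an iterated version of the one‑step estimate in Lemma~\ref{one-step-kl} into the improved sub‑Gaussian PAC‑Bayesian bound (the refinement of~\citet{germain2016pac} proved above). Take the data‑dependent posterior to be the law $p_k$ of the $k$‑th SGLD iterate and the data‑independent prior to be $\gamma_k=\mathcal{N}(0,\sigma_k^2I_d)$, the $k$‑th member of the deterministic prior sequence constructed in the preceding subsection. Since $\mathrm{err}(\bm{w}_k)=\mathbb{E}_{p_k}\big[\mathbb{E}_z\ell(\bm{w};z)-\hat{\mathbb{E}}_n\ell(\bm{w};z)\big]$ is exactly the PAC‑Bayes left‑hand side, the theorem yields $\mathrm{err}(\bm{w}_k)\le s_k\big(O(1)\cdot n^{-1}\big(D_{KL}(p_k\|\gamma_k)\vee1+\log\tfrac1\delta+\log\log M_k\big)\big)^{1/2}$ for any deterministic $M_k\ge D_{KL}(p_k\|\gamma_k)$. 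The whole task thus reduces to (i) unrolling Lemma~\ref{one-step-kl} into the claimed closed form for $D_{KL}(p_N\|\gamma_N)$, and (ii) supplying a crude deterministic $M_k$ plus a union bound over $k\le N$ that generates the $\log\log(NL)$ and $\log(N/\delta)$ terms. Because full and stochastic gradients enter identically, everything below is carried out with the abstract operator $\bm{g}_k$.

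For (i), I would first lift the conditional estimate of Lemma~\ref{one-step-kl} to an unconditional one: since $p_{k+1}=\mathbb{E}_{i_k}[\,p_{k+1}|_{i_k}\,]$, joint convexity of the KL divergence gives $D_{KL}(p_{k+1}\|\gamma_{k+1})\le e^{-\tau_k/(2b_k)}D_{KL}(p_k\|\gamma_k)+\tfrac{\beta_k'\tau_k}{2}\mathbb{E}\|\bm{g}_k(\bm{w}_k)\|^2$, the expectation now over all randomness through round $k$. Unrolling this linear recursion from the base case $D_{KL}(p_0\|\gamma_0)=0$ (valid since $\gamma_0=\mathcal{N}(0,\sigma_0^2I_d)$ is the initialization law) gives $D_{KL}(p_N\|\gamma_N)\le\sum_{k=1}^N\big(\prod_{j=k+1}^N e^{-\tau_j/(2b_j)}\big)\tfrac{\beta_k'\tau_k}{2}\mathbb{E}\|\bm{g}_k(\bm{w}_k)\|^2$. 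Using the change of variables $\tau_k=-\lambda^{-1}\ln(1-\eta_k\lambda)$, $\beta_k'=(1-\tfrac{\lambda\eta_k}{2})\beta$ under the standing assumption $\eta_k\lambda<\tfrac12$, one has $\eta_k\le\tau_k\le2\eta_k$ and $\beta_k'\le\beta$, so the per‑step coefficient obeys $\tfrac{\beta_k'\tau_k}{2}\le\beta\eta_k$; this collapses the coefficient to $\beta\eta_k$ and, after dividing by $n$, accounts for the $\tfrac{\beta}{n}$ prefactor.

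The substantive work — and the step I expect to be the main obstacle — is lower‑bounding the accumulated exponent $\sum_{j=k+1}^N\tau_j/(2b_j)$ by $R_{k,N}$ in each of the three regimes, which hinges on controlling the prior variances $\sigma_k^2$ (hence $b_k$) by induction along the iterative definition of $\gamma_k$. When $\lambda=0$, the induction gives $\sigma_k^2=\sigma_0^2+\beta^{-1}\sum_{j\le k}\tau_j$, so $b_k\le\sigma_0^2\beta+2T_k$ and $\tau_j/(2b_j)\ge\eta_j/(2\sigma_0^2\beta+4T_j)$, whose partial sums are exactly the stated $R_{k,N}$. When $0<\lambda\le(\beta\sigma_0^2)^{-1}$, one proves $\sigma_k^2\le(\lambda\beta)^{-1}$ for all $k$, whence $b_k=1/\lambda$ and $\tau_j/(2b_j)\ge\tfrac{\lambda}{2}\eta_j$, summing to $\tfrac{\lambda}{2}(T_N-T_k)$. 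The delicate case is $\lambda>(\beta\sigma_0^2)^{-1}$: here one proves $\sigma_k^2\le e^{-2\lambda T_k}\sigma_0^2+\tfrac{4}{3\beta\lambda}(1-e^{-2\lambda T_k})$ by induction, splits the trajectory at $k_1=\min\{k:T_k>\tfrac1{2\lambda}\ln(\tfrac32\sigma_0^2\beta\lambda)\}$ into an early phase with $b_k\le\tfrac43\sigma_0^2\beta$ — giving the slower exponent rate $\tfrac{3}{8\beta\sigma_0^2}$ — and a late phase with $b_k\le2/\lambda$ — giving rate $\tfrac{\lambda}{4}$ — and assembles the two geometric contributions (adjusting the split point and absolute constants as needed) into the two‑line formula for $R_{k,N}$. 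Keeping all three inductions and the regime bookkeeping consistent with the displayed constants is where essentially all of the care is required.

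Once the KL bound $D_{KL}(p_k\|\gamma_k)\le\beta\sum_{j=1}^k\eta_j e^{-R_{j,k}}\mathbb{E}\|\bm{g}_j(\bm{w}_j)\|^2$ is in hand, part (ii) is routine. Uniform $L$‑Lipschitzness of $f_i$ bounds $\mathbb{E}\|\bm{g}_j(\bm{w}_j)\|^2\le L^2$, so $D_{KL}(p_k\|\gamma_k)\le\beta L^2T_k\le\beta L^2T_N=:M$, a deterministic quantity polynomial in $N$, $L$ and the fixed parameters (using $T_N<N/(2\lambda)$, or $T_N\le N\max_k\eta_k$ when $\lambda=0$); hence $\log\log M=O(\log\log(NL))$. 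Plugging $(p_k,\gamma_k,s_k,M)$ into the PAC‑Bayesian theorem at confidence level $\delta/N$ and taking a union bound over $k=1,\dots,N$ gives, simultaneously for all iterates, the bound with $\log(N/\delta)$ in place of $\log(1/\delta)$; specializing to $k=N$ and absorbing the $\vee1$ and the $O(1)$ constants into the prefactor produces the displayed inequality of Theorem~\ref{pac-bayes-final}.
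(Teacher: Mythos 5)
Your proposal is correct and follows essentially the same route as the paper: iterate Lemma~\ref{one-step-kl}, control $b_k$ via the three-regime induction on the prior variances $\sigma_k^2$ (with the same split point $k_1$ and constants), and plug the resulting KL bound into the improved sub-Gaussian PAC-Bayes theorem with a crude Lipschitz-based $M$ and a union bound yielding the $\log(N/\delta)$ and $\log\log(NL)$ terms. The only difference is that you make explicit two steps the paper leaves implicit — lifting the conditional one-step estimate to an unconditional one by joint convexity of KL, and the union bound over rounds — which is a welcome clarification rather than a deviation.
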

Though having a slower $O(1/\sqrt{n})$ rate compared with stability-based bounds, Theorem~\ref{pac-bayes-final} have several advantages which could be helpful for large model classes such as deep learning:
\begin{itemize}
	\item The uniform Lipschitz constant is replaced with norms of actual gradients $\mathbb{E}\Vert\bm{g}_k(\bm{w}_k)\Vert^2$ along optimization trajectory (the expectation is taken only with the randomized algorithm but not with data). The bound has almost no dependence on uniform Lipschitz constant at all. As $L$ usually depends on range of data and even parameters in multi-layer models, it can be large. However, the gradient themselves should not be large, or the optimization trajectories will be unreliable.
	\item The time-decaying factor $e^{-\frac{\lambda}{2}(T_n-T_k)}$ eliminates effect of earlier gradients, which could be much larger than the last few ones. Furthermore, when $\ell^2$ regularization is imposed on a Lipschitz function, the bound should be finite when $T\rightarrow \infty$, as SGLD will not go too far. This phenomenon is properly captured by last two cases in Theorem~\ref{pac-bayes-final}.
\end{itemize}

\section{Conclusion}
In this paper, we study the problem of non-convex (regularized) ERM with Stochastic Gradient Langevin Dynamics, from the perspective of statistical learning theory. Algorithm-dependent generalization bounds are established using uniform stability and PAC-Bayesian theory, respectively. For stability-based results, we get a generalization error bound of $O\left(\frac{k_0+L\sqrt{\beta\sum \eta_i}}{n}\right)$, where $k_0$ is the smallest index $k$ with $\eta_k\beta L^2\leq \ln 2$. This bound attains $O(1/n)$ fast rate and only depends on Lipschitz constant $L$ and aggregated step sizes. For PAC-Bayesian theory with $\frac{\lambda}{2}\Vert w\Vert^2$ regularization, our generalization bound appends a time-decaying factor $R_{k,N}=\frac{\lambda}{2}\sum_{j=k+1}^N \eta_j$ to contribution of each step, and get a generalization bound of $O\left(\sqrt{\frac{\sum \eta_ke^{-R_{k,N}}\mathbb{E}\Vert\bm{g}_k\Vert^2}{n}}\right)$. In addition to time-decaying effect, this bound also depends only on expected norm of gradient taken in optimization trajectory, instead of uniform Lipschitz constant. The bound has no explicit dependence on dimension or norms. This is the first algorithm-dependent generalization bound for non-convex ERM with polynomial dependence on aggregated step sizes and smoothness properties of objective function. Our theoretical results provide potential explanations for generalization performance of deep learning, and emphasizes the merits of Gaussian noise for non-convex learning problems.
\section*{Acknowledgement}
The authors would like to thank Zhou Lu, Feicheng Wang and Xiang Wang for helpful discussions.
\bibliographystyle{chicago}
\bibliography{reference}

\setcounter{section}{0}
\appendix
\renewcommand{\appendixname}{Appendix~\Alph{section}}
\section{Appendix}
\label{appendix}

\subsection{Omitted Proofs in Section 2}
\textbf{Proof of theorem 4}
\begin{proof}
	We use the Donsker-Varadhan change of measure inequality: for any pair of distributions $\mathcal{P}$ and $\mathcal{Q}$ and functional $\phi$, we have
\begin{equation}
	\mathbb{E}_{\mathcal{Q}} (\phi(f))\leq D_{KL}(\mathcal{Q}||\mathcal{P})+\ln\mathbb{E}_{\mathcal{P}} \left(e^{\phi(f)}\right)
\end{equation}
We choose $\phi(f)$ in the form of $\phi(w)=\lambda\left(\mathbb{E}f(w;x)-\hat{\mathbb{E}}_n f(w;x)\right)$, while the values of $\lambda$ will be determined later.

For a finite set $\Lambda\subseteq \mathbb{R}^+$, Markov inequality and union bound guarantee the following with probability at least $1-\delta$ (with respect to randomness of randomly drawn examples):
\begin{equation}
\mathbb{E}_{\mathcal{P}}\left(e^{\lambda\left(\mathbb{E}f(w;x)-\hat{\mathbb{E}}_n f(w;x)\right)}\right)\leq \frac{|\Lambda|}{\delta}\mathbb{E}_{S}\mathbb{E}_{\mathcal{P}}\left(e^{\lambda\left(\mathbb{E}f(w;x)-\hat{\mathbb{E}}_n f(w;x)\right)}\right),\quad \forall \lambda \in \Lambda
\end{equation}
It is easy to see that:
\begin{equation}
\mathbb{E}_{S}\left(e^{\lambda\left(\mathbb{E}f(w;x)-\hat{\mathbb{E}}_n f(w;x)\right)}\right)=\mathbb{E}_{S}\left(e^{\mathbb{E}_{S'}\lambda\left(\hat{\mathbb{E}}'_n f(w;x)-\hat{\mathbb{E}}_nf(w;x)\right)}\right)\leq \mathbb{E}_{S,S'}\left(e^{\lambda\left(\hat{\mathbb{E}}'_n f(w;x)-\hat{\mathbb{E}}_nf(w;x)\right)}\right)
\end{equation}
Given $\lambda$ fixed, we can expand the right hand side based on independence, and control them using the subGaussian property.
\begin{equation}
\mathbb{E}_{S,S',\mathcal{P}}\left(e^{\lambda\left(\hat{\mathbb{E}}'_n f(w;x)-\hat{\mathbb{E}}_nf(w;x)\right)}\right)=\prod_{i=1}^n \mathbb{E}\left(e^{\frac{\lambda}{n}\left( f(w;x_i')-f(w;x_i)\right)}\right)\leq e^{\frac{\lambda^2s^2}{n}}
\end{equation}
Putting everything together, and take $\Lambda=\left\{\frac{1}{s}\sqrt{n\left(2^i+\log\frac{1}{\delta}+\log \log M\right)}\right\}_{i=1}^{\lceil\log M\rceil}$, we have the following uniformly with probability $1-\delta$:
\begin{equation}
\mathbb{E}_{\mathcal{Q}}\left(\mathbb{E}_{\mathcal{D}} f(w;x)- \hat{\mathbb{E}}_n f(w;x)\right)\leq \frac{1}{\lambda}D_{KL}(\mathcal{Q}||\mathcal{P})+\log \frac{|\Lambda|}{\delta}+\frac{\lambda s^2}{n},\quad \forall \lambda\in\Lambda,\mathcal{Q}\in \Xi
\end{equation}
Choosing the index $i$ such that $2^i\leq D_{KL}(\mathcal{Q}||\mathcal{P})<2^{i+1}$, and we get the result.
\end{proof}

\subsection{Omitted Proofs in Section 4}
\textbf{Proof of Theorem 5}
\begin{proof}
	Here we give bound to uniform stability of full gradient SGLD by estimating squared Hellinger distance.

We shall assume $\|\nabla f_i\|\le L$ (which can actually be relaxed to $\|\nabla (f_i-f_j)\|\le 2L$). 

Suppose at step $k$, the starting parameters are $W_{k-1}$ and $W'_{k-1}$ resp. The ending parameters are given by
\begin{equation}
	\bm{w}_{k+1}=\bm{w}_{k}-\frac{\eta_k}{n}\sum\limits_{i=1}^n\bm{\nabla} f_i(\bm{w}_{k})+\sqrt{\frac{2\eta_k}{\beta}}\bm{B}_k
\end{equation}
\begin{equation}
	\bm{w}_{k+1}'=\bm{w}_{k}'- \frac{\eta}{n}\left(\bm{\nabla} f_{i_*}'(\bm{w}'_{k})+\sum\limits_{i=1,i\neq i_*}^n\bm{\nabla} f_i(\bm{w}_{k}')\right)+\sqrt{\frac{2\eta_k}{\beta}}\bm{B}_k'
\end{equation}

where $\bm{B}_k, \bm{B}_k'\sim \mathcal{N}(0, I_d)$.

We consider a family of random variable $\bm{\theta}_t,\bm{\theta}_t'(0\le t\le \eta_k)$ defined by
\begin{equation}
	\bm{\theta}_t=\bm{w}_{k}- \frac{\eta}{n}\sum\limits_{i=1}^n\bm{\nabla} f_i(\bm{w}_{k})+\sqrt{\frac{2t}{\beta}}\bm{B}_k
\end{equation}
\begin{equation}
	\bm{\theta}_t'=\bm{w}_{k}'- \frac{\eta}{n}\sum\limits_{i=1}^n\bm{\nabla} f_i(\bm{w}_{k}')- \frac{t}{n}\Big(\bm{\nabla} f_{i_*}'(\bm{w}_k')-\bm{\nabla}f_{i_*}(\bm{w}_k')\Big)+\sqrt{\frac{2t}{\beta}}\bm{B}_k'
\end{equation}
Till now, we only consider the one-time distribution of $\bm{\theta}_t,\bm{\theta}_t'$, and their dependence on $\bm{w}_k,\bm{w}_k'$, without taking the inter-dependence of whole process into consideration, so we use a simple way of expanding the Gaussian noise. In the actual construction of the SDE, it will be expanded via Brownian motion.

Let the pdf of $\bm{\theta}_t, \bm{\theta}_t'$ be $\pi_t, \pi_t'$. We can see that
\begin{itemize}
	\item $\bm{\theta}_0=\bm{w}_{k}- \frac{\eta}{n}\sum\limits_{i=1}^n\bm{\nabla} f_i(\bm{w}_{k}),\bm{\theta}_0'=\bm{w}_{k}'- \frac{\eta}{n}\sum\limits_{i=1}^n\bm{\nabla} f_i(\bm{w}_{k}')$, so that
	\begin{equation}
		D_H(\pi_0||\pi_0')\le D_H(p_k||p_k')
	\end{equation}
\item the explicit formulae for $\pi_t$ and $\pi_t'$ are given by
\begin{equation}
	\pi_t(\bm{w})=\mathbb{E}_{\bm{w}_k}\left(\frac{\beta}{4 \pi t}\right)^{d/2} \exp\Big(-\frac{\beta}{4t}\Vert\bm{w}-\bm{w}_k+\frac{\eta}{n}\sum\limits_{i=1}^n\bm{\nabla} f_i(\bm{w}_{k})\Vert^2\Big)
\end{equation}
and
\begin{equation}
	\pi_t'(\bm{w})=\mathbb{E}_{\bm{w}_k'}\left(\frac{\beta}{4 \pi t}\right)^{d/2} \exp\Big(-\frac{\beta}{4t}\Vert\bm{w}-\bm{w}_k'+\frac{\eta}{n}\sum\limits_{i=1,i\neq i_*}^n\bm{\nabla} f_i(\bm{w}_{k}')+ \frac{t}{n}\bm{\nabla} f_{i_*}'(\bm{w}_k')\Vert^2\Big)
\end{equation}

Although formidable at first glance, $\pi_t$ and $\pi_t'$ are nothing but superposition of Gaussian density functions w.r.t $\bm{w}$.

Define $\bm{g}_t(\bm{w})$ to be $\bm{0}$ and define $\bm{g}_t'(\bm{w})$ by 
\begin{equation}
\begin{split}
	&\mathbb{E}_{\bm{w}_k'}[\frac{1}{n}\left(\bm{\nabla} f_{i_*}'(\bm{w}_k')-\bm{\nabla}f_{i_*}(\bm{w}_k')\right)|\bm{\theta}_t=\bm{w}]\\
	=& \frac{1}{n\pi_t(\bm{w})}\mathbb{E}_{\bm{w}_k'}\left(\bm{\nabla} f_{i_*}'(\bm{w}_k')-\bm{\nabla}f_{i_*}(\bm{w}_k')\right)\left(\frac{\beta}{4 \pi t}\right)^{d/2} e^{-\frac{\beta}{4t}\Vert\bm{w}-\bm{w}_k'+\frac{\eta}{n}\sum\limits_{i=1,i\neq i_*}^n\bm{\nabla} f_i'(\bm{w}_{k}')+ \frac{t}{n}\bm{\nabla} f_{i_*}(\bm{w}_k')\Vert^2}
\end{split}
\end{equation}

Then by taking derivatives w.r.t to $\bm{w}$ and $t$, we can obtain the following equations, which has the same one-time marginal distribution as $\bm{\theta}_t$ and $\bm{\theta}_t'$ (though they are not the same process):
\begin{equation}
	\frac{\partial \pi_t}{\partial t}=\frac{1}{ \beta}\Delta \pi_t+\bm{\nabla}\cdot\left(\pi_t \bm{g}_t\right)
\end{equation}
\begin{equation}
	\frac{\partial \pi_t'}{\partial t}=\frac{1}{ \beta}\Delta \pi_t'+\bm{\nabla}\cdot\left(\pi_t'\bm{g}_t'\right)
\end{equation}
\end{itemize}

From definition and the assumption $\forall z,z', \|\nabla f(\bm{w};z)-\nabla f(\bm{w};z')\|\le L$, we have
\begin{equation}
	\forall \bm{w},\|\bm{g}_t(\bm{w})-\bm{g}_t'(\bm{w})\|\le \frac{L}{n}
\end{equation}

\begin{align*}
\frac{d}{dt}D_{H}(\pi_t||\pi'_t)&=-\frac{1}{2}\int \frac{1}{ \beta}\sqrt{\pi \pi'}\|\bm{\nabla}\log \frac{\pi}{\pi'}\|^2 + \sqrt{\pi \pi'}\bm{\nabla}\log \frac{\pi}{\pi'}\cdot(\bm{g}_t-\bm{g}_t')\\
&\le \frac{\beta}{8} \int \sqrt{\pi \pi'}\|\bm{g}_t-\bm{g}_t'\|^2\\
&=\frac{\beta L^2}{8n^2}
\end{align*}

As a result, we can estimate the change of squared Hellinger distance in this step:
\begin{align*}
D_H(\pi_{k+1}||\pi_{k+1}')&=D_H(\pi_{\eta_k}||\pi_{\eta_k}')\\
&=D_H(\pi_{0}||\pi_{0}')+\int_0^{\eta_k} \frac{d}{dt}D_{H}(\pi_t||\pi'_t)dt\\
&\le D_H(p_{0}||p_0')+\int_0^{\eta_k} \frac{\beta L^2}{8n^2}\\
&= D_H(p_{0}||p_0')+ \frac{\beta L^2}{8n^2}\eta_k\\
\end{align*}

Then by induction we shall have a final bound for $D_{KL}(\pi||\pi')$ of the form $\frac{\beta L^2}{8n^2}\sum\limits_{k=1}^N \eta_k $.

Then the bound for uniform stability is given by
\begin{equation}
	\epsilon_n\leq O\left(\frac{L\sqrt{\beta\sum_{k=1}^N \eta_k}}{n}\right)
\end{equation}
\end{proof}

\noindent \textbf{Proof of Lemma 1}
\begin{proof}
	For $k=0$, both $p_k$ and $p_k'$ are equal to the prior distribution so that
\begin{equation}
	\int |p_0-p_0'|=0
\end{equation}

Assume the distributions before the $k$ th step is $p_k$ and $p_k'$, and denote the distribution density functions for $\bm{w}_k,\bm{w}_k'$ after $k$ steps conditioned on $i_k=i$ by $p_k^{(i)}, p_k^{(i)\prime}$ respectively, then
\begin{align*}
\int|p_{k+1}-p_{k+1}'|&=\int\left|\frac{1}{n}\sum_{i=1}^n p_k^{(i)}-\frac{1}{n}\sum_{i=1}^n p_k^{(i)\prime}\right|\\
&\le \frac{1}{n}\sum_{i=1}^n \int\left|p_k^{(i)}- p_k^{(i)\prime}\right|\\
\end{align*}

For $i\neq i_*$, $\int|p_k^{(i)}- p_k^{(i)\prime}|\le \int |p_k- p_k'|$ since they undergo the same gradient step and gaussian smoothing.

For $i=i_*$, $\int|p_k^{(i)}- p_k^{(i)\prime}|\le 2$.

As a result, we have
\begin{equation}
	\int|p_{k+1}-p_{k+1}'|\le \int |p_k- p_k'|+ \frac{2}{n}
\end{equation}

By induction, after $k_0$ steps,
\begin{equation}
\label{largeStepSize}
	\int|p_{k_0}-p_{k_0}'|\le \frac{2k_0}{n}
\end{equation}
\end{proof}

\noindent \textbf{Proof of Lemma 2}
\begin{proof}
	Consider the following SGLD update step:
\begin{equation}
 	\bm{w}_{k+1}=\bm{w}_k-\eta_k\nabla f(\bm{w}_k;z_{i_k})+\sqrt{\frac{2\eta_k}{\beta}}\bm{B}_k,\quad \bm{w}_0\sim \mathcal{N}(0,\sigma_0^2I_d),\quad \bm{B}_k\sim \mathcal{N}(0,I_d),\quad i_k\sim\mathcal{U}\{1,2,\cdots,n\}
 \end{equation}
 where $\bm{w}_0,\bm{B}_k,i_k$ are independent. Apparently it is equivalent to the following one:
\begin{align*}
\bm{w}_{k+1}=\bm{w}_k-(1-X)\eta_k\bm{\nabla} f(\bm{w}_k;z_{j_k})-X \eta_k \bm{\nabla}f(\bm{w}_k;z_{i_*})+\sqrt{\frac{2\eta_k}{\beta}}\bm{B}_k,\\\quad \bm{w}_0\sim \mathcal{N}(0,\sigma_0^2I_d),\quad \bm{B}_k\sim \mathcal{N}(0,I_d), \quad j_k\sim\mathcal{U}(\{1,2,\cdots,n\}\setminus \{i_*\})
\end{align*}
where $\bm{w}_0,\bm{B}_k,i_k,X$ are independent and $\mathcal{P}(X=1)=\frac{1}{n}, \mathcal{P}(X=0)=\frac{n-1}{n}$.

As in the case of LMC, we are going to construct a pair of random variable sequences indexed by $t$, and then construct an SDE with the same one-time marginals.

We consider a family of random variables $\bm{\theta}_t~(0\le t\le \eta_k)$ defined by
\begin{equation}
	\bm{\theta}_t=\bm{w}_k-\eta_k\bm{\nabla} f(\bm{w}_k;z_{j_k})-X t (\bm{\nabla}f(\bm{w}_k;z_{i_*})-\bm{\nabla} f(\bm{w}_k;z_{j_k}))+\sqrt{\frac{2t}{\beta}}\bm{B}_k
\end{equation}

Denote pdf of $\bm{\theta}_t$ by $\pi_t$. For neighboring datasets, we also have $\bm{\theta}_t'$ and $\pi_t'$. We can see that
\begin{itemize}
	\item $\bm{\theta}_0=\bm{w}_k-\eta_k\bm{\nabla} f(\bm{w}_k;z_{j_k}),\bm{\theta}_0'=\bm{w}_k'-\eta_k\bm{\nabla} f(\bm{w}_k';z_{j_k})$, so obviously
	\begin{equation}
		D_H(\pi_0||\pi_0')\le D_H(p_k||p_k')
	\end{equation}
	\item $\bm{\theta}_{\eta_k}=\bm{w}_{k+1}$ and $\bm{\theta}_{\eta_k}'=\bm{w}_{k+1}'$
	\item For $0\le t\le \eta_k$, $\pi_t$ and $\pi_t'$ are given by
	\begin{equation}
		\pi_t(\bm{w})=\mathbb{E}_{X,j_k,\bm{w}_k}\left(\frac{\beta}{4\pi t}\right)^{d/2}\exp(-\beta\Vert\bm{w}-\bm{w}_k+\eta_k\bm{\nabla}f_{j_k}(\bm{w}_k)+Xt(\bm{\nabla} f_{i_*}(\bm{w}_k)-\bm{\nabla} f_{j_k}(\bm{w}_k))\Vert^2/(4 t))
	\end{equation}
	and
	\begin{equation}
		\pi_t'(\bm{w})=\mathbb{E}_{X,j_k,\bm{w}'_k}\left(\frac{\beta}{4\pi t}\right)^{d/2}\exp(-\beta\Vert\bm{w}-\bm{w}'_k+\eta_k\bm{\nabla}f_{j_k}(\bm{w}'_k)+Xt(\bm{\nabla} f'_{i_*}(\bm{w}'_k)-\bm{\nabla} f_{j_k}(\bm{w}'_k))\Vert^2/(4 t))
	\end{equation}
\end{itemize}

Although formidable at first glance, $\pi_t$ and $\pi_t'$ are nothing but superposition of Gaussian density functions w.r.t $\bm{w}$. Here $f_{i}(\bm{w}_k)=f(\bm{y};z_{i}),f_{i}'(\bm{y})=f(\bm{y};z'_{i})$.

Define $\hat{\bm{g}}$ by
\begin{equation}
\begin{split}
\label{define_g}
	&\mathbb{E}_{X,j_k,\bm{w}_k}[X(\bm{\nabla} f_{i_*}(\bm{w}_k)-\bm{\nabla} f_{j_k}(\bm{w}_k))|\bm{\theta}_t=\bm{w}]\\
	=&\frac{1}{\pi_t(\bm{w})}\mathbb{E}_{X,j_k,\bm{w}_k} X(\bm{\nabla} f_{i_*}(\bm{w}_k)-\bm{\nabla} f_{j_k}(\bm{w}_k)) \cdot\left(\frac{\beta}{4\pi t}\right)^{d/2}e^{-\beta\Vert\bm{w}-\bm{w}_k+\eta_k\bm{\nabla}f_{j_k}(\bm{w}_k)+Xt(\bm{\nabla} f_{i_*}(\bm{w}_k)-\bm{\nabla} f_{j_k}(\bm{w}_k))\Vert^2/(4 t)}
\end{split}
\end{equation}
and $\hat{\bm{g}}'$ by
\begin{equation}
\begin{split}
\label{define_g'}
	&\mathbb{E}_{X,j_k,\bm{w}_k'}[X(\bm{\nabla} f_{i_*}(\bm{w}_k')-\bm{\nabla} f_{i_*}(\bm{w}_k'))|\bm{\theta}_t'=\bm{w}]\\
	=&\frac{1}{\pi_t(\bm{w}')}\mathbb{E}_{X,j_k,\bm{w}_k'} X(\bm{\nabla} f'_{i_*}(\bm{w}_k')-\bm{\nabla} f_{j_k}(\bm{w}_k')) \cdot\left(\frac{\beta}{4\pi t}\right)^{d/2}e^{-\beta\Vert\bm{w}-\bm{w}_k'+\eta_k\bm{\nabla}f_{j_k}(\bm{w}_k')+Xt(\bm{\nabla} f_{i_*}'(\bm{w}_k')-\bm{\nabla} f_{j_k}(\bm{w}_k'))\Vert^2/(4 t)}
\end{split}
\end{equation}

Then it can be easily verified by calculating derivatives w.r.t $\bm{w}$ and $t$ that:
\begin{equation}
		\frac{\partial \pi}{\partial t}= \frac{1}{ \beta}\triangle \pi+\bm{\nabla}\cdot(\pi \hat{\bm{g}})
	\end{equation}
	and
	\begin{equation}
		\frac{\partial \pi'}{\partial t}= \frac{1}{\beta}\triangle \pi'+\bm{\nabla}\cdot(\pi' \hat{\bm{g}}')
	\end{equation}

With the Lemma \ref{upper bound of average gradient} below and using similar analysis as before, then we compute the time derivative of squared Hellinger distance to be
\begin{align*}
\frac{d}{dt}D_H(\pi_t||\pi_t')&=-\frac{1}{4}\int_{\mathbb{R}^d}\sqrt{\pi \pi'}\left(\frac{1}{\beta}\|\bm{\nabla}\log\frac{\pi'}{\pi}\|^2+\bm{\nabla}\log\frac{\pi}{\pi'}\cdot(\hat{\bm{g}}_t-\hat{\bm{g}}'_t)\right)dw\\
&\le \frac{\beta}{8}\int \sqrt{\pi \pi'}\|\hat{\bm{g}}-\hat{\bm{g}}'\|^2\\
&<\frac{\beta L^2}{n^2}
\end{align*}

So we have
\begin{equation}
	D_H(p_{k+1}||p_{k+1}')= D_H(\pi_{\eta_k}||\pi_{\eta_k}')\le D_H(\pi_0||\pi_0')+\frac{\beta L^2}{n^2}\eta_k\le D_H(p_k||p_k')+\frac{\beta L^2}{n^2}\eta_k
\end{equation}

Then one arrives at the statement by induction.
\end{proof}

\begin{lemma}
\label{upper bound of average gradient}
	With the same assumptions of Lemma 2, there is 
	\begin{equation}
		\int \sqrt{\pi \pi'}\|\bm{g}_t-\bm{g}_t'\|^2\le \frac{4\sqrt{2}L}{(n-1)^2}∫
	\end{equation}
\end{lemma}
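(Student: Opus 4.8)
The plan is to exploit that $\hat{\bm{g}}_t$ and $\hat{\bm{g}}_t'$ are conditional expectations of gradient differences \emph{weighted by the Bernoulli selector} $X$ with $\mathbb{P}(X=1)=1/n$, so that each is pointwise of size $O(1/n)$, and to show that this $1/n$ survives squaring and integration against $\sqrt{\pi_t\pi_t'}$ precisely because the extra $X$-dependent drift (of norm at most $tL$) is dominated by the injected noise of scale $\sqrt{2t/\beta}$ under the small step sizes $\eta_k\le\tfrac{\ln 2}{\beta L^2}$ assumed in Lemma~\ref{Hellinger - improved}. (The stated bound should read $\tfrac{4\sqrt{2}\,L^2}{(n-1)^2}$, which is exactly what is invoked inside the proof of Lemma~\ref{Hellinger - improved}.)

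First I would write $\hat{\bm{g}}_t(\bm{w})=\mathbb{E}[\,X(\bm{\nabla} f(\bm{w}_k;z_{i_*})-\bm{\nabla} f(\bm{w}_k;z_{j_k}))\mid\bm{\theta}_t=\bm{w}\,]$ and, since the gradient difference has norm at most $L$, bound $\|\hat{\bm{g}}_t(\bm{w})\|\le L\,\rho_t(\bm{w})$ with $\rho_t(\bm{w}):=\mathbb{P}(X=1\mid\bm{\theta}_t=\bm{w})$; the analogous bound $\|\hat{\bm{g}}_t'(\bm{w})\|\le L\,\rho_t'(\bm{w})$ holds for the neighboring process. Combining $\sqrt{\pi_t\pi_t'}\le\tfrac12(\pi_t+\pi_t')$ with $\|\hat{\bm{g}}_t-\hat{\bm{g}}_t'\|^2\le 2\|\hat{\bm{g}}_t\|^2+2\|\hat{\bm{g}}_t'\|^2$ reduces the claim to showing each of the four integrals $\int\pi_t\rho_t^2$, $\int\pi_t'\rho_t^2$, $\int\pi_t(\rho_t')^2$, $\int\pi_t'(\rho_t')^2$ is at most $\tfrac{\sqrt{2}}{(n-1)^2}$; multiplying through then gives precisely the constant $4\sqrt{2}$ (with the $L^2$ coming from $\|\hat{\bm{g}}_t\|\le L\rho_t$).

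The conceptual core is the matched integral $\int\pi_t\rho_t^2$. Write $\pi_t=\tfrac{n-1}{n}\pi_t^{(0)}+\tfrac1n\pi_t^{(1)}$, where $\pi_t^{(0)},\pi_t^{(1)}$ are the laws of $\bm{\theta}_t$ conditioned on $X=0,1$; Bayes' rule gives $\rho_t=\tfrac1n\pi_t^{(1)}/\pi_t$, hence $\int\pi_t\rho_t^2=\tfrac1{n^2}\int(\pi_t^{(1)})^2/\pi_t\le\tfrac1{n(n-1)}\int(\pi_t^{(1)})^2/\pi_t^{(0)}=\tfrac1{n(n-1)}\bigl(1+\chi^2(\pi_t^{(1)}\,\|\,\pi_t^{(0)})\bigr)$. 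Now $\pi_t^{(1)}$ and $\pi_t^{(0)}$ are mixtures, over the \emph{same} law of $(\bm{w}_k,j_k)$, of Gaussians with common covariance $\tfrac{2t}{\beta}I$ whose means differ by $t(\bm{\nabla} f(\bm{w}_k;z_{i_*})-\bm{\nabla} f(\bm{w}_k;z_{j_k}))$, of norm at most $tL$; joint convexity of $\chi^2$ together with $\chi^2(\mathcal{N}(\mu_1,\Sigma)\,\|\,\mathcal{N}(\mu_0,\Sigma))=e^{(\mu_1-\mu_0)^\top\Sigma^{-1}(\mu_1-\mu_0)}-1$ yields $\chi^2(\pi_t^{(1)}\,\|\,\pi_t^{(0)})\le e^{\beta tL^2/2}-1\le\sqrt{2}-1$ for $t\le\eta_k\le\tfrac{\ln 2}{\beta L^2}$. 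Thus $\int\pi_t\rho_t^2\le\tfrac{\sqrt2}{n(n-1)}\le\tfrac{\sqrt2}{(n-1)^2}$, and the same argument handles $\int\pi_t'(\rho_t')^2$.

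I expect the main obstacle to be the two cross integrals $\int\pi_t(\rho_t')^2$ and $\int\pi_t'\rho_t^2$, in which the posterior is attached to one process but the measure to the other. I would control these by observing that $\pi_t^{(0)}$ and $\pi_t^{(0)\prime}$ (likewise $\pi_t$ and $\pi_t'$, and $\pi_t^{(1)}$ and $\pi_t^{(1)\prime}$) differ only through the starting laws $p_k$ versus $p_k'$ — the shared gradients $\bm{\nabla} f(\cdot;z_j)$, $j\neq i_*$, being identical — so by non-expansiveness of squared Hellinger distance $D_H(\pi_t^{(0)}\,\|\,\pi_t^{(0)\prime})\le\delta_k$ and $D_H(\pi_t\,\|\,\pi_t')\le\delta_k$; one then transfers $\int\pi_t(\rho_t')^2$ to $\int\pi_t'(\rho_t')^2$ up to a total-variation remainder controlled by $\delta_k$ and finishes as in the matched case. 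An alternative that avoids the density mismatch entirely is to estimate $\hat{\bm{g}}_t-\hat{\bm{g}}_t'$ at the level of the unnormalized numerators $\pi_t\hat{\bm{g}}_t$ and $\pi_t'\hat{\bm{g}}_t'$ — which differ only through the swap $z_{i_*}\leftrightarrow z_{i_*}'$ (an $O(1/n)$ perturbation) and through $p_k\leftrightarrow p_k'$ — and then divide by the common lower bound $\tfrac{n-1}{n}\min(\pi_t^{(0)},\pi_t^{(0)\prime})$, using the same Gaussian $\chi^2$ estimate to keep the resulting ratio integrable.
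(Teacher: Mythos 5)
Your reduction of the matched terms is correct and even attractive: writing $\|\hat{\bm{g}}_t\|\le L\rho_t$ with $\rho_t=\mathbb{P}(X=1\mid\bm{\theta}_t=\bm{w})=\frac{u_t}{n\pi_t}$, passing to $\chi^2(\pi_t^{(1)}\Vert\pi_t^{(0)})$ by joint convexity over the common law of $(\bm{w}_k,j_k)$, and using the exact Gaussian formula under $\beta t L^2\le\ln 2$ gives $\int\pi_t\rho_t^2\le\frac{\sqrt2}{n(n-1)}$ cleanly, and you are right that the stated constant should carry $L^2$, not $L$. The genuine gap is in the cross integrals $\int\pi_t(\rho_t')^2$ and $\int\pi_t'\rho_t^2$, which your decomposition $\sqrt{\pi\pi'}\le\frac12(\pi+\pi')$ creates and which neither of your proposed fixes controls at the required order. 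The transfer "up to a total-variation remainder controlled by $\delta_k$" costs $\int|\pi_t-\pi_t'|(\rho_t')^2\le\mathrm{TV}(\pi_t,\pi_t')\lesssim\sqrt{\delta_k}$, and by the very induction this lemma is feeding, $\delta_k=O(\beta L^2 T_k/n^2)$, so the remainder is $O(L\sqrt{\beta T_k}/n)$ — an $O(1/n)$ term, not $O(1/n^2)$. Plugged back into $\frac{d}{dt}D_H\le\frac{\beta}{8}\int\sqrt{\pi\pi'}\|\hat{\bm{g}}-\hat{\bm{g}}'\|^2$ this destroys exactly the $1/n$ improvement the lemma exists to deliver. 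Your second alternative (comparing the unnormalized numerators $\pi_t\hat{\bm{g}}_t$ and $\pi_t'\hat{\bm{g}}_t'$ and dividing by $\frac{n-1}{n}\min(\pi_t^{(0)},\pi_t^{(0)\prime})$) is not a proof as stated: $p_k$ and $p_k'$ admit no pointwise comparison, only the integrated Hellinger control above, and the $\min$ in the denominator can vanish where the other density does not.

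The paper sidesteps the cross terms entirely: after $\|\hat{\bm{g}}-\hat{\bm{g}}'\|^2\le2\|\hat{\bm{g}}\|^2+2\|\hat{\bm{g}}'\|^2$ it applies Cauchy--Schwarz in the form $\int\sqrt{\pi\pi'}\|\hat{\bm{g}}\|^2\le(\int\pi\|\hat{\bm{g}}\|^4)^{1/2}(\int\pi')^{1/2}$, so each summand involves only one process; then $\int\pi\|\hat{\bm{g}}\|^4\le\frac{L^4}{n(n-1)^3}\int u_t^4/v_t^3$, and the ratio $\int u_t^4/v_t^3\le8$ is obtained from a Gr\"onwall-type differential inequality on the Fokker--Planck equations for $u_t$ (drift $\bm{g}_t$) and $v_t$ (pure heat equation), using $t\le\frac{\ln2}{\beta L^2}$. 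If you replace your AM--GM step by this Cauchy--Schwarz pairing, your own Gaussian-mixture convexity argument applied at fourth order — $(a,b)\mapsto a^4/b^3$ is the perspective of $x\mapsto x^4$, hence jointly convex, and for Gaussians with common covariance $\frac{2t}{\beta}I$ and mean shift of norm $\le tL$ one gets $\int\phi_1^4/\phi_0^3=e^{3\beta tL^2}\le8$ — recovers the paper's Lemma on $\int u^4/v^3$ without any PDE manipulation, which would be a legitimate and arguably more elementary variant. As submitted, however, the cross-term step fails and the claimed bound is not established.
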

\begin{proof}
	Let $u_t,u'_t$ denote the pdfs of $\theta_t, \theta_t'$ conditioned on $X=1$ respectively, and let $v_t,v'_t$ denote the pdfs of $\theta_t, \theta_t'$ conditioned on $X=0$ respectively.

Then it's easily seen from equation \ref{define_g} and equation \ref{define_g'} that
\begin{equation}
	\hat{\bm{g}}_t(\bm{w})= \frac{u_t(\bm{w})}{n\pi_t(\bm{w})} \mathbb{E}(\bm{\nabla}f_{i_*}(\bm{w}_k)-\bm{\nabla}f_{j_k}(\bm{w}_k)|\bm{\theta}_t=\bm{w})
\end{equation}
and
\begin{equation}
	\hat{\bm{g}}'_t(\bm{w})= \frac{u_t'(\bm{w})}{n\pi_t'(\bm{w})} \mathbb{E}(\bm{\nabla}f_{i_*}(\bm{w}_k')-\bm{\nabla}f'_{j_k}(\bm{w}_k')|\bm{\theta}_t'=\bm{w})
\end{equation}

So we have bounds:
\begin{equation}
	\|\hat{\bm{g}}_t(\bm{w})\|\le \frac{u_t(\bm{w})L}{n\pi_t(\bm{w})}
\end{equation}
and
\begin{equation}
	\|\hat{\bm{g}}_t'(\bm{w})\|\le \frac{u_t'(\bm{w})L}{n\pi_t'(\bm{w})}
\end{equation}
Then we have
\begin{align*}
\int_{\mathbb{R}^d} \sqrt{\pi_t \pi_t'}\|\hat{\bm{g}}_t-\hat{\bm{g}}_t\|^2&\le 2\int_ {\mathbb{R}^d}\sqrt{\pi_t \pi_t'}\|\hat{\bm{g}}\|^2 +2\int_{\mathbb{R}^d}\sqrt{\pi_t\pi_t'}\|\hat{\bm{g}}'\|^2\\
&\le 2\sqrt{\int \pi_t\|\hat{\bm{g}}\|^4\int \pi_t'}+2\sqrt{\int \pi_t'\|\hat{\bm{g}}'\|^4\int \pi_t}\\
&=2\sqrt{\int \pi_t\|\hat{\bm{g}}\|^4}+2\sqrt{\int \pi_t'\|\hat{\bm{g}}'\|^4}\\
&\le2\sqrt{\int \pi_t\left(\frac{u_tL}{n \pi_t}\right)^4}+2\sqrt{\int \pi_t'\left(\frac{u_t'L}{n \pi_t'}\right)^4}\\
&\le2L\sqrt{\int \frac{u_t^4}{n((n-1)v_t+ u_t)^3}}+2L\sqrt{\int \frac{u_t^{\prime 4}}{n((n-1)v_t'+u_t')^3}}\\
&\le \frac{2L}{n-1}\sqrt{\int \frac{u_t^4}{v_t^3}}+ \frac{2L}{n-1}\sqrt{\int \frac{u_t^{\prime 4}}{v_t^{\prime 3}}}
\end{align*}

To proceed, we shall first seek to find the PDEs satisfied by $u_t,v_t,u_t',v_t'$.

By definition, the explicit expressions for $u_t,v_t$ are
\begin{equation}
	u_t(\bm{w})=\mathbb{E}_{j_k,\bm{w}_k} \left(\frac{\beta}{4\pi t}\right)^{d/2}\exp(-\beta\Vert\bm{w}-\bm{w}_k+\eta_k\bm{\nabla}f_{j_k}(\bm{w}_k)+t(\bm{\nabla} f_{i_*}(\bm{w}_k)-\bm{\nabla} f_{j_k}(\bm{w}_k))\Vert^2/(4 t))
\end{equation}
and
\begin{equation}
	v_t(\bm{w})=\mathbb{E}_{j_k,\bm{w}_k}\left(\frac{\beta}{4\pi t}\right)^{d/2}\exp(-\beta\Vert\bm{w}-\bm{w}_k+\eta_k\bm{\nabla}f_{j_k}(\bm{w}_k)\Vert^2/(4 t))
\end{equation}

Define $\bm{g}_t(\bm{w})$ by
\begin{equation}
\begin{split}
	&\mathbb{E}_{j_k,\bm{w}_k}\Big[\bm{\nabla} f_{i_*}(\bm{w}_k)-\bm{\nabla} f_{j_k}(\bm{w}_k)\Big|X=1,\bm{\theta}_t=\bm{w}\Big]\\
	=&\frac{1}{u_t(\bm{w})}\mathbb{E}_{j_k,\bm{w}_k} \Big(\bm{\nabla} f_{i_*}(\bm{w}_k)-\bm{\nabla} f_{j_k}(\bm{w}_k)\Big)\cdot\left(\frac{\beta}{4\pi t}\right)^{d/2}e^{-\beta\Vert\bm{w}-\bm{w}_k+\eta_k\bm{\nabla}f_{j_k}(\bm{w}_k)+t(\bm{\nabla} f_{i_*}(\bm{w}_k)-\bm{\nabla} f_{j_k}(\bm{w}_k))\Vert^2/(4 t)}
\end{split}
\end{equation}

Then the following equality holds:
\begin{equation}
	\frac{\partial u_t}{\partial t}= \frac{1}{\beta}\Delta u_t+\bm{\nabla}\cdot(u \bm{g}_t)
\end{equation}

And for $v_t$, the following equality holds:
\begin{equation}
	\frac{\partial v_t}{\partial t}= \frac{1}{\beta}\Delta v_t
\end{equation}

Using the Lemma \ref{8 lemma} below, it follows that for $t\le \eta_k\le \frac{\ln 2}{\beta L^2}$
\begin{equation}
	\int\frac{u_{t}^4}{v_{t}^3}\le 8
\end{equation}

Similarly we have
\begin{equation}
	\int \frac{u_t^{\prime 4}}{v_t^{\prime 3}}\le 8
\end{equation}

As a result,
\begin{equation}
	\int \sqrt{\pi \pi'}\|\bm{g}_t-\bm{g}_t'\|^2\le \frac{4\sqrt{2}L}{(n-1)^2}∫
\end{equation}
\end{proof}

\begin{lemma}
\label{8 lemma}
	Let $u,v\in C^\infty([0,+\infty)\times\mathbb{R}^d)$ satisfying respectively:
	\begin{itemize}
		\item $\frac{\partial u}{\partial t}=	\frac{1}{ \beta}\triangle u+\bm{\nabla}\cdot( u \bm{g}_t)$
		\item$\frac{\partial v}{\partial t}=\frac{1}{ \beta}\triangle v+\bm{\nabla}\cdot( v \bm{g}_t')$
	\end{itemize}
	and $u_0=v_0$.

	Assume that $\|\bm{g}_t-\bm{g}_t'\|\le L$
	
	Then for $t\le \frac{\ln 2}{\beta L^2}$, we have
	\begin{equation}
		\int \frac{u^4_t}{v^3_t}\le 8
	\end{equation}
\end{lemma}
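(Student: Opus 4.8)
The plan is to control the functional $\Phi(t)\triangleq\int_{\mathbb{R}^d}\frac{u_t^4}{v_t^3}\,dw$ by a Gr\"onwall-type differential inequality. Writing $r_t\triangleq u_t/v_t$ for the likelihood ratio, note $\Phi(t)=\int v_t r_t^4$, and since $u_0=v_0$ we have $r_0\equiv 1$, so $\Phi(0)=1$. The goal is to prove
\[
\frac{d}{dt}\Phi(t)\le 3\beta L^2\,\Phi(t),
\]
which by Gr\"onwall gives $\Phi(t)\le e^{3\beta L^2 t}$, hence $\Phi(t)\le e^{3\ln 2}=8$ for all $t\le\frac{\ln 2}{\beta L^2}$.

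First I would differentiate under the integral sign and substitute the two PDEs, obtaining
\[
\frac{d}{dt}\Phi=\int 4\frac{u^3}{v^3}\Big(\tfrac1\beta\Delta u+\bm{\nabla}\cdot(u\bm g_t)\Big)-3\frac{u^4}{v^4}\Big(\tfrac1\beta\Delta v+\bm{\nabla}\cdot(v\bm g_t')\Big)\,dw,
\]
and then integrate by parts to move all derivatives onto the weights $u^3/v^3$ and $u^4/v^4$. The key algebraic identities are $\bm{\nabla}(u^3/v^3)=\frac{3u^2}{v^3}(\bm{\nabla}u-r\bm{\nabla}v)$ and $\bm{\nabla}(u^4/v^4)=\frac{4u^3}{v^4}(\bm{\nabla}u-r\bm{\nabla}v)$, together with $\bm{\nabla}u-r\bm{\nabla}v=v\bm{\nabla}r$ and $\bm{\nabla}u=v\bm{\nabla}r+r\bm{\nabla}v$. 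Plugging these in, the cross-terms coming from the two Laplacian contributions — both proportional to $r^3\,\bm{\nabla}r\cdot\bm{\nabla}v$ — cancel exactly, leaving a single negative ``Fisher-information-type'' term $-\frac{12}{\beta}\int r^2 v\,\|\bm{\nabla}r\|^2\,dw$, while the two drift terms combine into $-12\int r^3 v\,\bm{\nabla}r\cdot(\bm g_t-\bm g_t')\,dw$.

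Next I would bound the drift term by Young's inequality, writing each integrand factor as $12\,\big(r v^{1/2}\|\bm{\nabla}r\|\big)\big(r^2 v^{1/2}\|\bm g_t-\bm g_t'\|\big)$ and choosing the Young parameter so that the resulting $\|\bm{\nabla}r\|^2$ term is exactly $+\frac{12}{\beta}\int r^2 v\,\|\bm{\nabla}r\|^2\,dw$, which cancels the negative Laplacian term. What remains is $3\beta\int r^4 v\,\|\bm g_t-\bm g_t'\|^2\,dw$; using the hypothesis $\|\bm g_t-\bm g_t'\|\le L$ pointwise, this is at most $3\beta L^2\int r^4 v=3\beta L^2\,\Phi(t)$, which closes the differential inequality, and the conclusion follows from Gr\"onwall and the choice $t\le\ln 2/(\beta L^2)$.

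The step I expect to be the main obstacle is not the algebra but the analytic justification: differentiating $\Phi$ under the integral sign and, above all, the integration-by-parts steps require that $u_t$, $v_t$ and the relevant weights decay fast enough at infinity that no boundary terms appear and that $\Phi(t)$ stays finite — this is exactly the ``regularity conditions on the tail behavior'' deferred to~\citep{risken1989fokker}, and one should check it survives the ratio $u^4/v^3$, which could in principle blow up in regions where $v_t$ is very small. A secondary subtlety is the precise bookkeeping of the numerical constants, so that the cancellation between the negative Fisher term and the Young term is exact and the exponent comes out to precisely $3\beta L^2$, matching the stated constant $8=e^{3\ln 2}$.
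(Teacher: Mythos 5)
Your proposal is correct and follows essentially the same route as the paper's proof: differentiate $\int u_t^4/v_t^3$, substitute the two Fokker--Planck equations, integrate by parts so the cross terms from the Laplacians cancel and leave a negative Fisher-type term, absorb the drift difference via Young/Cauchy--Schwarz to get $\frac{d}{dt}\int u_t^4/v_t^3 \le 3\beta L^2 \int u_t^4/v_t^3$, and conclude by Gr\"onwall with $t\le \ln 2/(\beta L^2)$. Writing things in terms of the ratio $r=u/v$ instead of $\bm{\nabla}\log(u/v)$ is only a cosmetic difference, and your explicit note that $\Phi(0)=1$ and that the integration by parts needs tail decay is exactly what the paper leaves implicit.
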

\begin{proof}
	\begin{align*}
	\frac{d}{dt}\int_{\mathbb{R}^d} \frac{u^4_t}{v^3_t}&=\int 4\frac{\partial u}{\partial t}\frac{u^{3}}{v^{3}}-3\frac{\partial v}{\partial t} \frac{u^4}{v^4}\\
	&=\int -4(\frac{1}{\beta}\bm{\nabla} u+ u\bm{g})\cdot\bm{\nabla}\frac{u^{3}}{v^{3}}+3(\frac{1}{\beta}\bm{\nabla} v+ v \bm{g}')\cdot\bm{\nabla}\frac{u^4}{v^4}\\
	&=\int \frac{u^4}{v^{3}}\left\{-4 (\frac{1}{\beta}\bm{\nabla} \log u+ \bm{g})\cdot\bm{\nabla}\log\frac{u^{3}}{v^{3}}+3(\frac{1}{\beta}\bm{\nabla} \log v+  \bm{g}')\cdot\bm{\nabla}\log\frac{u^4}{v^4}\right\}\\
	&=\int \frac{12u^4}{v^{3}}\left\{-(\frac{1}{\beta}\bm{\nabla} \log u+ \bm{g})\cdot\bm{\nabla}\log\frac{u}{v}+(\frac{1}{\beta}\bm{\nabla} \log v+  \bm{g}')\cdot\bm{\nabla}\log\frac{u}{v}\right\}\\
	&=\int \frac{12u^4}{v^{3}}\left\{-\frac{1}{\beta}\|\bm{\nabla}\log\frac{v}{u}\|^2-(\bm{g}-\bm{g}')\cdot\bm{\nabla}\log\frac{u}{v}\right\}\\
	&\le\int \frac{3\beta u^4}{v^{3}} \|\bm{g}-\bm{g}'\|^2\\
	&\le 3 \beta L^2\int_{\mathbb{R}^d} \frac{u^4_t}{v^3_t}
	\end{align*}
	
	Then
	\begin{equation}
		\frac{d}{dt}\ln \int\frac{u^4_t}{v^3_t}\le 3 \beta L^2
	\end{equation}

	For $t\le \frac{\ln 2}{\beta L^2}$, we have
	\begin{equation}
		\ln\int \frac{u^4_t}{v^3_t}\le  \frac{\ln 2}{\beta L^2}\cdot 3 \beta L^2= 3\ln 2
	\end{equation}

	i.e.
	\begin{equation}
		\int \frac{u^4_t}{v^3_t}\le 8
	\end{equation}
\end{proof}
\end{document}